\newtheorem{procedure}{Procedure}
\newtheorem{theorem}{Theorem}
\newtheorem{claim}{Claim}
\newtheorem{definition}{Definition}
\newtheorem{lemma}{Lemma}
\newenvironment{proof}{{\em Proof:}}{\hfill{$\Box$}}
\newcommand{\map}{{\mbox{MAP}}}
\newcommand{\true}{\texttt{true }}
\newcommand{\false}{\texttt{false }}
\newcommand{\falseE}{\texttt{false}}
\newcommand{\cond}[1]{\par\noindent{\bf (#1)} }
\newcommand{\B}{{\cal B}}
\newcommand{\downt}{_{\downarrow t}}
\newcommand{\downf}{_{\downarrow f}}
\newcommand{\downa}{_{\downarrow a}}
\newcommand{\downb}{_{\downarrow b}}
\newcommand{\downna}{_{\downarrow \overline{a}}}
\newcommand{\downnb}{_{\downarrow \overline{b}}}
\newcommand{\NF}{{\mbox{NF}}}
\newcommand{\EF}{{\mbox{EF}}}
\newcommand{\rfiveremove}{{\mbox{R5}}}
\newcommand{\rsevenreplace}{{\mbox{R7-replace}}}
\newcommand{\rsevendrop}{{\mbox{R7-drop}}}
\newcommand{\reightunify}{{\mbox{R8}}}
\newcommand{\rnine}{{\mbox{R9}}}
\begin{document}

\title{First Order Decision Diagrams for Relational MDPs} 

\author{\name Chenggang Wang \email cwan@cs.tufts.edu \\
\name Saket Joshi \email sjoshi01@cs.tufts.edu \\
\name Roni Khardon \email roni@cs.tufts.edu \\
\addr  Department of Computer Science, Tufts University \\
                 161 College Avenue, Medford, MA 02155, USA
}

\maketitle

\begin{abstract} 
Markov decision processes capture sequential decision making under
uncertainty, where an agent must choose actions so as to optimize long
term reward.  The paper studies efficient reasoning mechanisms for
Relational Markov Decision Processes (RMDP) where world states have an
internal relational structure that can be naturally described in terms
of objects and relations among them.  Two contributions are presented.
First, the paper develops First Order Decision Diagrams (FODD), a new
compact representation for functions over relational structures,
together with a set of operators to combine FODDs, and novel reduction
techniques to keep the representation small.  Second, the paper shows
how FODDs can be used to develop solutions for RMDPs, where reasoning
is performed at the abstract level and the resulting optimal policy is
independent of domain size (number of objects) or instantiation.  In
particular, a variant of the value iteration algorithm is developed by
using special operations over FODDs, and the algorithm is shown to
converge to the optimal policy.
\end{abstract}

\section{Introduction}

\label{Cha:intro}
Many real-world problems can be cast as sequential decision making under uncertainty. 
Consider a simple example in a logistics domain where an agent delivers boxes. 
The agent can take three types of actions: to load a box on a truck,
to unload a box from a truck, and to drive a truck to a city.
However  the effects of actions may not be perfectly predictable.
For example its gripper may be slippery 
so load actions may not succeed, or its navigation module may not be reliable and it may
end up in a wrong location. This uncertainty compounds the already
complex problem of planning a course of action to achieve some goals
or maximize rewards. 

Markov Decision Processes (MDP)
 have become the standard model for sequential decision making 
under uncertainty \cite{BoutilierDeHa1999}.  These models also provide 
a general framework for artificial intelligence (AI) planning, 
where an agent has to achieve or maintain a well-defined goal.
MDPs model an agent interacting with the world. 
The agent
can fully observe the state of
the world and takes actions so as to change the state.
In doing that, the agent 
tries to optimize a measure of the long term reward it can obtain
 using such actions.

The classical representation and algorithms for MDPs \cite{Puterman1994}
require enumeration of the state space.
For more complex situations 
we can specify the state space in terms of a set of propositional variables 
called state attributes.
These state attributes together determine the world state. Consider a very simple
logistics problem that has only one box and one truck.
Then we can have state attributes such as 
truck in Paris (TP), box in Paris (BP), box in Boston (BB), etc. 
If we let the state space
 be represented by $n$ binary state attributes 
then the total number of states 
would be $2^n$. 
For some problems, however, the domain dynamics and resulting
solutions have a simple structure that can be described compactly
using the state attributes, and previous work 
known as the {\em propositionally factored approach}
has developed 
a suite of algorithms that take advantage of such structure and avoid 
state enumeration. 
For example, one can 
use dynamic Bayesian networks, decision trees, and algebraic decision diagrams
to concisely represent the MDP model. 
This line of work showed substantial
speedup for propositionally factored domains
\cite{BoutilierDeMo1995,BoutilierDeGo2000,HoeyStHuBo1999}. 

The logistics example presented above is very small. Any realistic
problem will have a large number of objects and corresponding
relations among them. 
Consider a problem with
four trucks, three boxes, and where the goal is to have
a box in Paris, but it does not matter which box is in Paris.
With the propositionally factored approach, 
we need to have one propositional variable
for every possible instantiation of the 
relations in the domain, e.g., box 1 in Paris, box 2 in Paris,
box 1 on truck 1, box 2 on truck 1, and so on,
and the action space expands in the same way.
The goal becomes a ground disjunction over different instances stating
``box 1 in Paris, or box 2 in Paris,
or box 3 in Paris, or box 4 in Paris''. 
Thus we get 
a very large MDP and at the same time we lose the 
structure implicit in the relations  and 
the potential benefits of this structure in terms of computation. 

This is the main motivation behind relational or first order MDPs 
(RMDP).\footnote{
\citeA{SannerBo2005} make a distinction between first order MDPs that
can utilize the full power of first order logic to describe a problem
and relational MDPs that are less expressive. We follow this in 
calling our language RMDP.
}
A first order representation of MDPs
can describe domain objects and relations among 
them, and can use quantification in specifying objectives. 
In the logistics example, we can introduce three predicates to capture the
relations among domain objects, i.e.,
$Bin (Box, City)$,
$Tin (Truck, City)$, and
$On (Box, Truck)$ with their
obvious meaning.
We have three parameterized actions, i.e.,
$load (Box, Truck)$, 
$unload (Box, Truck)$, and 
$drive (Truck, City)$.
Now domain dynamics, reward, and solutions can be described compactly
and abstractly using the relational notation.
For example,
we can define the goal using existential quantification,
i.e., $\exists b, Bin(b,Paris)$.
Using this goal one can identify an abstract policy,
which is optimal for every possible instance of the domain.
Intuitively when there are $0$ steps to go, 
the agent will be rewarded if there is any box in Paris.
When there is one step to go and there is no box in Paris yet, 
the agent can take one action to help achieve the goal.
If there is a box (say $b_1$) on a truck (say $t_1$) and the truck is in Paris, 
then the agent can execute the action $unload(b_1, t_1)$, which may make 
$Bin(b_1, Paris)$ true, thus the goal will be achieved.
When there are two steps to go,
if there is a box on a truck that is in Paris,
the agent can take the $unload$ action twice (to increase 
the probability of successful unloading of the box), or
if there is a box on a truck that is not in Paris,
the agent can first take the action $drive$ followed by $unload$.
The preferred plan will depend on the success probability 
of the different actions.
The goal of this paper is to develop efficient solutions for 
such problems using a relational approach,   
which
performs general reasoning in solving problems and does not
propositionalize the domain. 
As a result the complexity of our algorithms
does not change when the number of domain objects changes. 
Also the solutions obtained are good for any domain of any size 
(even infinite ones) simultaneously.  
Such an abstraction is not possible within the propositional approach.

Several approaches for 
solving RMDPs were developed over the last few years.
Much of this work was devoted to developing techniques to 
{\em approximate} RMDP solutions
using different representation languages and algorithms
\cite{GuestrinKoGeKa2003,FernYoGi2003,GrettonTh2004,SannerBo2005,SannerBo2006}.
For example, 
\citeA{DzeroskiDeDr01} and \citeA{Driessens2006} use reinforcement
learning techniques with relational representations. 
\citeA{FernYoGi2006} and \citeA{GrettonTh2004} use inductive 
learning methods to learn a value map or policy from solutions or 
simulations of small instances.
\citeA{SannerBo2005,SannerBo2006} develop an approach to 
approximate value iteration 
that 
does not need to propositionalize the domain.
They represent value functions
as a linear combination of first order basis functions and obtain the
weights by lifting the propositional approximate 
linear programming techniques \cite{schuurmansPa2001,GuestrinKoPaVe2003} to handle the first order case. 

There has also been work on exact solutions such as 
symbolic dynamic programming (SDP) \cite{BoutilierRePr2001},
the relational Bellman algorithm (ReBel) \cite{KerstingOtRa2004},
and first order value iteration (FOVIA) \cite{GroBmannHoSk2002,HolldoblerKaSk2006}.
There is no working
implementation of SDP because it is hard to keep the state formulas
consistent and of manageable size in the context of the situation
calculus. Compared with SDP, ReBel and FOVIA provide more practical
solutions. 
They both use restricted languages to represent RMDPs,
so that reasoning over formulas is easier to perform.
In this paper we develop a representation
that combines the strong points of these approaches.

Our work is inspired by the successful application of 
Algebraic Decision Diagrams (ADD)
\cite{Bryant1986,McMillan1993,BaharFrGaHaMaPaSo1993}
in solving propositionally factored MDPs and POMDPs 
\cite{HoeyStHuBo1999,St-AubinHoBo2000,HansenFe2000,FengHa2002}.
 The intuition behind this idea is that
the ADD representation allows information sharing, e.g., sharing 
the value of all states that belong to an ``abstract state'', so that algorithms can consider
many states together and do not need to resort to 
state enumeration. If there is sufficient regularity in the model,
ADDs can be very compact, allowing problems to be represented and
solved efficiently.
We provide a generalization of this approach by lifting ADDs 
to handle relational structure and adapting the MDP algorithms.
The main difficulty in lifting the propositional solution, 
is that in
relational domains the transition function specifies a set of
schemas for conditional probabilities. 
The 
propositional solution
uses the
concrete conditional probability to calculate the regression function. 
But this is not possible with schemas. 
One way around this problem is to first ground the domain and problem
at hand and only then perform the reasoning 
\cite<see for example>{SanghaiDoWe2005}. However this does not allow for solutions
abstracting over domains and problems.
Like SDP, ReBel, and FOVIA, our constructions do perform general
reasoning.

First order decision trees and even decision diagrams have already
been considered in the literature \cite{BlockeelDR98,GrooteTv2003}
and several semantics for such diagrams are possible. 
\citeA{BlockeelDR98} lift propositional decision trees 
to handle relational structure in the context of learning from relational datasets.
\citeA{GrooteTv2003} provide a notation for first order Binary Decision Diagrams (BDD)
that can capture formulas in Skolemized conjunctive normal form
and then provide a theorem proving algorithm based on this
representation. 
The paper investigates both approaches and identifies the approach of
\citeA{GrooteTv2003} as better suited for the operations of the
value iteration algorithm.
Therefore we adapt and extend  their approach 
to handle RMDPs. In particular, our
First Order Decision Diagrams
(FODD)
are defined by modifying
first order BDDs
to capture existential quantification as well as real-valued functions
through the use of an aggregation over different valuations for a diagram.
This allows us to capture MDP value functions using algebraic diagrams in a
natural way. 
We also provide
additional reduction transformations for algebraic 
diagrams that help keep their size small,
and allow the use of background knowledge in reductions. 
We then develop appropriate representations and algorithms
showing how value iteration  can be performed using FODDs.
At the core of this algorithm we introduce a novel diagram-based 
algorithm for goal
regression where, given a diagram representing the current value
function, each node in this diagram is replaced with a 
small diagram capturing its truth value before the action.
This offers a modular
and efficient form of regression that accounts for all 
potential effects of an action simultaneously.
We show that our version of abstract value iteration 
is correct and hence it converges to optimal value function and policy.

To summarize, the contributions of the paper are as follows.
The paper identifies the multiple path semantics \cite<extending>{GrooteTv2003}
as a useful representation for RMDPs and contrasts it
with the single path semantics of  \citeA{BlockeelDR98}.
The paper develops FODDs and algorithms to manipulate them in general and in the context of RMDPs.
The paper also develops novel weak reduction 
operations for first order decision diagrams and shows
their relevance to solving relational MDPs.
Finally the paper presents a version of 
the relational value iteration algorithm  using FODDs and shows
that it is correct and thus converges to the optimal
value function and policy. 
While relational value iteration was developed and specified in previous work
\cite{BoutilierRePr2001},
to our knowledge this is the first detailed proof of
correctness and convergence for the algorithm.

This section has briefly summarized the research background,
motivation, and our approach.
The rest of the paper is organized as follows.
Section 2 provides background on MDPs and RMDPs. 
Section 3 introduces the syntax and the semantics of First Order
Decision Diagrams (FODD), and 
Section 4 develops reduction operators for FODDs.
Sections 5 and 6 present a representation of RMDPs using FODDs,
the relational value iteration algorithm, and its proof of correctness
and convergence.
The last two sections conclude the paper with a discussion of the
results and future work.    

\section{Relational Markov Decision Processes}
\label{Sec:introRMDP}
We assume familiarity with standard notions of MDPs and value
iteration 
\cite<see for example>{Bellman1957,Puterman1994}.
In the following we introduce
some of the notions. We also introduce 
relational MDPs and discuss some of the previous work on solving them.

Markov Decision Processes (MDPs) provide a mathematical model of sequential
optimization problems with stochastic actions. A MDP can be
characterized by a state space $S$, an action space $A$, a state
transition function $Pr(s_j | s_i, a)$ denoting the probability of
transition to state $s_j$ given state $s_i$ and action $a$, and an
immediate reward function $r(s)$, specifying the immediate utility
of being in state $s$.
A solution to a MDP is
an optimal policy that maximizes expected discounted total reward as
defined by the Bellman equation: 
\[V^*(s) = max_{a \in A} [r(s) + {\gamma \sum_{s' \in S} Pr(s' | s, a)V^*(s')}]\]
where $V^*$ represents the optimal state-value function. 
The value iteration algorithm (VI) 
uses the Bellman
equation to iteratively refine an estimate of the value function:
\begin{equation}
\label{Eq:VI}
V_{n+1}(s) = max_{a \in A} [r(s) + \gamma \sum_{s' \in S} Pr(s' | s, a)V_{n}(s')]
\end{equation}
where $V_n(s)$ represents our current estimate of the value function
and $V_{n+1}(s)$ is the next estimate. 
If we initialize this process with $V_0$ as the reward function,
$V_n$ captures the optimal value function when we have $n$ steps to
go.
As discussed further below the algorithm is known to converge to the
optimal value function.

\citeA{BoutilierRePr2001} used the situation calculus to formalize
first order MDPs and a structured form of the value iteration algorithm.
One of the useful restrictions introduced in their work is that 
stochastic actions are specified 
as a  randomized choice among deterministic
alternatives. 
For example, action $unload$ in the logistics example
 can succeed or fail. Therefore there are two alternatives
for this action: $unloadS$ (unload success) and $unloadF$ (unload failure). 
The formulation and algorithms support any number of action alternatives.
The randomness in the domain is captured by a random choice specifying which
 action alternative ($unloadS$ or $unloadF$) gets executed 
when the agent attempts an action ($unload$).
The choice is determined by a state-dependent probability distribution 
characterizing the dynamics of the world.
In this way one can separate the regression over effects of action alternatives, which
is now deterministic, from the probabilistic choice of action. 
This considerably simplifies the reasoning required since there is no
need to perform probabilistic goal regression directly.
Most of the work on RMDPs has used this assumption, and we use this
assumption as well. \citeA{SannerBo2007} investigate a model going
beyond this assumption.

Thus relational MDPs are specified by the set of predicates in the
domain, the set of probabilistic actions in the domain, and the reward
function. For each probabilistic action, we specify the deterministic
action alternatives and their effects, and the probabilistic choice
among these alternatives. A relational MDP captures a family of MDPs
that is generated by choosing an instantiation of the state space.
Thus the logistics example corresponds to all possible instantiations
with 2 boxes or with 3 boxes and so on. We only get a concrete MDP by
choosing such an instantiation.\footnote{
One could define a single MDP including all possible instances at the
same time, e.g. it will include some states with
2 boxes, some states with 3 boxes and some with an infinite number of
boxes. But obviously subsets of these states form separate MDPs
that are disjoint. We thus prefer the view of a RMDP
as a family of MDPs.
} 
Yet our algorithms will attempt to
solve the entire MDP family simultaneously.

\citeA{BoutilierRePr2001} introduce the case notation 
to represent probabilities and rewards compactly.   
The expression
$t=case [\phi_1, t_1;\cdots; \phi_n, t_n]$, where $\phi_i$ is a logical formula, 
is equivalent to $(\phi_1\wedge (t= t_1))\vee \cdots \vee (\phi_n \wedge (t= t_n))$.
In other words, $t$ equals $t_i$ when $\phi_i$ is true. 
In general, the $\phi_i$'s are not constrained but some steps in the
VI algorithm require that
the $\phi_i$'s are disjoint and partition the state space.   
In this case, exactly one $\phi_i$ is true in any state.
Each $\phi_i$ denotes an abstract state 
whose member states have the same value for that probability or reward. 
For example, the reward function for the logistics domain, discussed
above and illustrated on the right side of Figure~\ref{Fig:Regressunload},
can be captured as $case[\exists b, Bin(b,Paris),10; \neg
  \exists b, Bin(b,Paris),0]$. 
We also have the following notation for operations over function
defined by case expressions.
The operators $\oplus$ and $\otimes$ are defined by taking a cross product of the partitions and
adding or multiplying the case values.
\[case[\phi_i, t_i : i \leq n]\oplus case[\psi_j, v_j : j \leq m]
=case[\phi_i \wedge \psi_j, t_i+ v_j: i \leq n, j \leq m]\]
\[case[\phi_i, t_i : i \leq n]\otimes case[\psi_j, v_j : j \leq m]
=case[\phi_i \wedge \psi_j, t_i \cdot v_j: i \leq n, j \leq m].\]

In each iteration of the VI algorithm,
the value of a
stochastic action $A(\vec{x})$ parameterized with free variables
$\vec{x}$ is determined in the following manner:
\begin{equation}
\label{Eq:CaseQfunction}
Q^{A(\vec{x})}(s) = rCase(s) \oplus 
[\gamma \otimes \oplus_j (pCase(n_j(\vec{x}), s) \otimes Regr(n_j(\vec{x}),vCase(do(n_j(\vec{x}), s))))]
\end{equation}
\noindent where $rCase(s)$ and $vCase(s)$ denote reward and value functions in
case notation,
$n_j(\vec{x})$ denotes the possible outcomes of the
action $A(\vec{x})$, and $pCase(n_j(\vec{x}),s)$ the choice probabilities
for $n_j(\vec{x})$. 
Note that we can replace a sum over possible next states $s'$ 
in the standard value iteration  
(Equation~\ref{Eq:VI})
with a finite sum over the action
alternatives $j$ (reflected in $\oplus_j$ in 
Equation~\ref{Eq:CaseQfunction}),
since different next states arise only through different action alternatives.

$Regr$, capturing goal regression, determines what states one must be in
before an action in order to reach a particular state after the action.  
Figure~\ref{Fig:Regressunload} illustrates the regression of 
$\exists b, Bin(b,Paris)$ in the reward function $R$ through 
the action alternative $unloadS(b^*,t^*)$. 
$\exists b, Bin(b,Paris)$ will be true after the action $unloadS(b^*,t^*)$
if it was true before or box $b^*$ was on truck $t^*$ and truck $t^*$ was in Paris.
Notice how the reward function $R$ partitions the state space into two regions or 
abstract states, each of which may include an infinite number of complete world states
(e.g., when we have an infinite number of domain objects). Also
notice how we get another set of abstract states after the regression step.
In this way first order regression 
ensures that we can work on abstract states and never need to
propositionalize the domain. 

\begin{figure}[tbhp]
\begin{center}
\setlength{\epsfxsize}{3.25in}
\centerline{\psfig{figure=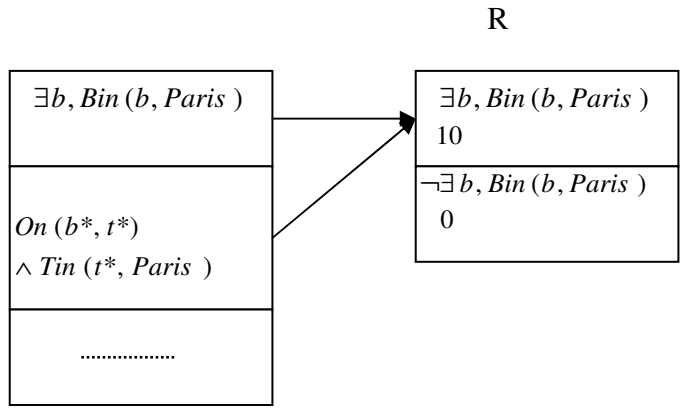}}
\caption{An example illustrating regression over the action alternative $unloadS(b^*,t^*)$.}
\label{Fig:Regressunload}
\end{center}
\vskip -0.2in
\end{figure}

After the regression, we get a parameterized $Q$-function
which accounts for all possible instances of the action.
We need to
maximize over the action parameters of the $Q$-function to
get the maximum value that could be achieved by using an instance of
this action. 
To illustrate this step,
consider the logistics example where we 
have two boxes $b_1$ and $b_2$, and $b_1$ is on truck $t_1$, which 
is in Paris (that is, $On(b_1, t_1)$ and $Tin(t_1, Paris)$), 
while $b_2$ is in Boston ($Bin(b_2, Boston)$). For the action schema $unload(b^*, t^*)$,
we can instantiate $b^*$ and $t^*$ with $b_1$ and $t_1$ respectively,
which will help us achieve the goal; or we can 
instantiate $b^*$ and $t^*$ with $b_2$ and $t_1$ respectively,
which will have no effect. Therefore we need to perform 
maximization over action parameters to get the best instance of an action.
Yet, we must perform this maximization generically, without knowledge
of the actual state.  
In SDP, this is done in several steps. First, we add 
existential quantifiers over action parameters (which leads to non
disjoint partitions).
Then we
sort the abstract states in $Q^{A(\vec{x})}$ by
the value in decreasing order and include the negated conditions for
the first $n$ abstract states in the formula for the $(n+1)^{th}$,
ensuring mutual exclusion. 
Notice how this step leads
to complex description of the resulting state partitions in SDP.
This process is performed for every action separately.
We call this step {\em object maximization}
and denote it with
$\mbox{obj-max} (Q^{A(\vec{x})})$.

Finally, to get the next value function we maximize over the 
$Q$-functions of different actions. 
These three steps provide one iteration of the VI algorithm
which repeats the update until convergence.

The solutions of ReBel \cite{KerstingOtRa2004} and 
FOVIA \cite{GroBmannHoSk2002,HolldoblerKaSk2006}
follow the same outline but use a simpler logical language
for representing RMDPs. 
An abstract state in ReBel is captured using an existentially
quantified conjunction.  
FOVIA \cite{GroBmannHoSk2002,HolldoblerKaSk2006} has a more complex
representation allowing a conjunction that must hold in a state and a
set of conjunctions that must be violated.
An important feature in ReBel is the use of
decision list \cite{Rivest1987} style
representations for value functions and policies. The decision list
gives us an implicit maximization operator since rules higher on the
list are evaluated first. As a result the object maximization step is
very simple in ReBel. Each state partition is represented implicitly by the
negation of all rules above it, and explicitly by the conjunction in the
rule. On the other hand,
regression in ReBel requires that one enumerate all possible matches 
between a subset of a conjunctive goal (or state partition) 
and action effects, and reason about each of these separately. 
So this step can potentially be improved.

In the following section we introduce a new representation --
First Order Decision Diagrams (FODD). FODDs allow for 
sharing of parts of partitions, leading to space and time saving.
More importantly the  value iteration algorithm based on FODDs has both simple
regression and simple object maximization. 

\section{First Order Decision Diagrams}

A decision diagram is a graphical representation for functions over 
propositional (Boolean) variables. The function is represented as 
a labeled rooted directed acyclic graph where each
non-leaf node is labeled with a propositional
variable and
has exactly two children.
 The outgoing edges are marked with values \true and
\falseE. Leaves are labeled with numerical values. 
Given an assignment of truth values to the propositional variables, we
can traverse the graph where in each node we follow the outgoing edge
corresponding to its truth value. This gives a mapping from any
assignment to a leaf of the diagram and in turn to its value.
If the leaves are
marked with values in $\{0,1\}$ then we can interpret the graph
as representing a Boolean function over the propositional
variables. Equivalently, the graph can be seen as representing a
logical expression which is satisfied if and only if the 1 leaf is
reached. The case with $\{0,1\}$ leaves is known as Binary Decision
Diagrams (BDDs) and the case with numerical leaves 
(or more general algebraic expressions)
is known as
Algebraic Decision Diagrams (ADDs). Decision Diagrams are particularly interesting
if we impose an order over propositional variables and require that
node labels respect this order on 
every path in the diagram; this case is known as Ordered Decision Diagrams (ODD). 
In this case every function has a unique canonical
representation that serves as a normal form for the function. 
This property   means that propositional theorem proving
is easy for ODD representations. For example, if a formula
is contradictory then this fact is evident when we represent it as a BDD, since the normal form for a 
contradiction is a single leaf valued $0$.
This property together with efficient manipulation algorithms
for ODD representations have led
 to successful applications, e.g., in
VLSI design and
verification~\cite{Bryant1992,McMillan1993,BaharFrGaHaMaPaSo1993} as
well as MDPs \cite{HoeyStHuBo1999,St-AubinHoBo2000}.
In the following we generalize this representation 
for relational problems.

\subsection{Syntax of First Order Decision Diagrams}
There are various ways to generalize ADDs to
capture relational structure. One could use closed or open formulas in the
nodes, and in the latter case we must interpret the quantification
over the variables. 
In the process of developing the ideas in this paper we have considered several possibilities
including explicit quantifiers but these did not lead to useful solutions.
We therefore focus on the following syntactic definition
which does not have any explicit quantifiers. 

For this representation, we
assume  a fixed set of predicates and constant
symbols, and an enumerable set of variables. We also allow using an
equality between any pair of terms (constants or variables).

\begin{definition}
First Order Decision Diagram
\begin{enumerate}
\item
A First Order Decision Diagram (FODD)
is a labeled rooted directed acyclic graph, where each
non-leaf node has exactly two children. 
The outgoing edges are marked with values \true and
\falseE. 
\item
Each non-leaf node is labeled with: an atom $P(t_1,\ldots,t_n)$ or an
equality $t_1=t_2$
where each $t_i$ is a variable or a constant.
\item
Leaves are labeled with numerical values. 
\end{enumerate}
\end{definition}

\begin{figure}[tbhp]
\begin{center}
\setlength{\epsfxsize}{3.25in}
\centerline{\psfig{figure=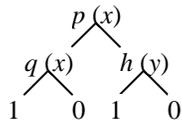}}
\caption{A simple FODD.}
\label{Fig:illsutrate-semantics}
\end{center}
\vskip -0.2in
\end{figure}

Figure~\ref{Fig:illsutrate-semantics}
shows a FODD with binary leaves. 
Left going edges represent \true branches.
To simplify 
diagrams in the paper we draw multiple copies of the leaves 0
and 1 (and occasionally other values or small sub-diagrams) 
but they represent the same node in the FODD.

We use the following notation:
for a node $n$,
$n\downt$ denotes the \true branch of $n$, and
$n\downf$ the \false branch of $n$; 
$n\downa$ is an outgoing edge from $n$, where $a$ can be \true or \falseE.
For an edge $e$, 
$source(e)$ is the node that edge $e$ issues from, and
$target(e)$ is the node that edge $e$ points to.
Let $e_1$ and $e_2$ be two edges, we have 
$e_1=sibling(e_2)$ iff $source(e_1)=source(e_2)$.

In the following we will slightly abuse the notation
and let $n\downa$ mean either an edge or the sub-FODD this edge points to.
We will also 
use $n\downa$ and $target(e_1)$
interchangeably where $n=source(e_1)$ and $a$ can be \true or \false depending
on whether $e_1$ lies in the \true or \false branch of $n$. 

\subsection{Semantics of First Order Decision Diagrams}

We use a FODD to represent a function that assigns values to states in a relational MDP.
For example, in the logistics domain, we might want to assign values to different states in such a way
that if there is a box in Paris, then the state is assigned a value
of 19; if there is no box in Paris but there is a box on a truck that is
 in Paris and it is raining, this state is assigned a value of 6.3,
 and so on.\footnote{
This is a result of regression in the logistics domain cf.\  Figure~\ref{Fig:regression}(l).}
The question is how to define the 
semantics of FODDs in order to have the intended meaning.

The semantics of first order formulas are given relative to 
interpretations. An interpretation has a domain of elements, a mapping
of constants to domain elements and, for each predicate, a
relation over the domain elements which specifies when the predicate
is true. 
In the MDP context,
a state can be captured by an interpretation.  
For example in the logistics domain, a state
includes objects such as boxes, trucks, and cities, and relations among them,
such as box 1 on truck 1 ($On(b_1, t_1)$), box 2 in Paris ($Bin(b_2,
Paris)$) and so on.
There is more than one way to define the meaning of FODD $B$ on
interpretation $I$. 
In the following we discuss two possibilities.

\subsubsection{Semantics Based on a Single Path}

A semantics for relational decision trees is given by 
\citeA{BlockeelDR98} and it can be adapted to FODDs.
The semantics define a unique path
that is followed when traversing $B$ relative to $I$. 
All variables are existential and a node is evaluated relative to the
path leading to it.

In particular,
when we reach a node some of its variables have been seen before on
the path and some are new.  Consider a node $n$ with label $l(n)$ and the
path leading to it from the root, and let $C$ be the conjunction of
all labels of nodes that are exited on the \true branch on the
path. Then in the node $n$ we evaluate $\exists \vec{x}, C\wedge
l(n)$, where $\vec{x}$ includes all the variables in $C$ and $l(n)$. 
If this
formula is satisfied in $I$ then we follow the \true
branch. Otherwise we follow the \false branch. This process defines
a unique path from the root to a leaf and its value. 

For example, if we evaluate the diagram in
Figure~\ref{Fig:illsutrate-semantics} on the interpretation $I_1$ with
domain $\{1,2,3\}$ and where the only true atoms are $\{p(1),q(2),h(3)\}$ then we follow
the \true branch at the root since $\exists x, p(x)$ is satisfied,
but we follow the \false branch 
at $q(x)$
since $\exists x, p(x)\wedge q(x)$ is
not satisfied. Since the leaf is labeled with 0 we say that $B$ does
not satisfy $I$. 
This is an attractive approach, 
because it partitions the set of interpretations into 
mutually exclusive sets and this can be used to create abstract
state partitions in the MDP context. 
However, for reasons we discuss later, this semantics 
leads to various complications for the value iteration algorithm,
and it is therefore not used in
the paper.

\subsubsection{Semantics Based on  Multiple Paths}

The second alternative builds on work by \citeA{GrooteTv2003} who defined
semantics based on multiple paths.
Following this work, we define the semantics first relative
to a variable valuation $\zeta$.
Given a FODD $B$ over variables $\vec{x}$
and an interpretation $I$, a valuation $\zeta$ maps each variable in
$\vec{x}$ to a domain element in $I$. Once this is done, each node
predicate evaluates either to \true or \false and 
we can traverse a single path to a leaf. The value of this leaf is
denoted by $\map_B(I,\zeta)$. 

Different valuations may give different values; but recall that 
we use FODDs to represent a function over states, and each state must be
assigned a single value. Therefore,
we next define 
\[\map_B(I) =
\mbox{aggregate}_{\zeta}\{\map_B(I,\zeta)\}\] 
for some aggregation
function. That is, we consider all possible valuations $\zeta$, and for
each valuation we calculate $\map_B(I,\zeta)$. We then  aggregate over all
these values. In the special case of~\citeA{GrooteTv2003} 
leaf labels are in
$\{0,1\}$
and variables are 
universally quantified;
this is easily captured in our formulation  by using minimum as the aggregation function.
In this paper we use maximum as the aggregation function. This
corresponds to existential quantification in the binary case
(if there is a valuation leading to value $1$, then the value assigned will be $1$)
 and gives useful maximization for value functions in the general case. 
We therefore define:
\[\map_B(I) =
\max_{\zeta}\{\map_B(I,\zeta)\}. \]
Using this definition $B$ assigns every $I$ a unique value
$v=\map_B(I)$
so $B$ defines a function from interpretations to real values. 
We later refer to this function as {\em the map of $B$}.

Consider evaluating the diagram in
Figure~\ref{Fig:illsutrate-semantics} on the interpretation $I_1$
given above where the only true atoms are
$\{p(1),q(2),h(3)\}$. 
The valuation where $x$ is mapped to $2$ and $y$
is mapped to 3 denoted $\{x/2, y/3\}$ leads to a leaf with value 1
so the maximum is 1. When leaf labels are in \{0,1\}, we can interpret 
the diagram as a logical formula.
When $\map_B(I) = 1$, as in our example, we say that $I$ satisfies $B$ and 
when $\map_B(I) = 0$ we say that $I$ falsifies $B$. 

We define node formulas (NF) and edge formulas (EF) recursively as
follows.  For a node $n$ labeled $l(n)$ with incoming edges $e_1, \ldots,
e_k$, the node formula $\NF(n) = (\vee_i \EF(e_i))$.  
The edge
formula for the \true outgoing edge of $n$ is $\EF(n\downt) =
\NF(n)\wedge l(n)$.  The edge formula for the \false outgoing edge
of $n$ is $\EF(n\downf) = \NF(n)\wedge \neg l(n)$.  These
formulas, where all variables are existentially quantified, capture the
conditions under which a node or edge are reached.

\subsection{Basic Reduction of FODDs}
\label{Sec:basicreduction}
\citeA{GrooteTv2003} define several operators that reduce a
diagram into normal form. 
A total order over node labels is assumed. 
We describe these operators briefly and
give their main properties. 

\begin{description}
\item[(R1)]
Neglect operator: if both children of a node $p$ in the FODD lead to
    the same node $q$ then we remove $p$ and link all parents of
    $p$ to $q$ directly.
\item[(R2)]
 Join operator: if two nodes $p, q$ have the same label and point to
    the same two children then we can join $p$ and $q$ (remove $q$ and
    link $q$'s parents to $p$).
\item[(R3)]
 Merge operator: if a node and its child have the same label then
    the parent can point directly to the grandchild.
\item[(R4)]
 Sort operator: If a node $p$ is a parent of $q$ but the label ordering is
    violated ($l(p)>l(q)$)
then we can reorder the nodes locally using two copies of $p$ and $q$
such that labels of the nodes do not violate the ordering. 
\end{description}

Define a FODD to be reduced if none of the four operators can be
applied. We have the following:

\begin{theorem} {\bf \cite{GrooteTv2003}}\ \\
(1) Let $O\in\{$Neglect, Join, Merge, Sort$\}$ be an operator and $O(B)$
the result of applying $O$ to FODD $B$, then for any $B$, $I$, and $\zeta$,
$\map_B(I,\zeta) = \map_{O(B)}(I,\zeta)$.
\\
(2) If $B_1, B_2$ are reduced and satisfy
$\forall \zeta,$ $\map_{B_1}(I,\zeta)$ $= \map_{B_2}(I,\zeta)$
then they are identical.
\end{theorem}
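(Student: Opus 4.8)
The plan is to treat both parts through one guiding observation: fixing a valuation $\zeta$ collapses a FODD into an ordinary propositional ordered decision diagram. Once $\zeta$ is fixed, every node label $P(t_1,\ldots,t_n)$ becomes a ground atom, each such atom has a definite truth value under $I$, and traversal follows a single deterministic path. The four operators are then exactly the first-order analogues of the classical BDD reductions, so I would prove both parts by reducing to the propositional theory of ordered decision diagrams while carrying along the extra bookkeeping forced by the first-order labels.

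For part (1) I would fix $I$ and $\zeta$ and argue one operator at a time that the deterministic path, and hence the leaf it reaches, is unchanged. For Neglect (R1), both outgoing branches of $p$ target the same node, so the value of $l(p)$ under $\zeta$ is irrelevant and deleting $p$ preserves the reached node. For Join (R2), the two merged nodes carry the same label and the same pair of children, so under any $\zeta$ they evaluate identically and route to the same target; identifying them alters no path. For Merge (R3), a node and its child share a label, so under $\zeta$ that label has a single fixed truth value; whenever we actually pass through the intermediate node we entered it on the branch of that truth value and leave it on the branch of the same truth value, so the intermediate node is transparent and short-circuiting to the grandchild reaches the same place. For Sort (R4), the leaf reached depends only on the pair of truth values of the two adjacent labels and not on the order in which they are read, and the local rewrite with two copies preserves, for each of the four truth-value combinations, the original target. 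In every case the same leaf is reached, so $\map_B(I,\zeta)=\map_{O(B)}(I,\zeta)$.

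For part (2), the canonicity claim, I would transport the uniqueness proof for reduced ordered BDDs to the grounded level, reading the hypothesis as agreement of $\map_{B_1}(I,\zeta)$ and $\map_{B_2}(I,\zeta)$ for all $I$ and $\zeta$. The key preliminary lemma is that there is a valuation $\zeta^\ast$, injective on variables into a sufficiently rich domain, under which the labels encountered along any root-to-leaf path ground to pairwise distinct ground atoms whose truth values can be set independently by choosing $I$; since the total order on labels is respected on every path, no label repeats, and such a $\zeta^\ast$ turns each diagram into a genuine propositional ODD over an ordered set of ground atoms. Granting this, the hypothesis says the two propositional ODDs compute the same leaf-valued function, and being reduced they must be isomorphic: the roots carry the same label (the least in the order on which the map actually depends), and the true- and false-restrictions are again reduced diagrams with equal valuation maps, so they are identical by induction on diagram size. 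One then lifts the isomorphism of the grounded diagrams back to identity of the first-order diagrams.

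The step I expect to be the main obstacle is exactly this lifting in part (2): node labels are first-order atoms that share variables, so under an arbitrary valuation distinct labels can collapse to the same ground atom and the atoms along a path need not be independent, which would break the propositional argument. Making the generic-valuation lemma precise---choosing $\zeta^\ast$ and the domain so that the label order descends to a well-defined order on the realized ground atoms and so that every truth assignment along a path is achievable by some $I$---is the technical heart, and it is what forces the \emph{reduced} hypothesis and the fixed label order to be used in full. Part (1) and the inductive skeleton of part (2) are then routine adaptations of the propositional decision-diagram theory.
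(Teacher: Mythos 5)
First, a framing caveat: this theorem is quoted from \citeA{GrooteTv2003} and the paper itself contains no proof of it, so there is no internal argument to compare against; your proposal has to stand on its own merits. Part (1) does stand: fixing $\zeta$ grounds the FODD into a propositional ordered decision diagram, and your operator-by-operator check that the unique path under $(I,\zeta)$ reaches the same leaf after Neglect, Join, Merge, and Sort is exactly the right (and standard) argument.

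Part (2), however, has a genuine gap, located precisely in the generic-valuation lemma you call the technical heart --- but for a reason you do not name. FODD labels include equalities $t_1 = t_2$, and the truth value of an equality node is fixed by the valuation, not by the interpretation: no choice of $I$ can toggle it (apart from equality between two constants). Under your single injective $\zeta^\ast$ avoiding the denotations of constants, every equality between distinct terms grounds to \emph{false}, so the grounded diagrams never exercise the \texttt{true} branches of equality nodes, and agreement of the grounded propositional functions says nothing about sub-diagrams reachable only through those branches; the grounded isomorphism therefore cannot be lifted to identity of the FODDs. A complete proof must use the full strength of the hypothesis, running the argument over a family of valuations --- one per identification pattern (partition) of the variables --- so that each equality node is probed both ways, and then assembling the resulting partial correspondences; alternatively equality nodes must be handled by separate rules. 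Indeed, with equalities admitted the statement as transcribed is delicate: a diagram consisting of a single node labelled $x = x$ with children $1$ and $0$ is irreducible under R1--R4 and has the same per-valuation map as the leaf $1$, yet is not identical to it. This indicates the cited result implicitly lives in a predicate-only setting or relies on additional equality-simplification operators. For the equality-free fragment your plan is sound: injectivity plus avoidance of constants makes the label-to-ground-atom map injective, R3 and R4 irreducibility give strictly increasing labels along paths, propositional ROBDD canonicity applies, and the injective label map lifts the isomorphism back.
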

\vspace{-2mm}
Property (1) gives soundness, and property (2) shows that reducing a 
FODD gives a normal form.
However, this only holds if the maps are
identical for every $\zeta$ and this condition is stronger than normal
equivalence. 
This  normal form suffices for \citeA{GrooteTv2003} 
who use it to provide a theorem prover for first order logic,
but it is not strong enough for our purposes. 
Figure~\ref{Fig:reduced} shows two pairs of reduced
FODDs (with respect to R1-R4)
 such that $\map_{B_1}(I) = \map_{B_2}(I)$ but 
$\exists \zeta, \map_{B_1}(I,\zeta) \not = \map_{B_2}(I,\zeta)$.
In this case although the maps are the same the FODDs are not reduced to
the same form.
Consider first the pair in part (a) of the figure.
An interpretation where $p(a)$ is false but $p(b)$ is true and a
substitution $\{x/a, y/b\}$ leads to value of 0 in $B_1$
while $B_2$ always evaluates to 1.
But the diagrams are equivalent. For any interpretation, 
if $p(c)$
is true for any object $c$ then $\map_{B_1}(I)=1$ through the
substitution $\{x/c\}$;
if $p(c)$
is false for any object $c$ then $\map_{B_1}(I)=1$ through the
substitution $\{x/c, y/c\}$. Thus the map is always 1 for $B_1$ as well.
In Section~\ref{Sec:R7} we  show that with the additional
reduction operators we have developed, B1 in the first pair
is reduced to $1$. Thus the diagrams in (a) have the same form after 
reduction. However,
our reductions do not resolve the second pair given in part (b) of the
figure.
Notice that both  functions capture 
a path of two edges labeled $p$ in a graph
(we just change the order of two nodes and rename variables)
so the diagrams evaluate to 1 if and only if the interpretation has
such a path.
Even though B1 and B2 are logically equivalent,
they cannot be reduced to the same form using R1-R4 or our new
operators. To identify
a unique minimal syntactic form one may have to consider all possible renamings
of variables and the sorted diagrams they produce, but this is an expensive operation.
A discussion of normal form for conjunctions that uses 
such an operation is given by \citeA{GarrigaKhRa2007}.

\begin{figure}[tb]
\begin{center}
\centerline{\psfig{figure=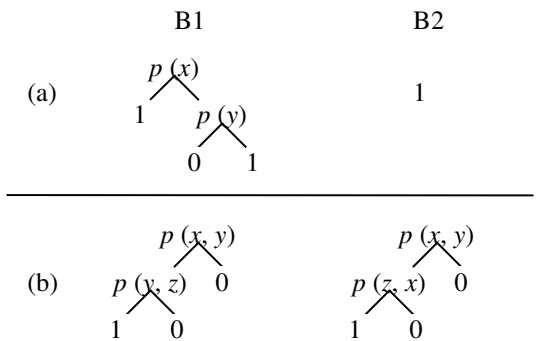}}
\caption{Examples illustrating weakness of normal form.}
\label{Fig:reduced}
\end{center}
\vskip -0.2in
\end{figure}

\subsection{Combining FODDs} 

Given two algebraic diagrams we
may need to add the corresponding functions, take the maximum or use
any other binary operation, {\tt op}, over the values represented by the
functions.  
Here we adopt the solution from the propositional case
\cite{Bryant1986}
in the form of the procedure
{\bf Apply($B_1$,$B_2$,{\tt op})} where $B_1$ and $B_2$ are 
algebraic diagrams.
Let $p$ and $q$ be the roots of $B_1$ and $B_2$ respectively.
This procedure chooses a new root label (the lower among
labels of $p,q$) and recursively combines the corresponding
sub-diagrams, according to the relation between the two labels ($\prec$,
$=$, or $\succ$).
In order to make sure the result is reduced in the propositional
sense one can use dynamic programming to avoid generating nodes for
which either neglect or join operators ((R1) and (R2) above) would be
applicable. 

\begin{figure}[tbhp]
\vskip 0.2in
\begin{center}
\setlength{\epsfxsize}{3.25in}
\centerline{\psfig{figure=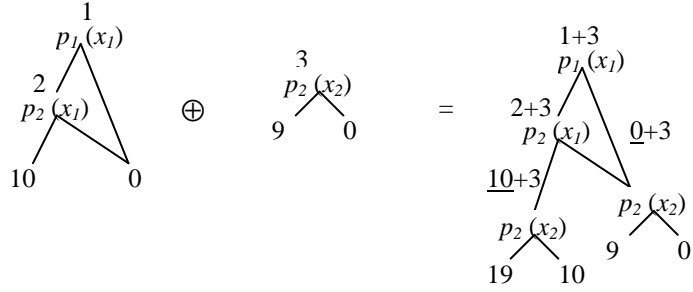}}
\caption{A simple example of adding two FODDs.}
\label{Fig:combineFODD}
\end{center}
\vskip -0.2in
\end{figure} 

Figure~\ref{Fig:combineFODD} illustrates this process. In this example, we 
assume predicate ordering as $p_1 \prec p_2$, and parameter ordering
$x_1 \prec x_2$.
Non-leaf nodes are annotated with numbers and numerical leaves are underlined 
for identification during the execution trace.
For example, the top level call adds the functions corresponding to
nodes 1 and 3. Since $p_1(x_1)$ is the smaller label it is picked as
the label for the root of the result. Then we must add both left and
right child of node 1 to node 3. These calls are performed recursively.
It is easy to see that the size of the result may be the product of sizes of input diagrams. 
However, much pruning will occur with shared variables and further pruning 
is made possible by weak reductions presented later.
 
Since for any interpretation $I$ and any fixed valuation $\zeta$
the FODD is propositional, we have the following lemma.
We later refer to this property as the {\em correctness
of Apply}. 

\begin{lemma}
Let $C=Apply(A, B, op)$, then for any $I$ and $\zeta$, 
$\map_{A}(I,\zeta) \mbox{ op } \map_{B}(I,\zeta)= \map_{C}(I,\zeta)$.
\end{lemma}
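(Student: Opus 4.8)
The plan is to use the observation stated just before the lemma: for a \emph{fixed} interpretation $I$ and valuation $\zeta$, every node label --- an atom $P(t_1,\ldots,t_n)$ or an equality $t_1=t_2$ --- evaluates to a determined truth value, so each of $A$, $B$, and the computed diagram $C$ behaves as an ordinary propositional ADD whose evaluation follows a single root-to-leaf path. Thus $\map_A(I,\zeta)$, $\map_B(I,\zeta)$, and $\map_C(I,\zeta)$ are exactly the leaf values reached by propositional traversal, and the statement reduces to the classical correctness of the propositional {\bf Apply} procedure~\cite{Bryant1986}. I would prove it by induction on the sum of the sizes of $A$ and $B$. In the base case both $A$ and $B$ are leaves with values $v_A$ and $v_B$; then $Apply$ returns the single leaf $v_A \mbox{ op } v_B$, while $\map_A(I,\zeta)=v_A$ and $\map_B(I,\zeta)=v_B$, so $\map_C(I,\zeta)=v_A\mbox{ op } v_B=\map_A(I,\zeta)\mbox{ op }\map_B(I,\zeta)$ directly.

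For the inductive step let $p$ and $q$ be the roots of $A$ and $B$, with at least one a non-leaf. The procedure selects the smaller of the two labels as the root of $C$ and recurses according to whether $l(p)\prec l(q)$, $l(p)=l(q)$, or $l(p)\succ l(q)$. Let $a\in\{\trueE,\falseE\}$ be the value taken by the chosen root label under $(I,\zeta)$; since the diagram is propositional for this $\zeta$, we have $\map_A(I,\zeta)=\map_{p\downa}(I,\zeta)$ whenever $p$ carries the root label, and likewise for $q$. When $l(p)\prec l(q)$ (or $q$ is a leaf), the $a$-child of $C$ is $Apply(p\downa,B,op)$, so traversing $C$ gives $\map_{Apply(p\downa,B,op)}(I,\zeta)$, which by the induction hypothesis equals $\map_{p\downa}(I,\zeta)\mbox{ op }\map_B(I,\zeta)=\map_A(I,\zeta)\mbox{ op }\map_B(I,\zeta)$. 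When $l(p)=l(q)$, both labels evaluate to the same $a$, the $a$-child of $C$ is $Apply(p\downa,q\downa,op)$, and the induction hypothesis gives $\map_{p\downa}(I,\zeta)\mbox{ op }\map_{q\downa}(I,\zeta)=\map_A(I,\zeta)\mbox{ op }\map_B(I,\zeta)$. The case $l(p)\succ l(q)$ is symmetric to the first, with the roles of $A$ and $B$ exchanged. In each case $\map_C(I,\zeta)=\map_A(I,\zeta)\mbox{ op }\map_B(I,\zeta)$, completing the induction.

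One point that needs attention is the dynamic-programming pruning used inside $Apply$ to suppress nodes on which the neglect or join operators (R1, R2) would apply. This does not affect the argument: by part~(1) of Theorem~1 the operators Neglect and Join preserve $\map(I,\zeta)$ for \emph{every} $\zeta$, so whatever structural sharing $Apply$ performs leaves the per-valuation map unchanged, and the inductive identity above still holds. The proof has no serious obstacle; the only real care is the bookkeeping across the three label-comparison cases --- in particular aligning the $a$-branch of the result with the $a$-branches of the inputs --- together with the observation that fixing $\zeta$ turns each FODD into a propositional object to which the standard Apply recursion and its reductions apply verbatim.
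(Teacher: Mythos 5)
Your proof is correct and follows essentially the same route as the paper's: induction on the combined number of nodes of $A$ and $B$, a base case of two leaves, and a case analysis on how the root labels compare, using the fact that a fixed $(I,\zeta)$ makes every diagram propositional so that each root label evaluates to a single truth value and traversal follows one path. The only addition beyond the paper's argument is your explicit remark that the dynamic-programming pruning (R1, R2) inside Apply is harmless because those are strong reductions preserving the map for every valuation --- a point the paper leaves implicit but which is justified by Theorem~1(1).
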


\begin{proof}
First we introduce some terminology. 
Let $\#nodes(X)$ refer to the set of all nodes in a FODD $X$. 
Let the root nodes of $A$ and $B$ be $A_{root}$ and $B_{root}$ respectively. 
Let the FODDs rooted at 
$A_{root\downt}$, 
$A_{root\downf}$, 
$B_{root\downt}$, 
$B_{root\downf}$, 
$C_{root\downt}$, and 
$C_{root\downf}$ be 
$A^l$, $A^r$, $B^l$, $B^r$, $C^l$ and $C^r$ respectively.

The proof is by induction on $n = |\#nodes(A)| + |\#nodes(B)|$. 
The lemma is true for $n = 2$, because in this case 
both $A_{root}$ and $B_{root}$ have to be single leaves and 
an operation on them is the same as an operation on two real numbers.
For the inductive step we need to consider two cases.

Case 1: $A_{root} = B_{root}$. 
Since the root nodes are equal, if a valuation $\zeta$ reaches $A^l$, 
then it will also reach $B^l$ and if $\zeta$ reaches $A^r$, 
then it will also reach $B^r$. 
Also, by the definition of Apply, in this case $C^l = Apply(A^l, B^l, op)$ 
and $C^r = Apply(A^r, B^r, op)$. 
Therefore the statement of the lemma is true if
$\map_{A^l}(I, \zeta) \mbox{ op } \map_{B^l}(I, \zeta) = \map_{C^l}(I, \zeta)$ and 
$\map_{A^r}(I, \zeta) \mbox{ op } \map_{B^r}(I, \zeta) = \map_{C^r}(I, \zeta)$ 
for any $\zeta$ and $I$. 
Now, since $|\#nodes(A^l) + \#nodes(B^l)| < n$ and $|\#nodes(A^r) + \#nodes(B^r)| < n$, 
this is guaranteed by the induction hypothesis.  

Case 2:
$A_{root} \neq B_{root}$. 
Without loss of generality let us assume that $A_{root} \prec B_{root}$. 
By the definition of Apply, 
$C^l = Apply(A^l, B, op)$ and 
$C^r = Apply(A^r, B, op)$. 
Therefore the statement of the lemma is true if 
$\map_{A^l}(I, \zeta) \mbox{ op } \map_{B}(I, \zeta) = \map_{C^l}(I, \zeta)$ and 
$\map_{A^r}(I, \zeta) \mbox{ op } \map_{B}(I, \zeta) = \map_{C^r}(I, \zeta)$ for any $\zeta$ and $I$. 
Again this is guaranteed by the induction hypothesis.
\end{proof}

\subsection{Order of Labels}
\label{Sec:labelorder}

The syntax of FODDs allows for two ``types'' of objects:
constants and variables. 
Any argument of 
a predicate can be a constant or a variable. 
We assume a complete ordering on predicates, constants, 
and variables. 
The ordering 
$\prec$ between two labels is given by the following rules.

\begin{enumerate}
\item
$P(x_1,...,x_n)\prec P'(x_1',...,x_m')$ if $P\prec P'$
\item
$P(x_1,...,x_n)\prec P(x_1',...,x_n')$ if there 
exists $i$ such that 
$x_j=x_j'$ for all $j<i$, and
$type(x_i)\prec type(x_i')$ 
(where ``type'' can be constant or variable) or
$type(x_i)=type(x_i')$ and $x_i\prec x_i'$. 
\end{enumerate}

While the predicate order can be set arbitrarily it appears useful to assign
the equality predicate
as the first in the predicate ordering so that
equalities are at the top of the diagrams.
During reductions we often encounter situations where
one side of the equality  can be completely removed
leading to substantial space savings.
It may also be useful to order the argument types 
so that
constant $\prec$ variables.
This ordering may be helpful for reductions.
Intuitively, a variable appearing lower in the diagram can be bound to 
the value of a constant that appears above it.
These are only heuristic guidelines and the best ordering may well be
problem dependent.
We later introduce other forms of arguments: 
{\em predicate parameters} and {\em action parameters}.
The ordering for these is discussed in Section~\ref{Sec:VI}.

\section{Additional Reduction Operators}
In our context, especially for algebraic FODDs, we may want to reduce
the diagrams further. We distinguish {\em strong reductions} that
preserve $\map_{B}(I,\zeta)$ for all $\zeta$ and {\em weak reductions}
that only preserve $\map_{B}(I)$. 
Theorem 1 shows that R1-R4 given above are strong reductions.
The details of our relational VI algorithm do not
directly depend on the reductions used. 
Readers more interested in RMDP details 
can skip to Section~\ref{Sec:relationalRep} 
which can be read independently 
(except where reductions are illustrated in examples).

All the reduction operators below can incorporate existing knowledge on
relationships between predicates in the domain. 
We denote this background knowledge by
$\B$.
For example in the
Blocks World we may know that if there is a block on block $y$ then it
is not clear: $\forall x,y, [on(x,y)\rightarrow \neg
clear(y)]$.

In the following when we define conditions for reduction operators,
there are two types of conditions: the reachability condition and the value condition. 
We name reachability conditions by starting with P (for Path Condition)
and the reduction operator number. We name conditions on values by starting
with V and the reduction operator number.

\subsection{(R5) Strong Reduction for Implied Branches}
Consider any node $n$ such that  whenever $n$ is reached then the \true
branch is followed. In this case we can remove
$n$ and connect its parents  directly to the \true branch.
We first present the condition, followed by the lemma 
regarding this operator.

{\cond {P5}:}
$\B\models \forall \vec{x}, [\NF(n)\rightarrow l(n)]$
where  $\vec{x}$ are the variables in $\EF(n\downt)$.

Let $\rfiveremove (n)$ denote the operator that removes
node $n$ and connects its parents directly to the \true branch.
Notice that this is a generalization of R3.
It is easy to see that the following lemma is true:

\begin{lemma}
Let $B$ be a FODD, $n$ a node for which condition P5 holds, and 
$B'$ the result of $\rfiveremove(n)$. Then for any interpretation
$I$ and any valuation $\zeta$ we have $\map_B(I,\zeta)=\map_{B'}(I,\zeta)$.
\end{lemma}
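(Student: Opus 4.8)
The plan is to show that the $\rfiveremove$ operation does not change the value $\map_B(I,\zeta)$ for any fixed interpretation $I$ and valuation $\zeta$. Since for a fixed $\zeta$ every node predicate evaluates to a definite truth value and the diagram behaves propositionally (each valuation traces a single path to a leaf), it suffices to argue that the unique path traced in $B$ and the unique path traced in $B'$ end at the same leaf. The two diagrams are identical except at node $n$: in $B$, a path arriving at $n$ branches on $l(n)$, whereas in $B'$ node $n$ is bypassed and its parents point directly to $n\downt$. So the only way the two maps could differ under some $\zeta$ is if a path under $\zeta$ reaches $n$ and then follows the \false branch in $B$ (since on the \true branch the two diagrams already agree).

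First I would fix $I$ and $\zeta$ and perform a case split on whether the path traced under $\zeta$ in $B$ actually reaches node $n$. If it does not reach $n$, then removing $n$ changes nothing along that path (the portions of $B$ and $B'$ away from $n$ are the same), so the leaf reached is identical and the maps agree. If the path does reach $n$, then by the definition of $\EF$ and $\NF$ this means $\NF(n)$ is satisfied under $\zeta$ in $I$ --- more precisely, the restriction of $\zeta$ to $\vec{x}$ witnesses the existentially quantified formula $\NF(n)$ along this concrete path.

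The key step is then to invoke condition P5, namely $\B\models \forall \vec{x},[\NF(n)\rightarrow l(n)]$. Since the path reaching $n$ under $\zeta$ certifies that $\NF(n)$ holds for this valuation in $I$ (and $I$ respects $\B$), P5 forces $l(n)$ to be true under $\zeta$ as well. Therefore in $B$ the path must follow the \true branch $n\downt$ at node $n$. But in $B'$ the parent edge has been redirected straight to $n\downt$, so the path in $B'$ continues into the identical sub-diagram $n\downt$. Hence both paths enter the same successor and, since everything below is unchanged, reach the same leaf. This gives $\map_B(I,\zeta)=\map_{B'}(I,\zeta)$.

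The main obstacle, and the point requiring care, is the quantifier bookkeeping in the phrase ``the path reaching $n$ certifies $\NF(n)$.'' The formulas $\NF(n)$ and $\EF(n\downt)$ are existentially quantified over all variables appearing on the relevant edges, while $\zeta$ is a single concrete valuation; I need to confirm that a concrete $\zeta$ tracing a path to $n$ supplies exactly the witnesses needed to satisfy $\NF(n)$, and that $\vec{x}$ in P5 is precisely the variable set over which this witness ranges. Once this alignment between the per-$\zeta$ propositional traversal and the existentially quantified reachability formula is made explicit, the implication from P5 applies directly and the rest of the argument is immediate. I would also note that this subsumes R3 (Merge) as the special case where $n$'s parent shares its label, which matches the remark in the statement that R5 generalizes R3.
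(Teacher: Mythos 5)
Your proof is correct, and it matches the paper exactly in spirit: the paper offers no written proof for this lemma (it merely states ``It is easy to see that the following lemma is true''), and your argument---fix $I$ and $\zeta$, note the diagram is propositional per valuation, split on whether the traced path reaches $n$, and use P5 (with $\zeta$ supplying the instantiation of the universally quantified $\vec{x}$, and $I$ assumed to satisfy $\B$) to force the \true branch, which is exactly where $B'$ redirects---is the intended justification. Your explicit attention to the quantifier bookkeeping (that reaching $n$ under $\zeta$ witnesses the open formula $\NF(n)$ under that assignment) is precisely the detail the paper leaves implicit.
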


A similar reduction can be formulated for the \false branch,
i.e., if $\B\models \forall \vec{x}, [\NF(n)\rightarrow \neg l(n)]$
then whenever node $n$ is reached then the \false
branch is followed. 
In this case we can remove
$n$ and connect its parents  directly to the \false branch.

Implied branches may simply be a
result of equalities along a path. For example $(x=y) \wedge
p(x)\rightarrow p(y)$ so we may prune $p(y)$ if $(x=y)$ and 
$p(x)$ are known to be true.
Implied branches may also be a result of background knowledge.
For example in the Blocks World 
if $on(x,y)$ is guaranteed to be true when we reach a node
labeled $clear(y)$ then we can remove 
$clear(y)$ and connect its parent to $clear(y)\downf$.

\subsection{(R7) Weak Reduction Removing Dominated Edges}
\label{Sec:R7}

Consider any two edges $e_1$ and $e_2$ in a FODD
whose formulas satisfy that if we can follow $e_2$
using some valuation
then we can also follow $e_1$
using a possibly different valuation.
If $e_1$ gives better value than $e_2$
then intuitively $e_2$ never determines the value of the diagram and
is therefore redundant.
We formalize this as reduction operator
R7.\footnote{We 
use R7 and skip the notation R6 for consistency 
with earlier versions of this paper. 
See further discussion in Section~\ref{Sec:R6}.}

Let $p=source(e_1), q=source(e_2)$, 
$e_1=p\downa$, and
$e_2=q\downb$, where $a$ and $b$
can be \true or \falseE.
We first present all the conditions for the operator
and then follow with the definition of the operator.

{\cond {P7.1}:}
$\B\models [\exists \vec{x}, \EF(e_2)]
\rightarrow [\exists \vec{y}, \EF(e_1)]$
where $\vec{x}$ are the variables in $\EF(e_2)$ 
and $\vec{y}$ the variables in $\EF(e_1)$.

{\cond {P7.2}:}
$\B\models \forall \vec{u}, [
             [\exists \vec{w}, \EF(e_2)]\rightarrow 
             [\exists \vec{v}, \EF(e_1)]]$ 
where  $\vec{u}$ are the variables that appear in both $target(e_1)$ and $target(e_2)$,
$\vec{v}$ the variables that appear in $\EF(e_1)$ 
but are not in $\vec{u}$,
and
$\vec{w}$ the variables that appear in $\EF(e_2)$ 
but are not in $\vec{u}$.
This condition requires that for every valuation $\zeta_1$ that reaches $e_2$ there is a
valuation $\zeta_2$ that reaches $e_1$ such that 
$\zeta_1$ and $\zeta_2$ agree on all  variables 
that appear in both 
$target(e_1)$ and $target(e_2)$. 

{\cond {P7.3}:} 
$\B\models \forall \vec{r}, [
             [\exists \vec{s}, \EF(e_2)]\rightarrow 
             [\exists \vec{t}, \EF(e_1)]]$
where $\vec{r}$ are the variables that appear in both $target(e_1)$ and $target(sibling(e_2))$,
$\vec{t}$ the variables that appear in $\EF(e_1)$ 
but are not in $\vec{r}$,
and
$\vec{s}$ the variables that appear in $\EF(e_2)$ 
but are not in $\vec{r}$.
This condition requires that for every valuation
$\zeta_1$ that reaches $e_2$ there is a valuation $\zeta_2$ that reaches $e_1$
such that $\zeta_1$ and $\zeta_2$ agree on all variables 
that appear in both
$target(e_1)$ and $target(sibling(e_2))$. 

{\cond {V7.1}:} $\min(target(e_1))\geq\max(target(e_2))$
where $\min(target(e_1))$ is the minimum leaf value in $target(e_1)$,
and $\max(target(e_2))$ the maximum leaf value in $target(e_2)$.
In this case
regardless of the valuation we know that it is better to follow
$e_1$ and not $e_2$.
 
{\cond {V7.2}:} $\min(target(e_1))\geq\max(target(sibling(e_2)))$.

{\cond {V7.3}:} all leaves in $D=target(e_1)\ominus target(e_2)$ have non-negative values, 
denoted as $D \geq 0$.  
In this case
for any fixed valuation
it is better to follow $e_1$  instead of $e_2$. 

{\cond {V7.4}:} all leaves in $G=target(e_1) \ominus target(sibling(e_2)) $ have non-negative values.

We define the operators $\rsevenreplace(b,e_1,e_2)$ as replacing
$target(e_2)$  with a constant $b$ that is between 0 and $\min(target(e_1))$
(we may write it as $\rsevenreplace(e_1,e_2)$ if $b=0$), and
$\rsevendrop(e_1,e_2)$ as dropping the node $q=source(e_2)$ and connecting
its parents to $target(sibling(e_2))$.

We need one more ``safety'' condition to guarantee that the reduction is correct:

{\cond {S1}:} $\NF(source(e_1))$ and the sub-FODD of $target(e_1)$ remain the same
before and after R7-replace and R7-drop. 
This condition says that we must not harm the value promised by $target(e_1)$. In other words, 
we must guarantee that $p=source(e_1)$ is reachable just as before and the sub-FODD of
$target(e_1)$ is not modified after replacing a branch with $0$. 
The condition is violated
if $q$ is in the sub-FODD of $p\downa$, or
if $p$ is in the sub-FODD of $q\downb$.
But it holds in all other cases, that is when
$p$ and $q$ are unrelated (one is not the descendant of the other), or
$q$ is in the sub-FODD of $p\downna$, or
$p$ is in the sub-FODD of $q\downnb$, where $\overline{a},
\overline{b}$ are the negations of $a, b$.

\begin{lemma}
\label{Lemma:R7replace1}
Let $B$ be a FODD, $e_1$ and $e_2$ edges for which conditions P7.1, V7.1, and S1 hold, and 
$B'$ the result of $\rsevenreplace(b,e_1,e_2)$, 
where  $0 \leq b \leq \min(target(e_1))$,
then for any interpretation $I$
we have $\map_B(I)=\map_{B'}(I)$. 
\end{lemma}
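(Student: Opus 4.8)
The plan is to reason directly about the aggregation semantics $\map_B(I)=\max_\zeta\map_B(I,\zeta)$, comparing $B$ and $B'$ valuation by valuation and then reconciling the two maxima. Writing $V(\zeta)=\map_B(I,\zeta)$ and $V'(\zeta)=\map_{B'}(I,\zeta)$, I would first partition all valuations into the set $Z$ of those that traverse the edge $e_2$ when the path is followed in $B$, and its complement. The key structural observation to establish up front is that $\rsevenreplace$ only alters what happens along $e_2$: a valuation in $Z$ now reaches the constant leaf $b$, so $V'(\zeta)=b$, while every valuation outside $Z$ follows exactly the same path as before (no node labels were changed, only the target of $e_2$) and hence satisfies $V'(\zeta)=V(\zeta)$. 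Note also that ``traverses $e_2$'' denotes the same set of valuations in $B$ and in $B'$, since the new leaf $b$ is reachable only through $e_2$. This reduces the claim to comparing $\max(b,\ W)$ with $\max(\max_{\zeta\in Z}V(\zeta),\ W)$, where $W$ is the maximum of $V(\zeta)$ over valuations outside $Z$.

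The degenerate case is $Z=\emptyset$: then $\exists\vec{x},\EF(e_2)$ fails in $I$, the two diagrams agree on every valuation, and the equality is immediate. So I would then assume $Z\neq\emptyset$, which means $\exists\vec{x},\EF(e_2)$ holds in $I$; by P7.1 there is a valuation $\zeta^{*}$ that traverses $e_1$ in $B$. Next I would use S1 to argue that $\zeta^{*}$ behaves identically in $B'$: since S1 guarantees $\NF(source(e_1))$ and the sub-FODD under $target(e_1)$ are untouched by the replacement, $\zeta^{*}$ still reaches $e_1$ in $B'$ and collects the same value below $target(e_1)$. Consequently $V(\zeta^{*})=V'(\zeta^{*})\geq\min(target(e_1))$ in both diagrams.

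With this witness in hand, both inequalities follow from V7.1. For $\map_{B'}(I)\leq\map_B(I)$: the surviving term $W$ is bounded by $\map_B(I)$ trivially, and $b\leq\min(target(e_1))\leq V(\zeta^{*})\leq\map_B(I)$, so $\max(b,W)\leq\map_B(I)$. For the reverse direction, $W\leq\map_{B'}(I)$ is trivial, and for any $\zeta\in Z$ one has $V(\zeta)\leq\max(target(e_2))\leq\min(target(e_1))\leq V'(\zeta^{*})\leq\map_{B'}(I)$ by V7.1 together with the preservation of $\zeta^{*}$, so $\max_{\zeta\in Z}V(\zeta)\leq\map_{B'}(I)$ as well. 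Combining the two directions yields $\map_B(I)=\map_{B'}(I)$.

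The main obstacle is not any single inequality but making precise the three-way interaction among the conditions: P7.1 supplies only the \emph{existence} of a dominating valuation and says nothing pointwise; V7.1 supplies the \emph{pointwise} bound $\max(target(e_2))\leq\min(target(e_1))$ but becomes useful only once such a valuation is known to survive the edit; and S1 is exactly what certifies that redirecting $e_2$ neither removes the witness nor degrades the value it promises. The care needed is to confirm that redirecting $e_2$ cannot secretly alter the reachability formula of $source(e_1)$ or the subdiagram below $target(e_1)$ --- which is the content of S1 --- so that the witness argument transfers verbatim from $B$ to $B'$.
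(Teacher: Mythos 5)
Your proof is correct and takes essentially the same route as the paper's: every valuation traversing $e_2$ is dominated (via P7.1 and V7.1) by a valuation reaching $target(e_1)$, so $\map_B(I)$ is never determined by $target(e_2)$ and replacing it with a constant $b \leq \min(target(e_1))$ leaves the map unchanged. The paper's own proof is a terse statement of exactly this domination argument; your explicit partition of valuations, the two-sided inequality, and the invocation of S1 to certify that the witness valuation and its value survive the edit merely make rigorous what the paper leaves implicit.
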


\begin{proof}
Consider any valuation $\zeta_1$ that reaches $target(e_2)$.
Then according to P7.1, there is another valuation reaching $target(e_1)$ 
 and by V7.1 it
gives a higher value. Therefore,
$\map_B(I)$ will never
be determined by $target(e_2)$ so we can 
replace $target(e_2)$  with a constant between 0 and $\min(target(e_1))$
without changing the map.
\end{proof}

\begin{lemma}
\label{Lemma:R7replace2}
Let $B$ be a FODD, $e_1$ and $e_2$ edges for which conditions P7.2, V7.3, and S1 hold, and 
$B'$ the result of $\rsevenreplace(b,e_1,e_2)$, 
where  $0 \leq b \leq \min(target(e_1))$,
then for any interpretation $I$
we have $\map_B(I)=\map_{B'}(I)$. 
\end{lemma}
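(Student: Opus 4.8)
The plan is to follow the skeleton of the proof of Lemma~\ref{Lemma:R7replace1}, showing that no valuation passing through $e_2$ ever determines $\map_B(I)$, but to adapt it to the fact that V7.3 gives only a \emph{pointwise} domination $\map_{target(e_1)}(I,\zeta)\ge\map_{target(e_2)}(I,\zeta)$ rather than the absolute domination of V7.1. The consequence is that, to dominate the value a valuation $\zeta_1$ gets through $e_2$, I cannot use an arbitrary valuation reaching $e_1$ (as in the V7.1 case); I must use one that agrees with $\zeta_1$ in the right places, and supplying exactly such a valuation is the role of P7.2.

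First I would set up the bookkeeping. Since $\rsevenreplace(b,e_1,e_2)$ only redirects the edge $e_2$ to a fresh leaf labeled $b$, whether a valuation follows branch $b$ at $q=source(e_2)$ is a structural property identical in $B$ and $B'$; I call such a valuation one that \emph{crosses} $e_2$. Any valuation not crossing $e_2$ traces an identical path in $B$ and $B'$, so $\map_B(I,\zeta)=\map_{B'}(I,\zeta)$. By S1 the node $p=source(e_1)$ is reachable exactly as before and the sub-FODD $target(e_1)$ is unchanged; moreover S1 excludes both $q$ lying inside $target(e_1)$ and $p$ lying inside $target(e_2)$, so a valuation crossing $e_1$ neither crosses $e_2$ nor has its value altered by the redirection.

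The crux is the following domination claim: for every $\zeta_1$ crossing $e_2$ there is a $\zeta_2$ crossing $e_1$ with $\map_B(I,\zeta_2)\ge\map_B(I,\zeta_1)$, where $\map_B(I,\zeta_1)=\map_{target(e_2)}(I,\zeta_1)$ and $\map_B(I,\zeta_2)=\map_{target(e_1)}(I,\zeta_2)$. Since $I\models\B$ and $\zeta_1$ reaches $e_2$, P7.2 produces such a $\zeta_2$ reaching $e_1$ that agrees with $\zeta_1$ on $\vec{u}$, the variables common to $target(e_1)$ and $target(e_2)$. I then splice the two into a single valuation $\eta$ equal to $\zeta_2$ on the variables of $target(e_1)$ and to $\zeta_1$ on the variables of $target(e_2)$; this is well defined precisely because $\zeta_1$ and $\zeta_2$ coincide on the overlap $\vec{u}$. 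As the value of a sub-FODD under a valuation depends only on the variables occurring in it, $\map_{target(e_1)}(I,\eta)=\map_{target(e_1)}(I,\zeta_2)$ and $\map_{target(e_2)}(I,\eta)=\map_{target(e_2)}(I,\zeta_1)$. Now V7.3, read through the correctness of Apply as $\map_{target(e_1)}(I,\eta)\ge\map_{target(e_2)}(I,\eta)$ for every $\eta$, gives $\map_B(I,\zeta_2)\ge\map_B(I,\zeta_1)$, establishing the claim.

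The two inequalities then follow. For $\map_{B'}(I)\ge\map_B(I)$, pick $\zeta^*$ attaining $\map_B(I)$: if it does not cross $e_2$ its value is preserved in $B'$, and if it does, the claim supplies a $\zeta_2$ crossing $e_1$ of at least equal value whose value is preserved in $B'$. For $\map_{B'}(I)\le\map_B(I)$, pick $\zeta^{**}$ attaining $\map_{B'}(I)$: if it does not cross $e_2$ its value already occurs in $B$, and if it does, its value is $b$, while P7.2 guarantees some $\zeta_2$ reaching $e_1$ with $\map_B(I,\zeta_2)\ge\min(target(e_1))\ge b$, again a witness in $B$. Hence $\map_B(I)=\map_{B'}(I)$. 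I expect the main obstacle to be the splicing step — arranging one valuation $\eta$ that simultaneously reproduces the $e_2$-value under $\zeta_1$ and the $e_1$-value under $\zeta_2$ — which is exactly what forces P7.2 to strengthen P7.1 by demanding agreement on the shared variables.
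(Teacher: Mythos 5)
Your proof is correct and follows essentially the same approach as the paper: use P7.2 to obtain a valuation reaching $e_1$ that agrees with the given one on the shared variables, invoke V7.3 (via correctness of Apply on the subtraction diagram) to conclude pointwise domination, and then argue via maximum aggregation that $target(e_2)$ never determines the map. Your explicit splicing of $\zeta_1$ and $\zeta_2$ into $\eta$, and the two-sided inequality handling the constant $b$ and condition S1, simply spell out details the paper's terser proof leaves implicit.
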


\begin{proof}
Consider any valuation $\zeta_1$
that reaches $target(e_2)$. By P7.2 there is another valuation $\zeta_2$ reaching $target(e_1)$
and $\zeta_1$ and $\zeta_2$ agree on all variables that appear in 
both $target(e_1)$ and $target(e_2)$. Therefore,
by V7.3 it achieves a higher value 
(otherwise, there must be a branch in $D=target(e_1) \ominus target(e_2)$ 
with a negative value). Therefore according to maximum aggregation the value
of $\map_B(I)$ will never
be determined by $target(e_2)$, and we
can replace it with a constant as described above. 
\end{proof}

Note that the conditions in the previous two lemmas are not comparable
since P7.2 $\rightarrow$ P7.1 and V7.1 $\rightarrow$ V7.3. 
Intuitively when we relax the conditions on values, 
we need to strengthen the conditions on reachability.
The subtraction operation $D=target(e_1) \ominus target(e_2)$ is
propositional,
so the test in V7.3 implicitly assumes  
that the common variables in the operands are
the same and P7.1 does not check this.
Figure~\ref{Fig:R7replacew0}
illustrates that
the reachability condition P7.1 
together with V7.3,
i.e., combining the weaker portions of conditions 
from Lemma~\ref{Lemma:R7replace1}  and Lemma~\ref{Lemma:R7replace2},
 cannot guarantee
that we can replace a branch with a constant. 
 Consider an interpretation $I$ 
with domain $\{1,2,3,4\}$ and relations 
$\{h(1,2),q(3,4),p(2)\}$. In addition assume domain knowledge 
 $\B=[\exists x,y, h(x,y) \rightarrow \exists z,w,q(z, w)]$.
So P7.1 and V7.3 hold for $e_1=[q(x,y)]\downt$ and $e_2=[h(z,y)\downt]$.
We have
$\map_{B1}(I)=3$
and 
$\map_{B2}(I)=0$. 
It is therefore not possible to replace
$h(z,y)\downt$ with $0$.

\begin{figure}[tbhp]
\vskip 0.2in
\begin{center}
\centerline{\psfig{figure=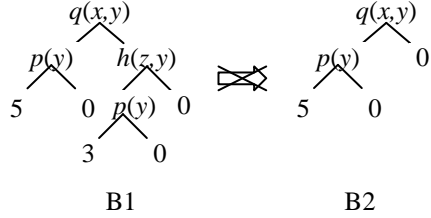}}
\caption{An example illustrating the subtraction condition in R7.}
\label{Fig:R7replacew0}
\end{center}
\vskip -0.2in
\end{figure}

\begin{figure}[tbhp]
\vskip 0.2in
\begin{center}
\centerline{\psfig{figure=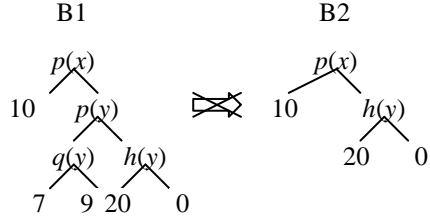}}
\caption{An example illustrating the condition for removing a node in R7.}
\label{Fig:R7remove}
\end{center}
\vskip -0.2in
\end{figure} 

Sometimes we can drop the node $q$ completely with $\rsevendrop$.
Intuitively, when we remove a node, we must guarantee that
we do not gain extra value.
The conditions for $\rsevenreplace$ can only guarantee that we
will not lose any value.
 But if we remove 
the node $q$, a valuation that was supposed to reach $e_2$ may reach
a better value in $e_2$'s sibling.
This would change the map, as illustrated in 
Figure~\ref{Fig:R7remove}.
Notice that the conditions P7.1 and V7.1 hold for 
$e_1=[p(x)]\downt$ and
$e_2=[p(y)]\downt$ so we can replace
$[p(y)]\downt$ with a constant.
 Consider an interpretation $I$ with
domain $\{1,2\}$ and relations $\{q(1),p(2),h(2)\}$. 
We have
$\map_{B1}(I)=10$ 
via valuation  $\{x/2\}$
and 
$\map_{B2}(I)=20$
via valuation $\{x/1, y/2\}$. 
Thus removing $p(y)$ is not correct.

Therefore we need the additional condition  to guarantee that 
we will not gain extra value with node dropping.
This condition can be stated as: for any valuation $\zeta_1$ that reaches $e_2$ 
and thus will be redirected to reach a value $v_1$ in $sibling(e_2)$ when
$q$ is removed, there is a valuation $\zeta_2$
that reaches a leaf with value $v_2 \geq v_1$.  
However, this condition is too complex to test in practice.
In the following we identify two stronger conditions.

\begin{lemma}
\label{Lemma:R7drop1}
Let $B$ be a FODD, $e_1$ and $e_2$ edges for which condition V7.2 hold 
in addition to the conditions for replacing $target(e_2)$  with a constant, and 
$B'$ the result of $\rsevendrop(e_1,e_2)$, then for any interpretation $I$
we have $\map_B(I)=\map_{B'}(I)$. 
\end{lemma}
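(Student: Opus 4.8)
The plan is to establish the claimed equality by proving the two inequalities $\map_{B'}(I)\le\map_B(I)$ (``no gain'') and $\map_B(I)\le\map_{B'}(I)$ (``no loss'') separately, because the hypotheses split cleanly along these two roles: condition V7.2 is what rules out a gain, while the replacement conditions (those of Lemma~\ref{Lemma:R7replace1} or Lemma~\ref{Lemma:R7replace2}, together with S1) are what rule out a loss, exactly as the surrounding discussion anticipates. The organizing observation is that dropping $q=source(e_2)$ changes the value of the diagram only on valuations whose path reaches $q$: a valuation never reaching $q$ is untouched, and a valuation reaching $q$ along the sibling branch $\overline{b}$ is also untouched since it already proceeded to $target(sibling(e_2))$. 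The only affected valuations are those reaching $e_2$; for such a $\zeta$ we have $\map_{B'}(I,\zeta)=\map_{target(sibling(e_2))}(I,\zeta)$ in $B'$ in place of $\map_B(I,\zeta)=\map_{target(e_2)}(I,\zeta)$ in $B$.

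For the no-gain direction I would fix any $\zeta$ reaching $e_2$ and bound its new value. By V7.2, $\map_{target(sibling(e_2))}(I,\zeta)\le\max(target(sibling(e_2)))\le\min(target(e_1))$. Since $\zeta$ itself witnesses that $e_2$ is reachable, condition P7.1 (which holds under either alternative, as P7.2 $\rightarrow$ P7.1) produces a valuation $\zeta'$ reaching $e_1$, whence $\map_B(I,\zeta')=\map_{target(e_1)}(I,\zeta')\ge\min(target(e_1))$. Chaining these gives $\map_{B'}(I,\zeta)\le\map_B(I,\zeta')\le\map_B(I)$, and since every unaffected valuation keeps its value, taking the maximum over $\zeta$ yields $\map_{B'}(I)\le\map_B(I)$. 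Note that this direction uses only the reachability condition P7.1, so it is uniform across the two sets of replacement hypotheses.

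For the no-loss direction I would take a valuation $\zeta^*$ attaining $\map_B(I)$. If its path does not reach $e_2$, its value is unchanged in $B'$ and we are done immediately. If it reaches $e_2$, I reuse the very argument of Lemmas~\ref{Lemma:R7replace1} and~\ref{Lemma:R7replace2}: under P7.1/V7.1 (or P7.2/V7.3 using that $\zeta'$ agrees with $\zeta^*$ on the shared variables so that the propositional subtraction in $D\ge 0$ applies) there is a valuation $\zeta'$ reaching $e_1$ with $\map_B(I,\zeta')\ge\map_B(I,\zeta^*)=\map_B(I)$, hence equal to $\map_B(I)$. The decisive step is then to transfer $\zeta'$ from $B$ to $B'$: I would invoke the safety condition S1, which guarantees that dropping $q$ leaves both the reachability of $p=source(e_1)$ and the sub-FODD at $target(e_1)$ intact, so that $\zeta'$ still reaches $e_1$ in $B'$ and $\map_{B'}(I,\zeta')=\map_B(I,\zeta')=\map_B(I)$, giving $\map_{B'}(I)\ge\map_B(I)$.

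The main obstacle I anticipate is precisely this transfer step, not any arithmetic. One must be certain that removing $q$ does not perturb the path by which $\zeta'$ descends to $p$ and exits through $target(e_1)$, and it is exactly S1 --- via its characterization that $q$ does not lie in the sub-FODD of $p\downa$ and $p$ does not lie in the sub-FODD of $q\downb$ --- that excludes the dangerous configurations in which the dropped node sits on or below the witnessing path. A secondary subtlety worth flagging explicitly is that the two affected-valuation analyses must keep the sub-FODDs $target(e_1)$ and $target(sibling(e_2))$ fixed under the edit, which holds because these lie outside the single node being removed; this is what lets me equate $\map$ of a sub-FODD before and after the drop without re-deriving it.
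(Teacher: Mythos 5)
Your proof is correct and follows essentially the same route as the paper's: the affected valuations (those reaching $e_2$) are dominated before the drop by a valuation reaching $e_1$ (via the replacement conditions) and after the drop by that same valuation (via V7.2), so the max-aggregated map is unchanged. The only difference is presentational --- you split the argument into no-gain/no-loss inequalities and make explicit the transfer step through S1, which the paper's compressed proof leaves implicit.
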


\begin{proof}
Consider any valuation reaching $target(e_2)$.
As above its true value is dominated by another valuation reaching
$target(e_1)$. When we remove $q=source(e_2)$ the valuation 
will reach $target(sibling(e_2))$ and by V7.2 the value produced is smaller
than the value from $target(e_1)$. So again the map is preserved.
\end{proof}

\begin{lemma}
\label{Lemma:R7drop2}
Let $B$ be a FODD, $e_1$ and $e_2$ edges for which P7.3 and V7.4 hold
in addition to conditions for replacing $target(e_2)$  with a constant, and 
$B'$ the result of $\rsevendrop(e_1,e_2)$, then for any interpretation $I$
we have $\map_B(I)=\map_{B'}(I)$. 
\end{lemma}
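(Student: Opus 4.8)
The plan is to follow the template of Lemma~\ref{Lemma:R7drop1}, establishing the two inequalities $\map_{B'}(I)\le\map_B(I)$ and $\map_{B'}(I)\ge\map_B(I)$ separately, but replacing the uniform value gap used there (V7.2) by the subtraction-based condition V7.4, and compensating with the variable-agreement guarantee of P7.3.

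First I would observe that dropping $q=source(e_2)$ changes the behaviour of $B$ on a valuation $\zeta$ only when $\zeta$ reaches $q$ and follows $e_2$ in $B$: such a $\zeta$ is redirected to $target(sibling(e_2))$ in $B'$, whereas every other valuation reaches the same leaf in $B$ and $B'$, since the modification is local to the edges entering $q$. So the whole argument reduces to controlling these redirected valuations. The easy direction, $\map_{B'}(I)\ge\map_B(I)$, then follows from the replacement conditions together with S1: by Lemma~\ref{Lemma:R7replace1} or Lemma~\ref{Lemma:R7replace2} the value $\map_B(I)$ is attained by a valuation that does not take the $e_2$ branch, and S1 guarantees that this valuation, reaching either $target(e_1)$ or a part of the diagram untouched by the drop, keeps its value in $B'$. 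This is precisely why the replacement conditions are needed here: they rule out the map ever being determined by $target(e_2)$, so that losing access to it cannot lower the map.

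The substantive direction is $\map_{B'}(I)\le\map_B(I)$, i.e.\ that dropping $q$ cannot create new value. I would take $\zeta^*$ attaining $\map_{B'}(I)$; if $\zeta^*$ keeps its $B$-value there is nothing to prove, so assume $\zeta^*$ is redirected, reaching $target(sibling(e_2))$ in $B'$ with value $v_1$. Since $\zeta^*$ reaches $e_2$ in $B$, condition P7.3 supplies a valuation $\zeta_2$ that reaches $e_1$ and agrees with $\zeta^*$ on every variable common to $target(e_1)$ and $target(sibling(e_2))$. The crux is then to build a single valuation $\eta$ that agrees with $\zeta_2$ on the variables of $target(e_1)$ and with $\zeta^*$ on the variables of $target(sibling(e_2))$; this is well defined precisely because P7.3 forces the two to coincide on the overlap, so there is no conflict. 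Because the subtraction $G=target(e_1)\ominus target(sibling(e_2))$ in V7.4 is computed propositionally, treating shared variables as identical, its non-negativity applied to $\eta$ yields $\map_{target(e_1)}(I,\zeta_2)=\map_{target(e_1)}(I,\eta)\ge \map_{target(sibling(e_2))}(I,\eta)=\map_{target(sibling(e_2))}(I,\zeta^*)=v_1$. Since $\zeta_2$ reaches $target(e_1)$ in $B$ (S1 preserves its reachability and subdiagram), this gives $\map_B(I)\ge v_1=\map_{B'}(I)$, completing the inequality and hence the equality of the maps.

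I expect the main obstacle to be exactly the construction and justification of the combined valuation $\eta$: the comparison we actually need is \emph{across} two different valuations ($\zeta_2$ on one branch target, $\zeta^*$ on the other), whereas V7.4 only compares the two targets under a single valuation. Bridging this gap is the sole purpose of the agreement clause in P7.3, and making that bridge precise---identifying exactly which variables must agree and verifying that $\eta$ is consistent---is the delicate step. This is the node-dropping, sibling-target analogue of the way P7.2 compensates for the weaker value test V7.3 in Lemma~\ref{Lemma:R7replace2}.
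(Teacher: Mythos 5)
Your proof is correct and follows essentially the same route as the paper's: the replace conditions ensure no value is lost, and P7.3 together with V7.4 ensures redirected valuations cannot gain value, because any redirected valuation's new value in $target(sibling(e_2))$ is dominated by the matching valuation's value in $target(e_1)$. Your explicit construction of the combined valuation $\eta$ makes rigorous what the paper states only in contrapositive form (``otherwise there will be a branch in $G$ leading to a negative value''), but the underlying idea is identical.
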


\begin{proof}
Consider any valuation $\zeta_1$ reaching $target(e_2)$. As above its value 
is dominated by another valuation reaching $target(e_1)$. When we remove $q=source(e_2)$
the valuation will reach $target(sibling(e_2))$ and by the conditions P7.3 and V7.4, 
the valuation $\zeta_2$ will reach 
leaf of greater value in $target(e_1)$(otherwise there will be a branch in G leading to a
negative value). So under maximum aggregation the map
is not changed.
\end{proof}

To summarize if 
P7.1 and V7.1 and S1 hold 
or P7.2 and V7.3 and S1 hold 
then we can replace $target(e_2)$ with a
constant. 
If we can replace and V7.2 or both P7.3 and V7.4 hold then we can drop 
$q=source(e_2)$ completely.
 
In the following we provide a more detailed analysis of applicability and variants of R7.

\subsubsection{R6: A Special Case of R7}
\label{Sec:R6}
We have a special case of R7 when $p=q$, 
i.e., $e_1$ and $e_2$ are siblings.
In this context R7 can be considered to 
focus on a single node $n$ instead of two edges.
Assuming that $e_1=n\downt$ and $e_2=n\downf$,
we can rewrite the conditions in R7 as follows.

{\cond {P7.1}:}
$\B\models [\exists \vec{x}, \NF(n)]\rightarrow [\exists
  \vec{x},\vec{y}, \EF(n\downt)]$.
This condition requires that if $n$ is reachable then $n\downt$ is reachable.

{\cond {P7.2}:}
 $\B\models \forall \vec{r}, [\exists \vec{v}, \NF(n)]\rightarrow
           [\exists \vec{v}, \vec{w}, \EF(n\downt)]$
where $\vec{r}$ are the variables that appear in both $n\downt$ and $n\downf$,
$\vec{v}$ the variables that appear in $\NF(n)$ but not in $\vec{r}$, 
and $\vec{w}$ the variables in $l(n)$ and not in $\vec{r}$ or $\vec{v}$.

{\cond {P7.3}:}
$\B\models \forall \vec{u}, [\exists \vec{v}, \NF(n)]\rightarrow
           [\exists \vec{v}, \vec{w}, \EF(n\downt)]$
where $\vec{u}$ are the variables that appear in $n\downt$ 
(since $sibling(e_2)=e_1$),
$\vec{v}$ the variables that appear in $\NF(n)$ but not in $\vec{u}$,
and $\vec{w}$ the variables in $l(n)$ and not in $\vec{u}$ or $\vec{v}$.

{\cond {V7.1}:} $\min(n\downt)\geq\max(n\downf)$.

{\cond {V7.2}:} $n\downt$ is a constant.

{\cond {V7.3}:} all leaves in the diagram $D=n\downt \ominus n\downf$ have non-negative values.

Conditions S1 and V7.4 are always true.
We have previously analyzed this special case as a separate
reduction operator named R6 \cite{WangJoKh2007}.
While this is a special case, it may still be useful to check for it
separately before applying the generalized case
of R7, as it provides large reductions and seems to occur
frequently in example domains.

An important special case of R6 occurs when $l(n)$ is an 
equality $t_1 = y$ where $y$ is a variable that does not occur in the
FODD above node $n$. In this case, the condition P7.1
holds since we can choose the value of $y$. 
We can also enforce the equality in the sub-diagram of $n\downt$.
Therefore if V7.1 holds we can remove the node $n$ connecting its parents to
$n\downt$ and substituting $t_1$ for $y$ in the diagram $n\downt$.
(Note that we may need to make copies of  nodes when doing this.)
In Section~\ref{Sec:equality} we introduce
a more elaborate reduction to handle equalities 
by taking a maximum over the left and the right children.

\subsubsection{Application Order}
  
In some cases several instances of R7 are applicable. It turns out that
the order in which we apply them is important.
In the following, the first example shows that 
the order affects the number of steps needed to reduce the diagram.
The second example shows that the order affects 
the final result. 

Consider the FODD in Figure~\ref{Fig:R7order}(a). R7 is applicable to edges
 $e_1=[p(x_1,y_1)]\downt$ and $e_2=[p(x_2,y_2)]\downt$, 
and $e_1'=[q(x_3)]\downt$ and $e_2'=[q(x_2)]\downt$. 
If we reduce in a top down manner, i.e., first apply R7 on the pair
 $[p(x_1,y_1)]\downt$ and $[p(x_2,y_2)]\downt$, we will get the FODD in Figure~\ref{Fig:R7order}(b),
and then we apply R7 again on $[q(x_3)]\downt$ and $[q(x_2)]\downt$, and we will get the FODD in 
Figure~\ref{Fig:R7order}(c). However,
if we apply R7 first on $[q(x_3)]\downt$ and $[q(x_2)]\downt$ thus getting Figure~\ref{Fig:R7order}(d),
R7 cannot be applied to
 $[p(x_1,y_1)]\downt$ and $[p(x_2,y_2)]\downt$
 because  $[p(x_1,y_1)]\downt \ominus [p(x_2,y_2)]\downt$ will have negative leaves.
In this case, the diagram can still be reduced.
We can reduce 
by comparing $[q(x_3)]\downt$ and $[q(x_2)]\downt$ that is in the right part of FODD. 
We can first remove $q(x_2)$ and get a FODD shown 
in Figure~\ref{Fig:R7order}(e),
 and then use the neglect operator to remove $p(x_2,y_2)$.
As we see in this example applying one instance of R7
may render other instances not applicable 
or may introduce more possibilities for reductions 
so in general we must apply the reductions sequentially.
\citeA{Wang2007} develops conditions under which 
several instances of R7 can be applied simultaneously.

\begin{figure}[t]
\begin{center}
\centerline{\psfig{figure=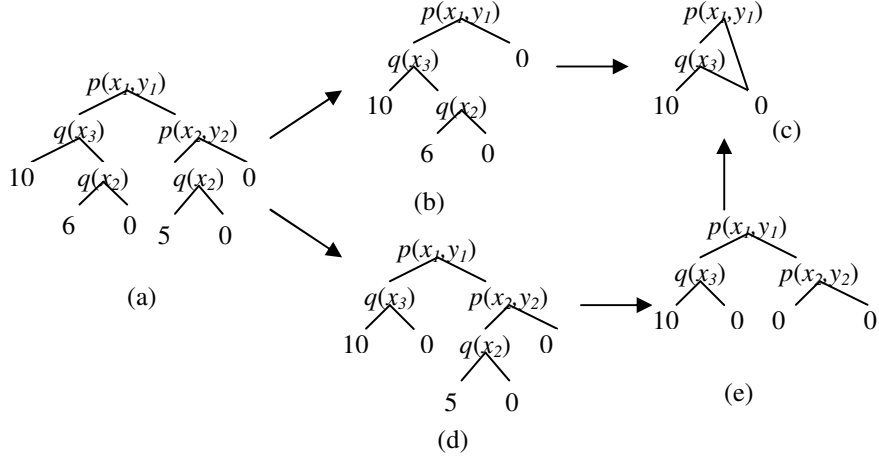}}
\caption{An example illustrating the effect of application order for R7.}
\label{Fig:R7order}
\end{center}
\vskip -0.2in
\end{figure}

One might hope that repeated application of R7 will lead to a unique reduced result but this 
is not true.
In fact, the final result depends on the choice of operators and the order of application. 
Consider Figure~\ref{Fig:R7order3}(a). 
 R7 is applicable to edges
 $e_1=[p(x)]\downt$ and $e_2=[p(y)]\downt$, 
and $e_1'=[q(x)]\downt$ and $e_2'=[q(y)]\downt$. 
If we reduce in a top down manner, i.e., first apply R7 on the pair
 $[p(x)]\downt$ and $[p(y)]\downt$, we will get the FODD in Figure~\ref{Fig:R7order3}(b),
which cannot be reduced using existing reduction operators (including 
the operator R8 introduced below).
However,
if we apply R7 first on $[q(x)]\downt$ and $[q(y)]\downt$ 
we will get Figure~\ref{Fig:R7order3}(c). Then we can apply R7 again on
$e_1=[p(x)]\downt$ and $e_2=[p(y)]\downt$ and get the final result
Figure~\ref{Fig:R7order3}(d), which is clearly more compact than Figure~\ref{Fig:R7order3}(b).
It is interesting that the first example seems to suggest 
applying R7 in a top down manner (since it takes fewer steps), 
while the second 
seems to suggest the opposite (since the final result is more compact). 
More research is needed to develop
useful heuristics to guide the choice of reductions and the application order
and in general develop a more complete set of reductions.

Note that we could also consider generalizing R7. 
In Figure~\ref{Fig:R7order3}(b), if we can reach $[q(y)]\downt$ then
clearly we can reach $[p(x)]\downt$ or $[q(x)]\downt$. Since both
$[p(x)]\downt$ and  $[q(x)]\downt$ give better values, we can safely replace
$[q(y)]\downt$ with $0$, thus obtaining the final result Figure~\ref{Fig:R7order3}(d).
In theory we can generalize P7.1 as
$\B\models [\exists \vec{x}, \EF(e_2)]
\rightarrow [\exists \vec{y_1}, \EF(e_{11})]\vee \cdots \vee [\exists \vec{y_n}, \EF(e_{1n})]$
where $\vec{x}$ are the variables in $\EF(e_2)$ 
and $\vec{y_i}$ the variables in $\EF(e_{1i})$ where $1\leq i\leq n$,
and generalize the corresponding value condition V7.1 as
$\forall i\in [1,n],\min(target(e_{1i}))\geq\max(target(e_2))$.
We can generalize other reachability and value conditions similarly.
However the resulting conditions are too expensive to test in practice.

\begin{figure}[tb]
\begin{center}
\centerline{\psfig{figure=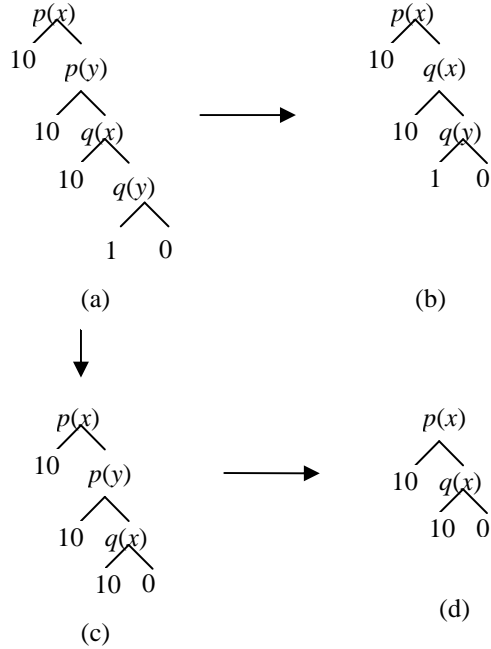}}
\caption{An example illustrating that the final result of R7 reductions
  is order dependent.}
\label{Fig:R7order3}
\end{center}
\vskip -0.2in
\end{figure}

\subsubsection{Relaxation of Reachability Conditions}

The conditions P7.2 and P7.3 are sufficient, but not necessary to guarantee correct reductions.
Sometimes valuations just need to agree on a smaller set of variables than the intersection
of variables. To see this,
consider the example as shown in  Figure~\ref{Fig:minsetvar}, where
$A\ominus B>0$ and the intersection is $\{x, y, z\}$.
However, to guarantee $A\ominus B>0$ we just need to agree on either $\{x,y\}$ or $\{x,z\}$.
Intuitively we have to agree on the variable $x$ to avoid the situation when
two paths $p(x,y) \wedge \neg q(x)$ and $p(x,y) \wedge q(x)\wedge h(z)$ can co-exist.
In order to prevent the co-existence of two paths  $\neg p(x,y) \wedge \neg h(z)$ and
$p(x,y) \wedge q(x)\wedge h(z)$, either $y$ or $z$ has to be the same as well.
Now if we change this example a little bit and
replace each  $h(z)$ with $h(z,v)$, then we have two minimal sets of variables
of different size, one is $\{x,y\}$, and the other is $\{x,z,v\}$.
As a result we cannot identify a minimum set of variables for the subtraction 
and must either choose the intersection or
heuristically identify a minimal set, for example, using a greedy procedure.

\begin{figure}[tbhp]
\begin{center}
\centerline{\psfig{figure=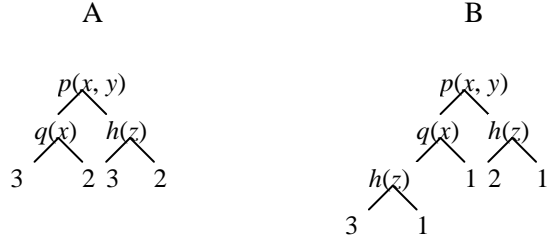}}
\caption{An example illustrating that the minimal set of variables for
  subtraction is not unique.}
\label{Fig:minsetvar}
\end{center}
\vskip -0.2in
\end{figure} 

\subsection{(R8) Weak Reduction by Unification}

Consider a FODD $B$. Let $\vec{v}$ denote its variables, and let
$\vec{x}$ and $\vec{y}$ be disjoint subsets of  $\vec{v}$, which 
are of the same cardinality.
We define the operator $\reightunify(B,\vec{x},\vec{y})$ as replacing
variables in $\vec{x}$ by the corresponding variables in $\vec{y}$.
We denote the resulting FODD by $B\{\vec{x}/\vec{y}\}$ so the result
has variables in $\vec{v}$ \texttt{\char92}$\vec{x}$.
We have the following condition for the correctness of R8:

{\cond {V8}:} all leaves in $B\{\vec{x}/\vec{y}\}\ominus B$ are non negative.

\begin{lemma}
Let $B$ be a FODD, 
$B'$ the result of $\reightunify(B,\vec{x},\vec{y})$ for which V8 holds, 
then for any interpretation $I$
we have $\map_B(I)=\map_{B'}(I)$. 
\end{lemma}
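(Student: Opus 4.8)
The plan is to show that the unification operator can only increase the map of the diagram, and that condition V8 forces the increase to be exactly zero, so the map is preserved. The essential idea is that unifying variables in $\vec{x}$ with those in $\vec{y}$ merely restricts the set of valuations we are allowed to use, and under maximum aggregation a restriction of the valuation space can never increase the aggregated value; condition V8 then rules out any strict decrease.

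First I would set up the correspondence between valuations of $B$ and valuations of $B' = B\{\vec{x}/\vec{y}\}$. A valuation $\zeta'$ for $B'$ assigns domain elements to the variables in $\vec{v} \setminus \vec{x}$. I would extend each such $\zeta'$ to a valuation $\zeta$ of $B$ by setting $\zeta(x_i) = \zeta'(y_i)$ for each corresponding pair, and $\zeta = \zeta'$ on all other variables; call these the \emph{diagonal} valuations of $B$. The key syntactic observation is that, because $B'$ is obtained from $B$ by the textual substitution $\{\vec{x}/\vec{y}\}$, for every such extended $\zeta$ we have $\map_B(I,\zeta) = \map_{B'}(I,\zeta')$. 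This is just the statement that renaming a variable and then evaluating gives the same result as evaluating on the valuation that already identifies the two variables, and it holds for every fixed $I$ because for a fixed valuation the FODD is purely propositional.

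Next I would use this to sandwich $\map_{B'}(I)$ between two copies of $\map_B(I)$. On one side, since every diagonal valuation $\zeta$ is in particular \emph{some} valuation of $B$, and $\map_B(I)$ is the maximum over \emph{all} valuations of $B$, we get $\map_{B'}(I) = \max_{\zeta'} \map_{B'}(I,\zeta') = \max_{\text{diagonal }\zeta} \map_B(I,\zeta) \leq \max_{\zeta} \map_B(I,\zeta) = \map_B(I)$. This gives $\map_{B'}(I) \leq \map_B(I)$ and uses nothing but the maximum aggregation. For the reverse inequality I would invoke condition V8 together with the correctness of Apply (Lemma 2): V8 says every leaf of $B' \ominus B$ is non-negative, i.e.\ $\map_{B'}(I,\eta) \geq \map_B(I,\eta)$ for \emph{every} valuation $\eta$ and every $I$. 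Taking the maximum over $\eta$ on both sides yields $\map_{B'}(I) \geq \map_B(I)$.

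Combining the two inequalities gives $\map_B(I) = \map_{B'}(I)$ for every interpretation $I$, which is exactly the claim. The step I expect to require the most care is the syntactic identity $\map_B(I,\zeta) = \map_{B'}(I,\zeta')$ for the diagonal valuations: one must check that textual substitution of variables commutes with path traversal at every node label (atoms and equalities alike), so that the same path and hence the same leaf is reached. This is routine but is where all the bookkeeping lives; the two inequalities themselves follow cleanly once that identity and the per-valuation reading of V8 are in hand.
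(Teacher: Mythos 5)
Your proposal is correct and follows essentially the same argument as the paper: one inequality comes from reading V8 per-valuation (via correctness of Apply) so no value is lost, and the other comes from mapping each valuation of $B'$ to the ``diagonal'' valuation of $B$ that assigns each $x_i$ the value of $y_i$, so no value is gained. Your write-up is simply a more explicit version of the paper's proof, with the syntactic identity $\map_B(I,\zeta)=\map_{B'}(I,\zeta')$ spelled out rather than asserted.
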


\begin{proof}
Consider any valuation $\zeta_1$ to $\vec{v}$ in $B$.
By V8, $B\{\vec{x}/\vec{y}\}$ gives a better value on the same valuation. Therefore
we do not lose any value by this operator.
We also do not gain any extra value. Consider any valuation $\zeta_2$ to
variables in $B'$ reaching a leaf node with value $v$,
we can construct a valuation $\zeta_3$ to $\vec{v}$ in $B$ with all variables
in $\vec{x}$ taking the corresponding value in $\vec{y}$, and it will reach 
a leaf node in $B$ with the same value.
Therefore the map will not be changed by unification.
\end{proof}

Figure~\ref{Fig:unification}
illustrates that in some cases R8 is applicable where R7 
is not.
We can apply R8 with $\{x_1/x_2\}$ 
to get a FODD as shown in Figure~\ref{Fig:unification}(b).
Since $(b)\ominus (a)\geq 0$, $(b)$ becomes the result 
after reduction. Note that if we unify in the other way, 
i.e.,$\{x_2/x_1\}$, we will get Figure~\ref{Fig:unification}(c), 
it is isomorphic to Figure~\ref{Fig:unification}(b), but we cannot reduce
the original FODD to this result, because $(c)\ominus (a)\not\geq 0$.
This phenomenon happens since 
the subtraction operation (implemented by Apply) used in 
the reductions is  propositional 
and therefore sensitive to variable names.

\begin{figure}[tbhp]
\vskip 0.2in
\begin{center}
\centerline{\psfig{figure=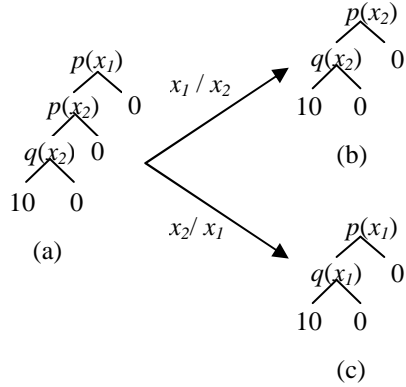}}
\caption{An example illustrating R8.}
\label{Fig:unification}
\end{center}
\vskip -0.2in
\end{figure} 

\subsection{(R9) Equality Reduction}
\label{Sec:equality}

Consider a FODD $B$ with an equality node $n$ labeled $t=x$.
Sometimes we can drop $n$ and connect its parents to
a sub-FODD that is the result of taking the maximum of the left and the right children of $n$. 
For this reduction to be applicable $B$ has to satisfy the following condition.

{\cond {E9.1}:}
For an equality node $n$ labeled $t=x$ at least one of $t$ and $x$ is a variable 
and it appears neither in $n\downf$ nor in the node formula for $n$. 
To simplify the description of the reduction procedure below, we assume that $x$ is that variable.

Additionally we make the following assumption about the domain.

{\cond {D9.1}:} 
The domain contains more than one object.

The above assumption guarantees that valuations reaching the right child of equality nodes exist. 
This fact is needed in proving correctness of the Equality reduction operator. 
First we describe the reduction procedure for $\rnine(n)$. 
Let $B_n$ denote the FODD rooted at node $n$ in FODD $B$. 
We extract a copy of $B_{n\downt}$ (and name it $B_{n\downt}\mbox{-copy}$), 
and a copy of $B_{n\downf}$ ($B_{n\downf}\mbox{-copy}$) from $B$. 
In $B_{n\downt}\mbox{-copy}$, we rename the variable $x$ to $t$ to 
produce diagram $B'_{n\downt}\mbox{-copy}$. 
Let $B_n'=Apply(B'_{n\downt}\mbox{-copy}, B_{n\downf}\mbox{-copy}, max)$. 
Finally we drop the node $n$ in $B$ and 
connect its parents to the root of $B_n'$ to obtain the final result $B'$.
An example is shown in Figure~\ref{Fig:eqRed1}.

Informally, we are extracting the parts of the FODD rooted at node $n$,
one where $x = t$ (and renaming $x$ to $t$ in that part) and 
one where $x \neq t$. 
The condition E9.1 and the assumption D9.1 guarantee that 
regardless of the value of $t$, we have valuations reaching both parts. 
Since by the definition of $\map$, we maximize over the valuations, 
in this case we can maximize over the diagram structure itself.
We do this by calculating the function which is the maximum of the two
functions corresponding to the two children of $n$ (using $Apply$) 
and 
replacing the old sub-diagram rooted at node $n$ by the new combined diagram. 
Theorem \ref{EqRedThm} proves that this does not affect the map of $B$.

One concern for implementation is that we simply replace the old sub-diagram 
by the new sub-diagram, which may result in a diagram where strong
reductions are applicable.
While this is not a problem semantically, 
we can avoid the need for strong reductions by using $Apply$ 
that implicitly performs strong reductions R1(neglect) and R2(join) as follows.

Let  $B_a$ denote the FODD resulting from replacing node $n$ in $B$ with $0$, and 
     $B_b$ the FODD resulting from replacing node $n$ with $1$ and all leaves other than node $n$ by $0$, 
we have  the final result $B' = B_a \oplus B_b'$ where $B_b'=B_b \otimes B_n'$. 
By correctness of $Apply$ the two forms of calculating $B'$
give the same map.

\begin{figure}[tbhp]
\vskip 0.2in
\begin{center}
\centerline{\psfig{figure=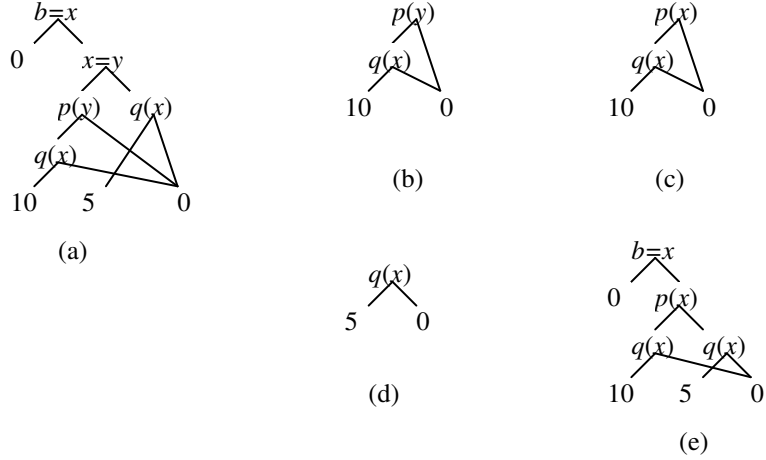}}
\caption{An example of the equality reduction.
(a) The FODD before reduction. The node $x = y$ satisfies condition E9.1 for variable $y$. 
(b) $B_{n\downt}\mbox{-copy}$ ($n\downt$ extracted).
(c) $B_{n\downt}\mbox{-copy}$ renamed to produce $B'_{n\downt}\mbox{-copy}$.
(d) $B_{n\downf}\mbox{-copy}$.
(e) Final result with node $n$ replaced by $apply(B'_{n\downt}\mbox{-copy}, B_{n\downf}\mbox{-copy}, max)$}
\label{Fig:eqRed1}
\end{center}
\vskip -0.2in
\end{figure}

In the following we prove that for any node $n$ 
where equality condition E9.1 holds in $B$ we can perform 
the equality reduction $\rnine$ without changing the map for any
interpretation satisfying D9.1.
We start with properties of FODDs defined above, 
e.g., $B_a$, $B_b$, and $B_b'$. 
Let $\Gamma_n$ denote the set of all valuations reaching node $n$ and 
let $\Gamma_m$ denote the set of all valuations not reaching node $n$
in B. 
From the basic definition of $\map$ we have the following:

\begin{claim}
\label{Cl:BaBb}
For any interpretation $I$, \\
(a) $\forall$ $\zeta \in \Gamma_m$, $\map_{B_a}(I, \zeta) = \map_B(I, \zeta)$.\\
(b) $\forall$ $\zeta \in \Gamma_n$, $\map_{B_a}(I, \zeta) = 0$.\\
(c) $\forall$ $\zeta \in \Gamma_m$, $\map_{B_b}(I, \zeta) = 0$.\\
(d) $\forall$ $\zeta \in \Gamma_n$, $\map_{B_b}(I, \zeta) = 1$.
\end{claim}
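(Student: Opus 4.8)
The plan is to exploit the fact that, for a fixed valuation $\zeta$, a FODD is purely propositional: every node label evaluates to \true or \false under $\zeta$, so $\zeta$ determines a unique path from the root to a single leaf, and $\map_B(I,\zeta)$ is simply the value of that leaf. The entire claim then reduces to tracking, in each of $B_a$ and $B_b$, where this unique path terminates.

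First I would record the structural relationship between the three diagrams. Both $B_a$ and $B_b$ are obtained from $B$ by editing only node $n$ (replacing it, together with the sub-FODD hanging below it, by a single leaf) and, in the case of $B_b$, relabeling the remaining leaves to $0$; the internal decision nodes strictly above $n$, and all edges among them, are left untouched. Consequently the initial segment of the path chosen by $\zeta$ --- from the root up to the point where $n$ would first be encountered --- is identical in $B$, $B_a$, and $B_b$. In particular the event ``$\zeta$ reaches $n$'' is the same in all three diagrams, which is exactly what licenses us to use the sets $\Gamma_n$ and $\Gamma_m$, defined relative to $B$, when reasoning about $B_a$ and $B_b$.

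Then I would split into the four cases. For (a) and (c), take $\zeta\in\Gamma_m$: the path never visits $n$, so it terminates at one of the original leaves of $B$. In $B_a$ those leaves are unchanged, so the path ends at the same leaf with the same value, giving $\map_{B_a}(I,\zeta)=\map_B(I,\zeta)$; in $B_b$ every original leaf has been set to $0$, so the path ends at a $0$ leaf, giving $\map_{B_b}(I,\zeta)=0$. For (b) and (d), take $\zeta\in\Gamma_n$: the path reaches $n$, which is now a leaf valued $0$ in $B_a$ and valued $1$ in $B_b$, so $\map_{B_a}(I,\zeta)=0$ and $\map_{B_b}(I,\zeta)=1$.

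I do not anticipate a genuine obstacle; the content is entirely the deterministic, propositional character of single-valuation evaluation. The one point demanding care is the bookkeeping in the second step --- verifying that the surgery producing $B_a$ and $B_b$ leaves the decision structure above $n$ intact, so that ``reaching $n$'' is well defined across the three diagrams and the partition $\Gamma_n\cup\Gamma_m$ carries over verbatim. Once that is in place, the four statements are immediate readings of where the unique $\zeta$-path terminates.
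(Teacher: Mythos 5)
Your proposal is correct and is essentially the paper's own treatment: the paper states this claim with no explicit argument, introducing it by ``From the basic definition of $\map$ we have the following,'' and your write-up simply spells out that immediate reasoning---under a fixed valuation the FODD evaluates propositionally along a unique path, the surgery producing $B_a$ and $B_b$ leaves every node and edge traversed before (or instead of) $n$ intact, and the four cases read off the value of the terminal leaf. The one point you flag as needing care (that ``reaching $n$'' means the same thing in $B$, $B_a$, and $B_b$) is exactly the right bookkeeping, and it holds for the reason you give.
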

From Claim~\ref{Cl:BaBb} and the definition of $\map$, we have,

\begin{claim}
\label{Cl:B'bB'n}
For any interpretation $I$, \\
(a) $\forall$ $\zeta \in \Gamma_m$,
$\map_{B_b'}(I, \zeta) = 0$.\\
(b) $\forall$ $\zeta \in \Gamma_n$,
$\map_{B_b'}(I, \zeta) = \map_{B'_n}(I, \zeta)$.
\end{claim}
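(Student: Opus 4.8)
The plan is to reduce the claim entirely to the correctness of Apply together with Claim~\ref{Cl:BaBb}, exploiting the fact that both assertions are stated per fixed valuation $\zeta$, so no aggregation over valuations is involved and the reasoning stays propositional. First I would recall the construction $B_b' = B_b \otimes B_n'$, where the operator $\otimes$ is realized by $Apply(B_b, B_n', \mbox{multiply})$. Since Apply is correct (the lemma on combining FODDs), for every interpretation $I$ and every valuation $\zeta$ we have
\[
\map_{B_b'}(I,\zeta) = \map_{B_b}(I,\zeta)\cdot \map_{B_n'}(I,\zeta).
\]
This single identity is the engine of the whole proof: it lets me transfer the known behavior of $\map_{B_b}$ from Claim~\ref{Cl:BaBb} directly onto $\map_{B_b'}$.

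Next I would split on whether $\zeta$ reaches node $n$. For part (a), take any $\zeta \in \Gamma_m$. By Claim~\ref{Cl:BaBb}(c) we have $\map_{B_b}(I,\zeta)=0$, so the product above is $0 \cdot \map_{B_n'}(I,\zeta) = 0$, which is exactly the claim $\map_{B_b'}(I,\zeta)=0$. For part (b), take any $\zeta \in \Gamma_n$. By Claim~\ref{Cl:BaBb}(d) we have $\map_{B_b}(I,\zeta)=1$, so the product is $1 \cdot \map_{B_n'}(I,\zeta) = \map_{B_n'}(I,\zeta)$, which is the claim $\map_{B_b'}(I,\zeta)=\map_{B_n'}(I,\zeta)$. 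Each case is a one-line computation once the Apply identity is in place.

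There is no genuinely hard step here; the claim is essentially a bookkeeping consequence of the previous claim. The one point that deserves care is justifying the multiplicative identity for $\map_{B_b'}$ at the level of individual valuations rather than at the level of the aggregated map. I would emphasize that the correctness of Apply is precisely a per-$\zeta$ statement (for any $I$ and $\zeta$, $\map_A(I,\zeta)\,\mbox{op}\,\map_B(I,\zeta)=\map_C(I,\zeta)$), so it applies verbatim with $\mbox{op}$ being multiplication, and that $B_b$ and $B_n'$ share the same variable context, so a fixed $\zeta$ assigns values to the variables of both diagrams consistently. With that observation recorded, both parts follow immediately, and the claim is discharged.
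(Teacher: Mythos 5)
Your proof is correct and follows essentially the same route as the paper, which simply asserts the claim "from Claim 1 and the definition of $\map$"; your use of the per-valuation correctness of Apply (with multiplication as the operation) applied to $B_b' = B_b \otimes B_n'$, combined with Claim 1(c) and 1(d), is exactly the argument the paper leaves implicit. The one point you flag — that the Apply lemma holds per fixed $\zeta$ over the shared variable context — is the right detail to make explicit.
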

From Claim~\ref{Cl:BaBb}, Claim~\ref{Cl:B'bB'n}, and the definition of $\map$ we have,

\begin{claim}
\label{Cl:B'B'n}
For any interpretation $I$,\\
(a) $\forall$ $\zeta \in \Gamma_m$, $\map_{B'}(I, \zeta) = \map_B(I, \zeta)$.\\
(b) $\forall$ $\zeta \in \Gamma_n$, $\map_{B'}(I, \zeta) = \map_{B'_n}(I, \zeta)$.
\end{claim}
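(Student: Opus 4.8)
The plan is to read the claim directly off the algebraic structure $B' = B_a \oplus B_b'$ together with the correctness of Apply. Since $\oplus$ is pointwise addition and $B' = Apply(B_a, B_b', +)$, the correctness of Apply gives, for every interpretation $I$ and every valuation $\zeta$, the identity $\map_{B'}(I,\zeta) = \map_{B_a}(I,\zeta) + \map_{B_b'}(I,\zeta)$. This single equation is the engine of the whole argument: it lets me evaluate $B'$ on any fixed $\zeta$ by adding the two component maps, both of which I already control through Claim~\ref{Cl:BaBb} and Claim~\ref{Cl:B'bB'n}.

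I would then split on whether $\zeta$ reaches node $n$. For part (a), take $\zeta \in \Gamma_m$: Claim~\ref{Cl:BaBb}(a) replaces the first summand by $\map_B(I,\zeta)$ and Claim~\ref{Cl:B'bB'n}(a) sends the second summand to $0$, so $\map_{B'}(I,\zeta) = \map_B(I,\zeta)$. For part (b), take $\zeta \in \Gamma_n$: Claim~\ref{Cl:BaBb}(b) annihilates the first summand and Claim~\ref{Cl:B'bB'n}(b) rewrites the second as $\map_{B'_n}(I,\zeta)$, giving $\map_{B'}(I,\zeta) = \map_{B'_n}(I,\zeta)$. Since every valuation lies in exactly one of $\Gamma_m$ or $\Gamma_n$, the two cases are exhaustive and the claim follows.

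The step I expect to be most delicate is not the arithmetic but the bookkeeping over variables: $B_a$, $B_b'$, and hence $B'$ no longer mention the variable $x$ (by E9.1 it appears only at and below node $n$), whereas the valuations in $\Gamma_m, \Gamma_n$ are defined relative to $B$. I would make explicit that each $\map$ expression uses the restriction of $\zeta$ to the variables of the relevant diagram, so that a single common $\zeta$ can be fed to $B_a$, $B_b'$, $B'$, $B'_n$, and $B$ simultaneously and the correctness of Apply applies verbatim. Once that convention is fixed, the proof is a one-line substitution in each case, and the genuine content of the reduction already resides in Claim~\ref{Cl:BaBb} and Claim~\ref{Cl:B'bB'n}.
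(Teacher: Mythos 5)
Your proof is correct and takes essentially the same route the paper intends: the paper presents Claim~\ref{Cl:B'B'n} as an immediate consequence of Claim~\ref{Cl:BaBb}, Claim~\ref{Cl:B'bB'n}, and the definition of $\map$ under the decomposition $B' = B_a \oplus B_b'$, which is exactly your Apply-based pointwise sum followed by the case split on $\Gamma_m$ versus $\Gamma_n$. Your added care about restricting a common valuation $\zeta$ to the variables of each diagram (since $x$ no longer appears in $B_a$, $B_b'$, or $B'$) is a harmless sharpening of a point the paper leaves implicit.
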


Next we prove the main property of this reduction stating 
that for all valuations reaching node $n$ in $B$, 
the old sub-FODD rooted at $n$ and the new (combined) sub-FODD produce 
the same map.

\begin{lemma}
\label{Lem:B'n}
Let $\Gamma_n$ be the set of valuations reaching node $n$ in FODD $B$. 
For any interpretation $I$ satisfying D9.1, 
$\max_{\zeta \in \Gamma_n} \map_{B_n}(I, \zeta) = \max_{\zeta \in \Gamma_n} \map_{B'_n}(I, \zeta)$.
\end{lemma}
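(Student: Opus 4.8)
The plan is to reduce the claimed equality of maximized values to a pointwise (fixed-valuation) comparison and then exploit maximum aggregation together with the two hypotheses E9.1 and D9.1. First I would record the effect of the renaming step: since in $B'_{n\downt}\mbox{-copy}$ the variable $x$ has been replaced everywhere by $t$, evaluating it under a valuation $\zeta$ is the same as evaluating the original $B_{n\downt}$ under the valuation $\zeta^t$ that agrees with $\zeta$ on all variables but sends $x$ to $\zeta(t)$; that is, $\map_{B'_{n\downt}\mbox{-copy}}(I,\zeta)=\map_{B_{n\downt}}(I,\zeta^t)$. Combining this with the correctness of Apply applied to $B'_n=Apply(B'_{n\downt}\mbox{-copy},B_{n\downf}\mbox{-copy},\max)$ yields the working identity
\[
\map_{B'_n}(I,\zeta)=\max\{\map_{B_{n\downt}}(I,\zeta^t),\ \map_{B_{n\downf}}(I,\zeta)\}.
\]
On the other side, since the root of $B_n$ is the equality $t=x$, its value under $\zeta$ is $\map_{B_{n\downt}}(I,\zeta)$ when $\zeta(t)=\zeta(x)$ and $\map_{B_{n\downf}}(I,\zeta)$ otherwise. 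I would also note that, because $x$ does not occur in $\NF(n)$ (part of E9.1), membership in $\Gamma_n$ is independent of the value assigned to $x$, so $\Gamma_n$ is closed under arbitrary re-mapping of $x$.

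I would then prove the two inequalities separately. For $\max_{\zeta\in\Gamma_n}\map_{B_n}\le \max_{\zeta\in\Gamma_n}\map_{B'_n}$, I fix $\zeta\in\Gamma_n$ and split on whether $\zeta(t)=\zeta(x)$. If they are equal then $\zeta^t=\zeta$, so $\map_{B_n}(I,\zeta)=\map_{B_{n\downt}}(I,\zeta)=\map_{B_{n\downt}}(I,\zeta^t)\le\map_{B'_n}(I,\zeta)$; if they differ then $\map_{B_n}(I,\zeta)=\map_{B_{n\downf}}(I,\zeta)\le\map_{B'_n}(I,\zeta)$. In both cases the same $\zeta$ serves as witness, so this direction needs no special hypotheses.

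The reverse inequality $\max_{\zeta\in\Gamma_n}\map_{B'_n}\le \max_{\zeta\in\Gamma_n}\map_{B_n}$ is where the hypotheses do the work, and it is the step I expect to be the main obstacle. Fix $\zeta\in\Gamma_n$; by the working identity $\map_{B'_n}(I,\zeta)$ equals one of its two terms. If it equals $\map_{B_{n\downt}}(I,\zeta^t)$, I take $\zeta^t$ as witness: by the closure remark $\zeta^t\in\Gamma_n$, and since $\zeta^t(t)=\zeta^t(x)$ by construction, $\map_{B_n}(I,\zeta^t)=\map_{B_{n\downt}}(I,\zeta^t)$. If instead it equals $\map_{B_{n\downf}}(I,\zeta)$, I cannot always reuse $\zeta$, because $\zeta$ might satisfy $\zeta(t)=\zeta(x)$ and thus take the \true branch in $B_n$. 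Here I invoke D9.1: the domain has at least two elements, so I may pick a value distinct from $\zeta(t)$ and define $\zeta'$ agreeing with $\zeta$ except that $\zeta'(x)\ne\zeta(t)$. Because $x$ occurs neither in $\NF(n)$ nor in $B_{n\downf}$ (both by E9.1), we get $\zeta'\in\Gamma_n$ and $\map_{B_{n\downf}}(I,\zeta')=\map_{B_{n\downf}}(I,\zeta)$, while now $\zeta'$ takes the \false branch, so $\map_{B_n}(I,\zeta')=\map_{B_{n\downf}}(I,\zeta)$. Thus in every case some valuation in $\Gamma_n$ attains the value $\map_{B'_n}(I,\zeta)$, completing the argument. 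This last case is precisely the crux: it is the only place where both E9.1 (absence of $x$ from the node formula and from the \false branch, used to preserve reachability and the false-branch value under re-mapping $x$) and D9.1 (existence of a second domain element, used to force the \false branch) are genuinely needed.
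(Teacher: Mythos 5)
Your proof is correct and uses the same essential ingredients as the paper's: the renaming identity for the true-branch copy, the fact (from E9.1) that reachability of $n$ and the value of $B_{n\downf}$ do not depend on $x$, the correctness of Apply, and D9.1 to obtain a valuation forced down the \false branch. The paper packages these facts differently --- it partitions $\Gamma_n$ into classes $\Gamma_{\Delta}$ fixing every variable except $x$ and shows the classwise maxima coincide, whereas you prove a pointwise identity for $\map_{B'_n}$ and derive the two inequalities via explicit witnesses ($\zeta^t$, and $\zeta'$ with $\zeta'(x)\neq\zeta(t)$) --- but this is the same argument in a different arrangement, if anything spelled out more explicitly than the paper's compressed appeal to the correctness of Apply.
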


\begin{proof}
By condition E9.1, the variable $x$ does not appear in $NF(n)$ 
and hence its value in $\zeta \in \Gamma_n$ is not constrained. 
We can therefore partition the valuations in $\Gamma_n$ into disjoint sets, 
$\Gamma_n = \{\Gamma_{\Delta} \mid \Delta \mbox{ is a valuation to variables other than } x\}$, 
where in $\Gamma_{\Delta}$ variables other than $x$ are fixed to their value 
in $\Delta$ and $x$ can take any value in the domain of $I$. 
Assumption D9.1 guarantees that every $\Gamma_{\Delta}$ 
contains at least one valuation reaching $B_{n\downt}$ and 
at least one valuation reaching $B_{n\downf}$ in $B$. 
Note that if a valuation $\zeta$ reaches $B_{n\downt}$
then $t=x$ is satisfied by $\zeta$ thus 
$\map_{B_{n\downt}}(I,\zeta)=\map_{B'_{n\downt}\mbox{-copy}}(I,\zeta)$.
Since $x$ does not appear in $B_{n\downf}$
we also have that
$\map_{B'_{n\downf}\mbox{-copy}}(I,\zeta)$
is constant for all $\zeta \in \Gamma_{\Delta}$.
Therefore by the correctness of $Apply$ we have
$\max_{\zeta \in \Gamma_{\Delta}}\map_{B_n}(I, \zeta) = \max_{\zeta \in \Gamma_{\Delta}}\map_{B_n'}(I, \zeta)$.

Finally, by the definition of $\map$,
$\max_{\zeta \in \Gamma_n} \map_{B_n}(I, \zeta) = \max_{\Delta}\max_{\zeta \in \Gamma_{\Delta}} \map_{B_n}(I, \zeta)$ $= \max_{\Delta}\max_{\zeta \in \Gamma_{\Delta}} \map_{B_n'}(I, \zeta) = \max_{\zeta \in \Gamma_n} \map_{B_n}(I, \zeta)$.
\end{proof}

\begin{lemma}
\label{EqRedThm}
Let $B$ be a FODD, $n$ a node for which condition E9.1
holds, and $B'$ be the result of $\rnine(n)$, then
for any interpretation $I$ satisfying D9.1, $\map_B(I) = \map_{B'}(I)$.
\end{lemma}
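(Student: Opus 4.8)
The plan is to establish $\map_B(I)=\map_{B'}(I)$ by decomposing the outer maximization that defines each map according to whether a valuation reaches the equality node $n$. Recall that $\map_B(I)=\max_\zeta \map_B(I,\zeta)$ and likewise for $B'$, and that the valuations of $B$ partition into $\Gamma_n$ (those reaching $n$) and $\Gamma_m$ (those not reaching $n$). Since the reduction renames/eliminates the variable $x$, the quantity $\map_{B'}(I,\zeta)$ does not depend on the $x$-component of $\zeta$, so the maximum defining $\map_{B'}(I)$ may be taken over the very same valuation set used for $B$. This is exactly the setting in which Claims \ref{Cl:BaBb}--\ref{Cl:B'B'n} and Lemma \ref{Lem:B'n} were phrased, so I would record this observation first and then work entirely over valuations of $B$.

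First I would split both maps along the partition $\Gamma_m\cup\Gamma_n$:
\[
\map_B(I)=\max\Bigl(\max_{\zeta\in\Gamma_m}\map_B(I,\zeta),\ \max_{\zeta\in\Gamma_n}\map_B(I,\zeta)\Bigr),
\]
\[
\map_{B'}(I)=\max\Bigl(\max_{\zeta\in\Gamma_m}\map_{B'}(I,\zeta),\ \max_{\zeta\in\Gamma_n}\map_{B'}(I,\zeta)\Bigr).
\]
For the $B$-side I would note the basic semantic fact that a valuation $\zeta\in\Gamma_n$ traverses $B$ through $n$ and then continues inside the sub-FODD rooted at $n$, so $\map_B(I,\zeta)=\map_{B_n}(I,\zeta)$; hence the second inner term for $B$ equals $\max_{\zeta\in\Gamma_n}\map_{B_n}(I,\zeta)$.

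Next I would match the two expressions termwise using the previously established facts. Claim \ref{Cl:B'B'n}(a) gives $\map_{B'}(I,\zeta)=\map_B(I,\zeta)$ for every $\zeta\in\Gamma_m$, so the two $\Gamma_m$-terms coincide. Claim \ref{Cl:B'B'n}(b) gives $\map_{B'}(I,\zeta)=\map_{B'_n}(I,\zeta)$ for every $\zeta\in\Gamma_n$, turning the $\Gamma_n$-term for $B'$ into $\max_{\zeta\in\Gamma_n}\map_{B'_n}(I,\zeta)$. Finally, Lemma \ref{Lem:B'n} supplies $\max_{\zeta\in\Gamma_n}\map_{B_n}(I,\zeta)=\max_{\zeta\in\Gamma_n}\map_{B'_n}(I,\zeta)$, which is where condition E9.1 and assumption D9.1 do their real work, since they guarantee that each block $\Gamma_\Delta$ contains valuations reaching both children of $n$. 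Substituting these three identities makes the two outer maxima identical, yielding $\map_B(I)=\map_{B'}(I)$.

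Because all the substantive content has already been carried by the preceding claims and by Lemma \ref{Lem:B'n}, I do not expect a serious obstacle here; the argument is essentially the bookkeeping of a single max-split. The one point requiring explicit care is verifying that $\Gamma_m$ and $\Gamma_n$ genuinely partition a valuation set common to both diagrams, i.e.\ that dropping $x$ in $B'$ does not alter the set of relevant valuations. This is precisely why the claims were stated over valuations of $B$ rather than of $B'$, and it is the detail I would make explicit before performing the substitutions.
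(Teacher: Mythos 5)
Your proof is correct and follows essentially the same route as the paper's: the paper likewise splits $\map_{B'}(I)$ into the maxima over $\Gamma_m$ and $\Gamma_n$, identifies the first term with $\max_{\zeta\in\Gamma_m}\map_B(I,\zeta)$ via Claim~\ref{Cl:B'B'n}(a), and handles the second via Claim~\ref{Cl:B'B'n}(b) together with Lemma~\ref{Lem:B'n}. If anything, you are slightly more careful than the paper in making explicit that both maxima range over a common valuation set (since $\map_{B'}(I,\zeta)$ is insensitive to the $x$-component) and that $\map_B(I,\zeta)=\map_{B_n}(I,\zeta)$ for $\zeta\in\Gamma_n$, both of which the paper leaves implicit.
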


\begin{proof}
Let $X = \max_{\zeta \in \Gamma_m} \map_{B'}(I, \zeta)$ and
$Y = \max_{\zeta \in \Gamma_n} \map_{B'}(I, \zeta)$. 
By the definition of $\map$, $\map_{B'}(I) = max(X, Y)$. 
However, by Claim~\ref{Cl:B'B'n}, 
$X = \max_{\zeta \in \Gamma_m} \map_B(I, \zeta)$ 
and by Claim~\ref{Cl:B'B'n} and Lemma~\ref{Lem:B'n}, 
$Y = \max_{\zeta \in \Gamma_n} \map_{B'_n}(I, \zeta) = \max_{\zeta \in \Gamma_n} \map_{B_n}(I, \zeta)$. 
Thus $max(X, Y) = \map_B(I) = \map_{B'}(I)$.
\end{proof}

While Lemma~\ref{EqRedThm} guarantees correctness, 
when applying it in practice 
it may be important to avoid violations of the sorting order 
(which would require expensive re-sorting of the diagram). 
If both $x$ and $t$ are variables we can sometimes replace both with a new
variable name so the resulting diagram is sorted. 
However this is not always possible.
When such a violation is unavoidable, 
there is a tradeoff between performing the reduction and 
sorting the diagram and ignoring the potential reduction. 

To summarize,
this section introduced several new reductions that can compress
diagrams significantly.
The first (R5) is 
a generic strong reduction that removes
implied branches in a diagram.
The other three (R7, R8, R9) are weak reductions that do not alter the
overall map
of the diagram but do alter the map for specific valuations.
The three reductions are complementary since they capture different
opportunities for space saving.

\section{Decision Diagrams for MDPs}
\label{Sec:relationalRep}

In this section we show how FODDs can be used to capture a RMDP.
We
therefore use FODDs to represent 
the domain dynamics of deterministic action alternatives,
the probabilistic choice of action alternatives, the reward function, and value functions. 

\subsection{Example Domain} 

We first give a concrete formulation of the logistics problem 
discussed in the introduction. This example follows exactly the details given by
\citeA{BoutilierRePr2001}, and is used  to illustrate our constructions for MDPs. The
domain includes boxes, trucks and cities, and predicates are
$Bin(Box,City)$, $Tin(Truck,City)$, and $On(Box,Truck)$. 
Following \citeA{BoutilierRePr2001},
we assume that $On(b,t)$ and $Bin(b,c)$ are mutually exclusive,
so a box on a truck is not in a city and vice versa.
That is, our background knowledge includes  statements
$\forall b,c,t, On(b,t)\rightarrow \neg Bin(b,c)$ and
$\forall b,c,t, Bin(b,c)\rightarrow \neg On(b,t)$.
The reward function, capturing a planning goal,
awards a reward of 10 if the formula $\exists b, Bin(b,Paris)$ is
true, that is if there is any box in Paris. Thus the reward is allowed
to include constants but need not be completely ground.  

The domain includes 3 actions 
$load, unload$, and $drive$.
Actions have no effect if their
preconditions are not met. Actions can also fail with some
probability. 
When attempting $load$, a successful version $loadS$ is executed with
probability 0.99, and an unsuccessful version $loadF$ (effectively a
no-operation) with probability 0.01. 
The drive action is executed
deterministically. 
When attempting $unload$, 
the probabilities depend on whether it is raining or not. 
If it is not raining then 
a successful version $unloadS$ is executed with
probability 0.9, and $unloadF$ 
with probability 0.1.
If it is raining 
$unloadS$ is executed with
probability 0.7, and $unloadF$ 
with probability 0.3.

\subsection{The Domain Dynamics} 
\label{Sec:DomainDynamics}

We follow \citeA{BoutilierRePr2001} and specify stochastic actions as a
randomized choice among deterministic alternatives. 
The domain dynamics
are defined by {\em truth value diagrams}
(TVDs). For every action schema $A(\vec{a})$ and each predicate schema
$p(\vec{x})$ the TVD $T(A(\vec{a}),p(\vec{x}))$ is a FODD with
$\{0,1\}$ leaves. 
The TVD gives the truth value of $p(\vec{x})$ 
in the next state when $A(\vec{a})$ has been performed in the current
state.  
We call $\vec{a}$ action
parameters, and $\vec{x}$ predicate parameters.
No other variables are allowed in the TVD; 
the reasoning behind this restriction is explained in 
Section~\ref{Sec:RegressionProb}.
The restriction can be sometimes sidestepped by introducing
more action parameters instead of the variables.  

The truth value of a TVD is valid when we
fix a valuation of the parameters. 
The TVD simultaneously captures the truth values of all
instances of $p(\vec{x})$ in the next state. Notice that
TVDs for different predicates are separate. This can
be safely done even if an action has coordinated effects (not
conditionally independent) since the action alternatives  are deterministic. 

Since we allow both action parameters and predicate
parameters, the effects of an action are not restricted to predicates
over action arguments so TVD are more expressive than simple STRIPS
based schemas. 
For example, TVDs can easily express universal effects of an action.
To see this note that if $p(\vec{x})$ is true for all $\vec{x}$ after action
$A(\vec{a})$ then the TVD 
$T(A(\vec{a}),p(\vec{x}))$ can be captured by a leaf valued 1.
Other universal conditional effects can be captured similarly.
On the other hand, since we do not have explicit universal
quantifiers, TVDs cannot capture universal preconditions. 

\begin{figure}[tbhp]
\vskip 0.2in
\begin{center}
\centerline{\psfig{figure=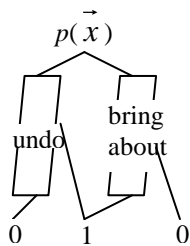}}
\caption{A template for the TVD}
\label{Fig:basictvd}
\end{center}
\vskip -0.2in
\end{figure}

\begin{figure}[tbhp]
\vskip 0.2in
\begin{center}
\centerline{\psfig{figure=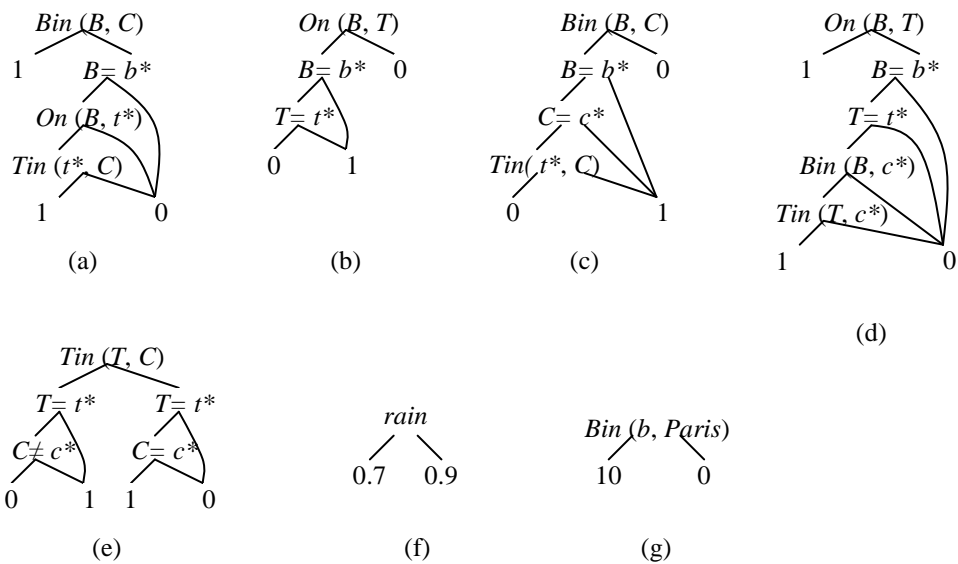}}
\caption{FODDs for logistics domain: TVDs, action choice, and reward function.
(a)(b) The TVDs for $Bin(B, C)$ and $On(B, T)$ under action choice $unloadS(b^*, t^*)$.
(c)(d) The TVDs for $Bin(B, C)$ and $On(B, T)$ under action choice
  $loadS(b^*, t^*, c^*)$. Note that $c^*$ must be an action
  parameter so that (d) is a valid TVD.
(e) The TVD for $Tin(T, C)$ under action choice $driveS(t^*, c^*)$.
(f) The probability FODD for the action choice $unloadS(b^*, t^*)$.
(g) The reward function. }
\label{Fig:logisticsFODD}
\end{center}
\vskip -0.2in
\end{figure}

For any domain, a TVD for predicate $p(\vec{x})$ can be defined generically
as in Figure~\ref{Fig:basictvd}. The idea is that the predicate is true
if it was true before and is not ``undone'' by the action or was false
before and is ``brought about'' by the action. 
TVDs for the logistics domain in our running example are given in 
Figure~\ref{Fig:logisticsFODD}. 
All the TVDs omitted 
in the figure
are trivial
in the sense that the predicate is not affected by the action.
In order to simplify the presentation we give the TVDs in their
generic form and did not sort the diagrams using the order proposed in
Section~\ref{Sec:labelorder};
the TVDs are consistent with the ordering 
$Bin \prec $  ``=''  $\prec On \prec Tin \prec rain$. 
Notice that the TVDs capture the implicit assumption
usually taken in such planning-based domains that if the preconditions
of the action are not satisfied then the action has no effect. 

Notice how we utilize the multiple path semantics with maximum
aggregation. A predicate is true if it is true according to one of the paths
specified so we get a disjunction over the conditions for free. 
If we use the single path semantics 
of \citeA{BlockeelDR98}
the corresponding notion of TVD is
significantly more complicated since a single path must capture all
possibilities for a predicate to become true. To capture that, 
we must test
sequentially for different conditions and then take a union of
the substitutions from different tests and in turn 
this requires additional
annotation on FODDs with appropriate semantics. 
Similarly an OR operation would require union of substitutions, thus
complicating the representation. 
We explain these issues in more detail in Section~\ref{Sec:explainSinglePath}
after we introduce the first order value iteration algorithm.

\subsection{Probabilistic Action Choice} 
One can consider modeling arbitrary conditions described by 
formulas over the state to control nature's probabilistic choice of
action. 
Here the multiple path semantics makes it hard to specify mutually
exclusive conditions using existentially quantified variables
and in this way specify a distribution.
We therefore
restrict the conditions to be either propositional or
depend directly on the action parameters. 
Under this condition any interpretation follows exactly one path
(since there are no variables and thus only the empty valuation)
thus the aggregation function does not interact with the probabilities assigned.
A diagram showing action choice for $unloadS$ in 
our logistics example is given in Figure~\ref{Fig:logisticsFODD}.
In this example, the condition is propositional. The condition can also depend on
action parameters, for example, if we assume
that the result is also affected by whether the box is big or not, we can
have a diagram as in Figure~\ref{Fig:bigprob} specifying the
action choice probability.

\begin{figure}[tbhp]
\vskip 0.2in
\begin{center}
\centerline{\psfig{figure=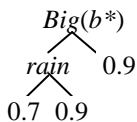}}
\caption{An example showing that the choice probability can depend on action parameters.}
\label{Fig:bigprob}
\end{center}
\vskip -0.2in
\end{figure}

Note that a probability usually 
depends on the current state.
It can depend on arbitrary properties of the state (with the restriction
stated as above), e.g., $rain$ and
$big(b^*)$, as shown in Figure~\ref{Fig:bigprob}. 
We allow arbitrary conditions that depend on predicates with arguments restricted to action parameters
so the dependence can be complex. However, we do not 
allow any free variables in the probability choice diagram. For example,
we cannot model a probabilistic choice of $unloadS(b^*,t^*)$ that depends on other
boxes on the truck $t^*$, e.g.,
$\exists b, On(b,t^*)\wedge b\neq b^*:0.2$;
otherwise, $0.7$.
While we can write a FODD to capture
this condition, the semantics of FODD means that a path to $0.7$ will be selected by
max aggregation so the distribution cannot be modeled  
in this way. While this is clearly a restriction, the conditions based on action
arguments
still give a substantial modeling power.

\subsection{Reward and Value Functions}
Reward and value functions
can be represented directly
using algebraic FODDs. 
The reward function for our logistics domain example is given in 
Figure~\ref{Fig:logisticsFODD}.

\section{Value Iteration with FODDs}
\label{Sec:VI}

Following \citeA{BoutilierRePr2001}
we define the first order value iteration algorithm 
as follows: given
the reward function $R$ and the action model as input, we set $V_0 =
R, n=0$ and repeat the procedure {\em Rel-greedy} until termination:

\begin{procedure}Rel-greedy\\
\label{Proc:relgreedy}
1. For each action type $A(\vec{x})$, compute:
\begin{equation}
\label{Eq:Qfunction}
Q_{V_n}^{A(\vec{x})} = R \oplus[\gamma \otimes \oplus_j(prob(A_j(\vec{x})) \otimes
Regr(V_n, A_j(\vec{x})))]
\end{equation}
2. 
$Q_{V_n}^A = \mbox{obj-max} (Q_{V_n}^{A(\vec{x})})$.
\\
3.
$V_{n+1} = \max_A Q_{V_n}^A$. 
\end{procedure}

The notation and steps of this procedure were discussed in
Section~\ref{Sec:introRMDP} except that now $\otimes$
and $\oplus$ work on FODDs instead of case statements.
Note that since the reward function
does not depend on actions, we can
move the  object maximization  step forward
before adding the reward function. 
I.e., we first have 
$$T_{V_n}^{A(\vec{x})} = \oplus_j(prob(A_j(\vec{x})) \otimes
Regr(V_n, A_j(\vec{x}))),$$
followed by
$$Q_{V_n}^A =R \oplus \gamma \otimes \mbox{obj-max} (T_{V_n}^{A(\vec{x})}).$$
Later we will see that the object maximization step makes  
more reductions possible; therefore by moving this step forward we get some savings in
computation. We compute the updated 
value function in this way in the comprehensive example of value iteration 
given later in Section~\ref{Sec:VIexample}. 

Value iteration terminates when $\|V_{i+1} - V_i\| \leq
\frac{\varepsilon (1 - \gamma)}{2\gamma}$ \cite{Puterman1994}. 
In our case we need to test that the values achieved by the two
diagrams is within $\frac{\varepsilon (1 - \gamma)}{2\gamma}$.

Some formulations of goal based planning problems 
use an absorbing state with zero additional reward 
once the goal is reached.
We can handle this formulation when there is
only one non-zero leaf in $R$.
In this case, we can replace Equation~\ref{Eq:Qfunction}
with $$Q_{V_n}^{A(\vec{x})} = max (R,\gamma \otimes \oplus_j(prob(A_j(\vec{x})) \otimes
Regr(V_n, A_j(\vec{x}))).$$
To see why this is correct,
note that due to discounting the max value is always $\leq R$.
If $R$ is satisfied in a state we do not care about the action (max would
be $R$) and if $R$ is $0$ in a state we get the value of the discounted future reward.

Note that we can only do this in  goal based domains, i.e., there is only one 
non-zero leaf. This does not mean that we cannot have disjunctive goals, but it means 
that we must value each goal condition equally. 

\subsection{Regressing Deterministic Action Alternatives} 
\label{Sec:RegressionbyBR}
We first describe the calculation of
$Regr(V_n, A_j(\vec{x}))$ using a simple idea we call block
replacement. We then proceed to discuss how to obtain the result efficiently.

Consider $V_n$ and the nodes in its FODD. For each
such node take a copy of the corresponding TVD, where
predicate parameters are renamed so that they correspond to the node's
arguments and action parameters are unmodified.
$\mbox{BR-regress}(V_n,A(\vec{x}))$ is the FODD resulting from replacing each
node in $V_n$ with the corresponding TVD,
with outgoing edges connected to the 0, 1 leaves of the TVD.

Recall that a RMDP represents a family of concrete MDPs each generated
by choosing a concrete instantiation of the state space (typically
represented by the number of objects and their types).
The formal properties of our algorithms hold for any concrete
instantiation. 

Fix any concrete instantiation of the state space.
Let $s$ denote a state resulting from executing an action $A(\vec{x})$
in state $\hat{s}$. 
Notice that $V_n$ and $\mbox{BR-regress}(V_n,A(\vec{x}))$ have exactly the same
variables. We have the following lemma:
\begin{lemma}
\label{Lem:DetRegressA}
Let $\zeta$ be any valuation to the variables of $V_n$ (and thus also
the variables of $\mbox{BR-regress}(V_n,A(\vec{x}))$). Then
$\map_{V_n}(s,\zeta) = \map_{BR-regress(V_n,A(\vec{x}))}(\hat{s},\zeta)$.
\end{lemma}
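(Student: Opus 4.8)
The plan is to exploit the fact that once the valuation $\zeta$ is fixed, both $V_n$ and $\mbox{BR-regress}(V_n,A(\vec{x}))$ become effectively propositional, so each determines a single traversal path to a leaf, and then to reduce the whole statement to the defining property of the TVDs. First I would fix the concrete instantiation, the state $\hat{s}$, and the grounding of the action parameters $\vec{x}$ determined by executing $A(\vec{x})$, and set $s = do(A(\vec{x}),\hat{s})$. Because the TVDs contain no variables other than action parameters (now grounded by the executed action) and predicate parameters (renamed in block replacement to the arguments of the node they replace, hence bound by $\zeta$), the valuation $\zeta$ leaves nothing unbound: every node predicate in $\mbox{BR-regress}(V_n,A(\vec{x}))$ evaluates to \true or \false, so there is no interaction with the $\max$ aggregation inside a TVD and a unique path is followed, exactly as in $V_n$ under $\zeta$ at $s$.

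The key property to isolate is the correctness of each TVD block: for a node $n$ of $V_n$ labeled $l(n)$, the corresponding renamed TVD, evaluated on $\hat{s}$ under $\zeta$, reaches its $1$-leaf if and only if $l(n)$ (read under $\zeta$) holds in the successor state $s$. This is precisely the definition of $T(A(\vec{x}),p)$ together with the renaming that aligns the predicate parameters with the node's arguments; for an equality node it holds trivially, since equalities are state-independent and $s$ and $\hat{s}$ share the same domain and the same $\zeta$. I would state this as the single nontrivial ingredient and then observe that block replacement wires the $1$-leaf of this block to the regressed \true branch and the $0$-leaf to the regressed \false branch, so the two branch choices line up.

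With this in hand the result follows by structural induction on the sub-FODDs $D$ of $V_n$, proving $\map_D(s,\zeta) = \map_{\mbox{BR-regress}(D)}(\hat{s},\zeta)$, where $\mbox{BR-regress}(D)$ abbreviates block replacement with respect to the same $A(\vec{x})$. The base case is a leaf $v$: leaves are not replaced, so both sides equal $v$. For the inductive step let $n$ be the root of $D$, with children $D\downt$ and $D\downf$. By the TVD property the block at $n$ routes the traversal on $\hat{s}$ into $\mbox{BR-regress}(D\downt)$ exactly when $l(n)$ is true in $s$, i.e.\ exactly when $V_n$ on $s$ follows $D\downt$; hence $\map_{\mbox{BR-regress}(D)}(\hat{s},\zeta) = \map_{\mbox{BR-regress}(D\downt)}(\hat{s},\zeta)$ and $\map_D(s,\zeta) = \map_{D\downt}(s,\zeta)$, and these agree by the induction hypothesis. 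The \false branch is symmetric, and taking $D = V_n$ yields the lemma.

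The main obstacle is conceptual bookkeeping rather than a hard argument: one must verify that the parameter renaming makes each TVD compute the truth value of the particular atom sitting at node $n$ (with its specific variables under $\zeta$), and that single-path evaluation is legitimate despite the DAG sharing of nodes. The latter is fine precisely because a fixed $\zeta$ makes every branch deterministic and $\map_{(\cdot)}(\cdot,\zeta)$ depends only on the traversed path, so shared sub-diagrams cause no ambiguity. Everything else is an immediate unwinding of the TVD definition.
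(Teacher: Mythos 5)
Your proposal is correct and follows essentially the same route as the paper's proof: fix $\zeta$ so both diagrams become propositional, invoke the defining property of TVDs (each block reaches its $1$-leaf on $\hat{s}$ exactly when the corresponding node label holds in $s$), and conclude that the two traversals align node by node and reach the same leaf. The paper phrases this as a direct comparison of the two paths $P$ and $\hat{P}$ rather than as a structural induction, but your induction is just a more formal scaffolding of the identical argument, with the TVD-correctness property as the single nontrivial ingredient in both.
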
 
\begin{proof}
Consider the paths $P,\hat{P}$ followed under the valuation $\zeta$ in the two diagrams. 
By the definition of TVDs, the sub-paths
of $\hat{P}$ applied to $\hat{s}$ guarantee that the corresponding
nodes in $P$ take the same truth values in $s$. 
So $P,\hat{P}$ reach the same leaf and the same value is obtained.
\end{proof}

A naive implementation of block replacement may not be efficient.
If we use block replacement for regression then the resulting
FODD is not necessarily reduced and moreover, since the different
blocks are sorted to start with the result is not even sorted. 
Reducing and sorting the results may be an expensive
operation. Instead we calculate the result 
as follows. 
For any FODD $V_n$ we traverse 
$\mbox{BR-regress}(V_n,A(\vec{x}))$ 
using postorder traversal
in terms of blocks and combine
the
blocks. At any step we have to combine up to 3 FODDs such
that the parent block has not yet been processed (so it is a TVD with
binary leaves) and the two children have been processed (so they are
general FODDs). If we call the parent $B_n$, the \true branch child
$B_t$ and the \false branch child $B_f$ then we can represent their
combination as $[B_n\otimes B_t] \oplus [(1 \ominus B_n)\otimes B_f]$.

\begin{lemma}
\label{Lem:DetRegress}
Let $B$ be a FODD where 
$B_t$ and $B_f$ are FODDs, and $B_n$ is a FODD with $\{0,1\}$ leaves.
Let $\hat{B}$ be the result of using Apply to calculate
the diagram 
 $[B_n\otimes B_t] \oplus [(1 \ominus B_n)\otimes B_f]$.
Then for any interpretation $I$ and valuation $\zeta$
we have $\map_B(I,\zeta)=\map_{\hat{B}}(I,\zeta)$.
\end{lemma}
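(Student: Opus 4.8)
The plan is to exploit the observation, used throughout the excerpt, that once the valuation $\zeta$ is fixed every FODD becomes propositional: each node predicate takes a definite truth value, so $B_n$, $B_t$, $B_f$, and the composite diagram $B$ each trace a single deterministic path to a leaf. Since $B_n$ has $\{0,1\}$ leaves, $\map_{B_n}(I,\zeta)$ is always exactly $0$ or $1$, and this two-valued behaviour is what will let the arithmetic built by Apply collapse to the correct branch. So the whole argument reduces to a case split on the value of $\map_{B_n}(I,\zeta)$, with the correctness of Apply (proved earlier in the excerpt) doing the algebraic bookkeeping.

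First I would pin down the per-valuation semantics of the block-replaced diagram $B$. By construction $B$ has $B_n$ at its root region, with the $1$-leaves of $B_n$ wired to the root of $B_t$ and the $0$-leaves wired to the root of $B_f$. Traversing $B$ under a fixed $\zeta$ therefore means first traversing $B_n$ to one of its leaves and then continuing into the attached subdiagram. Hence $\map_B(I,\zeta) = \map_{B_t}(I,\zeta)$ whenever $\map_{B_n}(I,\zeta)=1$, and $\map_B(I,\zeta) = \map_{B_f}(I,\zeta)$ whenever $\map_{B_n}(I,\zeta)=0$. This is the step I expect to require the most care: everything after it is mechanical, but one must state explicitly that the edge-wiring of the $0/1$ leaves of $B_n$ into $B_f/B_t$ really does realise the intended ``evaluate $B_n$, then branch'' behaviour under a single-path evaluation. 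It is immediate from the fact that a propositional FODD follows one path, but it should be justified rather than assumed.

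Next I would compute $\map_{\hat B}(I,\zeta)$ by invoking the correctness of Apply. Since $\hat B$ is obtained by Apply from $[B_n\otimes B_t]\oplus[(1\ominus B_n)\otimes B_f]$, and Apply commutes with $\map$ for every fixed $\zeta$, I get $\map_{\hat B}(I,\zeta) = \map_{B_n}(I,\zeta)\cdot\map_{B_t}(I,\zeta) + (1-\map_{B_n}(I,\zeta))\cdot\map_{B_f}(I,\zeta)$, where the constant leaf $1$ has map identically $1$ so that $\map_{1\ominus B_n}(I,\zeta)=1-\map_{B_n}(I,\zeta)$. Finally I would close the proof by the case split: when $\map_{B_n}(I,\zeta)=1$ the right-hand side collapses to $\map_{B_t}(I,\zeta)$, matching $\map_B(I,\zeta)$ from the first step, and when $\map_{B_n}(I,\zeta)=0$ it collapses to $\map_{B_f}(I,\zeta)$, again matching. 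Because $\map_{B_n}(I,\zeta)\in\{0,1\}$ exhausts all possibilities, the equality $\map_B(I,\zeta)=\map_{\hat B}(I,\zeta)$ holds for every $I$ and $\zeta$, as required.
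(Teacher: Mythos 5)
Your proof is correct and follows essentially the same route as the paper's: the paper's (much terser) argument also rests on the observation that fixing $\zeta$ makes every diagram propositional with a single path followed, so that the correctness of Apply immediately yields the equality. Your version simply makes explicit the two steps the paper leaves implicit --- the ``evaluate $B_n$, then branch into $B_t$ or $B_f$'' reading of the block-replaced diagram, and the case split on $\map_{B_n}(I,\zeta)\in\{0,1\}$ collapsing the arithmetic expression to the right branch.
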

\begin{proof}
This is true since 
by fixing the valuation we effectively ground the FODD
and all paths are mutually exclusive. 
In other words the FODD becomes propositional and clearly the combination
using propositional Apply is correct.
\end{proof}

A high-level description of the algorithm 
to calculate $\mbox{BR-regress}(V_n,A(\vec{x}))$ by block combination 
is as follows:

\begin{procedure}Block Combination for $\mbox{BR-regress}(V_n,A(\vec{x}))$
\begin{enumerate}
\item
Perform a topological sort on $V_n$ nodes \cite<see for example>{CormenLeRiSt2001}.
\item
In reverse order, for each non-leaf node $n$ (its children 
$B_t$ and $B_f$ have already been processed), let $B_n$ be a copy of the 
corresponding TVD, calculate
$[B_n\otimes B_t] \oplus [(1 \ominus B_n)\otimes B_f]$.
\item
Return the FODD corresponding to the root.
\end{enumerate}
\end{procedure}

Notice that different blocks share variables so we cannot perform weak
reductions during this process. However, we can perform strong
reductions in intermediate steps since they do not change the map for
any valuation. After the process is completed we can perform any
combination of weak and strong reductions since this does not change
the map of the regressed value function.

\begin{figure}[tbhp]
\vskip 0.2in
\begin{center}
\centerline{\psfig{figure=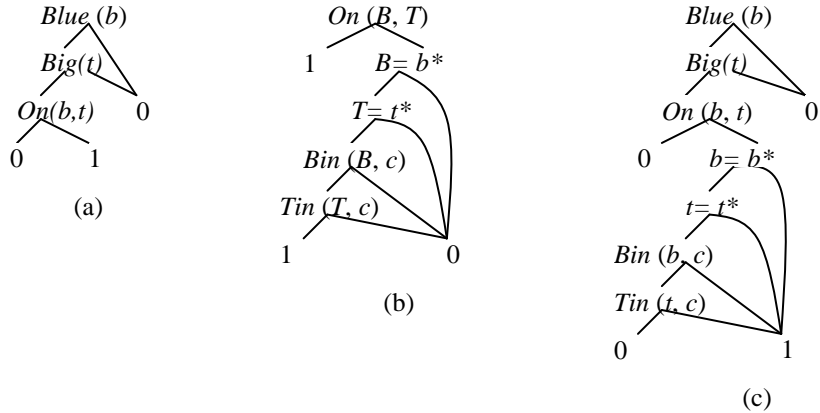}}
\caption{An example illustrating why variables are not allowed in TVDs.}
\label{Fig:novarinTVD}
\end{center}
\vskip -0.2in
\end{figure}

We can now explain  why we cannot have variables in TVDs
through an example illustrated in Figure~\ref{Fig:novarinTVD}.
Suppose we have a value function as defined in Figure~\ref{Fig:novarinTVD}(a), 
saying that if there is a blue block and a big truck 
such that the block is not on the truck
then 
 value $1$ is assigned. 
Figure~\ref{Fig:novarinTVD}(b) gives
the TVD for $On(B,T)$ under action $loadS$,
in which $c$ is a variable
 instead of an action parameter.
Figure~\ref{Fig:novarinTVD}(c) gives the result 
after block replacement.
Consider an interpretation $\hat{s}$ with domain $\{b_1,t_1,c_1, c_2\}$ and relations 
$\{Blue(b_1),Big(t_1),Bin(b_1,c_1),Tin(t_1,c_1)\}$.
After the action $loadS(b_1, t_1)$ we will reach the state
$s=\{Blue(b_1),Big(t_1),On(b_1,t_1),Tin(t_1,c_1)\}$, which 
gives us a value of $0$.
But Figure~\ref{Fig:novarinTVD}(c) with $b^*=b_1,t^*=t_1$ evaluated in
$\hat{s}$ gives value of 1 by valuation $\{b/b_1,c/c_2,t/t_1\}$.
Here the choice $c/c_2$ makes sure the precondition is violated.
By making $c$ an action parameter, applying the action must explicitly
choose a  valuation and this leads to a correct value function.
Object maximization turns action parameters into variables and
allows us to choose the argument so as to maximize the value.

\subsection{Regressing Probabilistic Actions} 
\label{Sec:RegressionProb}

To regress a probabilistic action we must regress all its
deterministic alternatives and combine each with its choice
probability as in Equation~\ref{Eq:Qfunction}.
As discussed in Section~\ref{Sec:introRMDP}, due to the restriction in
the RMDP model that explicitly specifies a finite number of
deterministic action alternatives, we can replace the potentially
infinite sum of Equation~\ref{Eq:VI} with the finite sum of 
Equation~\ref{Eq:Qfunction}.
If this is done correctly for every state then the result of 
Equation~\ref{Eq:Qfunction} is correct.
In the
following we specify how this can be done with FODDs.

Recall that $prob(A_j(\vec{x}))$ is restricted to include 
only action parameters and cannot include variables.
We can therefore calculate $prob(A_j(\vec{x})) \otimes
Regr(V_n, A_j(\vec{x}))$ in step (1) directly using Apply.
However, the different regression results are independent functions so
that in the sum 
$\oplus_j(prob(A_j(\vec{x})) \otimes Regr(V_n, A_j(\vec{x})))$
we must standardize apart the different regression results
before adding the functions (note that action parameters are still
considered constants at this stage).
The same holds for the addition of the reward function.
The need to standardize apart complicates the diagrams and often
introduces structure that can be reduced. 
When performing these operations we first use the propositional Apply procedure
and then follow with weak and strong reductions.

\begin{figure}[tbhp]
\vskip 0.2in
\begin{center}
\centerline{\psfig{figure=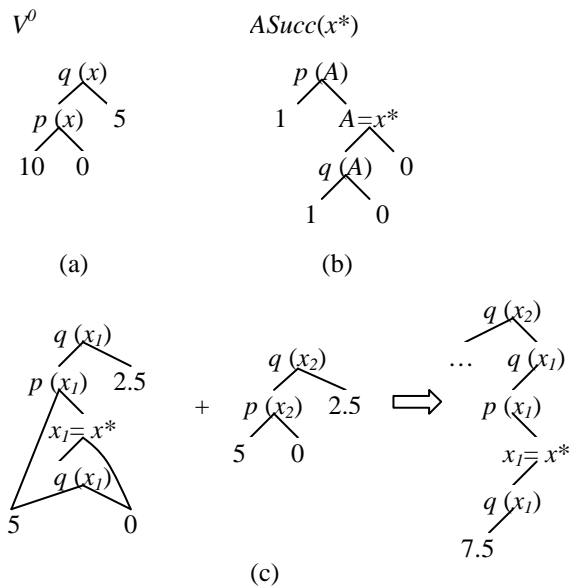}}
\caption{An example illustrating the need to standardize apart.}
\label{Fig:stdapart}
\end{center}
\vskip -0.2in
\end{figure} 

Figure~\ref{Fig:stdapart}
illustrates why we need to standardize apart different action outcomes.
Action $A$ can succeed (denoted as $ASucc$) or 
fail (denoted as $AFail$, effectively a no-operation),
and each is chosen with probability 0.5. 
Part (a) gives the value function $V^0$.
Part (b) gives the TVD for $P(A)$ under the action choice $ASucc(x^*)$.
All other TVDs are trivial. 
Part (c) shows part of the result of adding the two outcomes for 
$A$ after standardizing apart 
(to simplify the presentation the diagrams are not sorted).
Consider an interpretation with domain $\{1,2\}$ and relations $\{q(1),p(2)\}$.
As can be seen from (c), by choosing $x^* = 1$, i.e.\ action $A(1)$, 
the valuation $x_1 = 1, x_2 = 2$ gives 
a value of $7.5$ after the action
(without considering the discount factor).
Obviously if we do not standardize apart (i.e $x_1=x_2$), 
there is no leaf with value $7.5$ and we get a wrong value.
Intuitively the contribution of $ASucc$ to the value comes from the
``bring about'' portion of the diagram and $AFail$'s contribution
uses bindings from
the ``not undo'' portion and the two portions can refer to
different objects. Standardizing apart allows us to capture
both simultaneously.

From Lemma~\ref{Lem:DetRegressA} and~\ref{Lem:DetRegress}
and the discussion so far we have:
\begin{lemma}
Consider any concrete instantiation of a RMDP. Let
$V_n$ be a value function for the corresponding MDP,
and let 
$A(\vec{x})$ be a probabilistic action in the domain.
Then $Q_{V_n}^{A(\vec{x})}$ as calculated by
Equation~\ref{Eq:Qfunction} is correct.
That is, for any state $s$, $\map_{Q_{V_n}^{A(\vec{x})}}(s)$ is 
the expected value of 
executing   $A(\vec{x})$ in $s$ and then receiving the terminal value $V_n$.
\end{lemma}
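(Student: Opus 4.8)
The plan is to work from the semantic definition of the quantity to be computed and match each FODD operator in Equation~\ref{Eq:Qfunction} to its arithmetic counterpart at the level of $\map$. Fix a concrete instantiation and a state $s$, and recall that the action parameters $\vec{x}$ are treated as constants at this stage, so aggregation is taken only over the remaining variables. Because the RMDP model specifies the stochastic action $A(\vec{x})$ as a random choice among the deterministic alternatives $A_j(\vec{x})$, with $A_j$ chosen with state-dependent probability $p_j(s)=\map_{prob(A_j(\vec{x}))}(s)$ and then applied deterministically, the expected value of executing $A(\vec{x})$ in $s$ and collecting the terminal value $V_n$ is
\[
r(s)+\gamma\sum_j p_j(s)\,\map_{V_n}(do(A_j(\vec{x}),s)).
\]
The goal is to show that $\map_{Q_{V_n}^{A(\vec{x})}}(s)$ equals this expression, so the whole argument reduces to tracking how $\oplus$, $\otimes$, and the aggregation $\max_\zeta$ behave on each factor.

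First I would handle a single alternative. By Lemma~\ref{Lem:DetRegressA}, for every valuation $\zeta$ we have $\map_{V_n}(do(A_j(\vec{x}),s),\zeta)=\map_{Regr(V_n,A_j(\vec{x}))}(s,\zeta)$; taking $\max_\zeta$ of both sides (since $\max$ is our aggregation) gives $\map_{Regr(V_n,A_j(\vec{x}))}(s)=\map_{V_n}(do(A_j(\vec{x}),s))$. Lemma~\ref{Lem:DetRegress} justifies this step when $Regr$ is realized by block combination, since that computation preserves the map for every valuation. Next, because $prob(A_j(\vec{x}))$ contains no free variables, it evaluates to a single nonnegative constant $p_j(s)=\map_{prob(A_j(\vec{x}))}(s,\zeta)$ independent of $\zeta$. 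By the correctness of Apply, $\map_{prob(A_j(\vec{x}))\otimes Regr(V_n,A_j(\vec{x}))}(s,\zeta)=p_j(s)\cdot\map_{Regr(V_n,A_j(\vec{x}))}(s,\zeta)$, and since $p_j(s)\geq 0$ the scaling commutes with $\max_\zeta$, yielding $\map_{prob(A_j(\vec{x}))\otimes Regr(V_n,A_j(\vec{x}))}(s)=p_j(s)\,\map_{V_n}(do(A_j(\vec{x}),s))$. The same scaling argument applies to the discount factor $\gamma$, which is a nonnegative constant leaf.

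The remaining work, and the step I expect to be the main obstacle, is the summation $\oplus_j$ and the addition of $R$, because $\max$ does not distribute over $\oplus$ in general. By the correctness of Apply we only get $\map_{B_1\oplus B_2}(s)=\max_\zeta\bigl(\map_{B_1}(s,\zeta)+\map_{B_2}(s,\zeta)\bigr)$, whereas the target requires $\map_{B_1}(s)+\map_{B_2}(s)=\max_{\zeta_1}\map_{B_1}(s,\zeta_1)+\max_{\zeta_2}\map_{B_2}(s,\zeta_2)$. These two agree precisely when $B_1$ and $B_2$ share no variables: a valuation to the combined variable set then splits into independent valuations of the two operands, so the joint maximum of a sum factors into the sum of the individual maxima. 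This is exactly why the regression results, and the reward $R$, are standardized apart before applying $\oplus$, as illustrated in Figure~\ref{Fig:stdapart}. I would therefore isolate a short auxiliary fact: if $B_1$ and $B_2$ use disjoint variables then $\map_{B_1\oplus B_2}(I)=\map_{B_1}(I)+\map_{B_2}(I)$. Applying it across the standardized-apart alternatives gives $\map_{\oplus_j(\cdots)}(s)=\sum_j p_j(s)\,\map_{V_n}(do(A_j(\vec{x}),s))$; applying it once more to add $\map_R(s)=r(s)$, and inserting the scaling by $\gamma$, yields $\map_{Q_{V_n}^{A(\vec{x})}}(s)=r(s)+\gamma\sum_j p_j(s)\,\map_{V_n}(do(A_j(\vec{x}),s))$, which is precisely the claimed expected value.
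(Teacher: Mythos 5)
Your proposal is correct and follows essentially the same route as the paper: the paper derives this lemma from Lemma~\ref{Lem:DetRegressA} and Lemma~\ref{Lem:DetRegress} together with the surrounding discussion (the probability diagrams contain no variables beyond action parameters, and the alternatives and the reward are standardized apart before applying $\oplus$), which is exactly the structure of your argument. The only difference is that you make explicit the auxiliary fact that $\max_\zeta$ distributes over $\oplus$ precisely when the operands have disjoint variables, a point the paper only motivates informally via the counterexample of Figure~\ref{Fig:stdapart}.
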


\subsection{Observations for Single Path Semantics}
\label{Sec:explainSinglePath}

Section~\ref{Sec:DomainDynamics} suggested that the single
path semantics 
of \citeA{BlockeelDR98}
does not support value iteration as well as the multiple path semantics.
Now with the explanation of regression,   
we can use an example to illustrate this.  Suppose we have a
value function as defined in Figure~\ref{Fig:union}(a), 
saying that if we have a red block in a big city then value $1$ is
assigned. Figure~\ref{Fig:union}(b) gives the result after block
replacement under action $unloadS(b^*,t^*)$. 
However this is not correct. 
Consider an interpretation $\hat{s}$ with domain 
$\{b_1,b_2,t_1,c_1\}$ and relations 
$\{Red(b_2),Blue(b_1),Big(c_1),Bin(b_1,c_1),Tin(t_1,c_1),On(b_2,t_1)\}$.
Note that we use the single path semantics. 
We follow the \true branch at the root since
$\exists b,c,Bin(b,c)$ is true with $\{b/b_1,c/c_1\}$. But
we follow the \false branch at $Red(b)$ since 
$\exists b,c,Bin(b,c)\wedge Red(b)$ is not satisfied. Therefore we get a value of $0$. 
Clearly, we should get a value of $1$  instead
with $\{b/b_2,c/c_1\}$, but it is impossible
to achieve this value in Figure~\ref{Fig:union}(b) with the single path semantics.
The reason block replacement fails is that  the top node decides 
on the true branch based on one instance of the predicate but we really need all true instances
of the predicate to filter into the true leaf of the TVD.

To correct the problem, we want to capture all instances that were true before and not undone
and all instances that are made true on one path.
Figure~\ref{Fig:union}(c) gives one possible way to do it.
Here $\leftarrow$ means variable renaming, and 
$\cup$ stands for union operator, which takes a union of
all substitutions. Both can be treated as edge operations.
Note that $\cup$ is a coordinated operation, i.e., instead of taking the
union of the substitutions for $b'$ and $b''$, $c'$ and $c''$ separately
we need to take the union of the substitutions for $(b',c')$ and $(b'',c'')$.
This approach may be possible but it clearly leads to complicated diagrams.
Similar complications arise in the context of object maximization.
Finally if we are to use this representation then all our procedures
will need to handle edge marking and unions of substitutions so this approach
does not look promising.

\begin{figure}[tbhp]
\vskip 0.2in
\begin{center}
\centerline{\psfig{figure=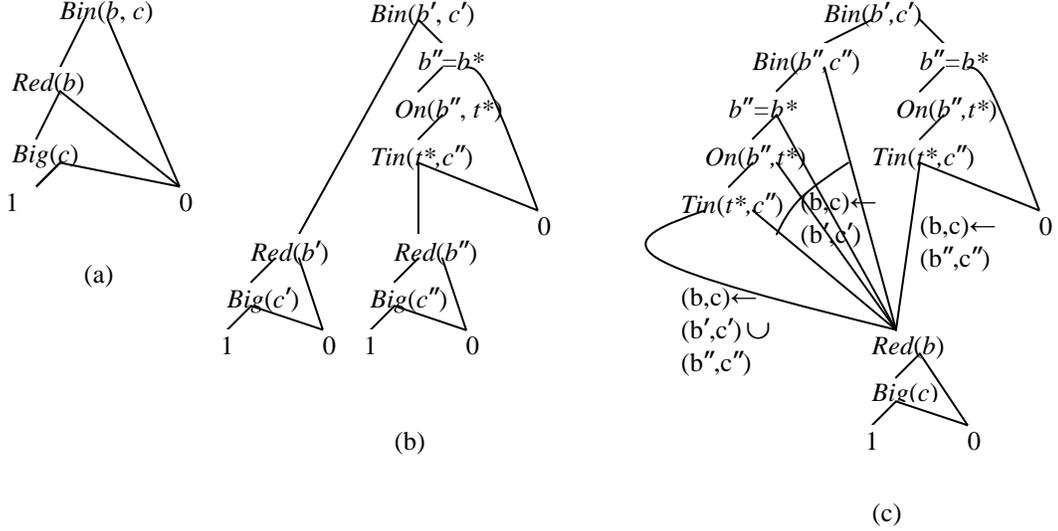}}
\caption{An example illustrating union or.}
\label{Fig:union}
\end{center}
\vskip -0.2in
\end{figure}

\subsection{Object Maximization} 
Notice that since we are handling different probabilistic alternatives of the 
same action separately we must keep action parameters fixed
during the regression process and until they are added in step $1$ of the algorithm.
In step 2 we maximize over the choice of action parameters.
As mentioned above we get this
maximization for free. We simply rename the
action parameters using new variable names (to avoid repetition
between iterations) and consider them as variables. The aggregation
semantics provides the maximization and by definition this
selects the best instance of the action. Since constants are turned into
variables
additional reduction is typically possible at this stage.
Any combination of weak and strong reductions can be used.
From the discussion we have the following lemma:
\begin{lemma}
Consider any concrete instantiation of a RMDP. Let
$V_n$ be a value function for the corresponding MDP,
and let 
$A(\vec{x})$ be a probabilistic action in the domain.
Then $Q_{V_n}^{A}$ as calculated by object maximization 
in step 2 of the algorithm is correct.
That is, for any state $s$, $\map_{Q_{V_n}^{A}}(s)$ is 
the maximum over expected values achievable by executing an instance of 
$A(\vec{x})$ in $s$ and then receiving the terminal value $V_n$.
\end{lemma}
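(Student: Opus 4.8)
The plan is to deduce this lemma from the preceding one (correctness of the parameterized $Q$-function $Q_{V_n}^{A(\vec{x})}$) by tracking exactly what object maximization does to the diagram and to its map. Step 2 transforms $Q_{V_n}^{A(\vec{x})}$ into $Q_{V_n}^{A}$ in two stages: it first renames the action parameters $\vec{x}$ to fresh variables, and then it possibly applies a sequence of weak and strong reductions. Every reduction operator of Section~\ref{Sec:R7} and the basic operators preserve $\map_B(I)$ for all $I$ (Theorem~1 for R1--R4, and the corresponding lemmas for R5, R7, R8, R9, each under its stated applicability conditions). Hence it suffices to prove the statement for the diagram $D$ obtained immediately after the renaming, before reductions.

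Next I would analyze $\map_D(s)$ using the multiple-path semantics. Write the variables of $Q_{V_n}^{A(\vec{x})}$ as $\vec{y}$ (the genuine variables) together with $\vec{x}$ (the action parameters, treated as constants at this stage). The renaming turns $\vec{x}$ into fresh variables disjoint from $\vec{y}$, so $D$ has variable set $\vec{x}\cup\vec{y}$ and
\[
\map_D(s) = \max_{\zeta_{\vec{x}},\,\zeta_{\vec{y}}} \map_D(s,\zeta_{\vec{x}},\zeta_{\vec{y}}).
\]
The key observation is that for each fixed $\zeta_{\vec{x}}$ assigning the former action parameters to domain objects $\vec{o}$, evaluating a single path in $D$ with $\vec{x}$ bound by $\zeta_{\vec{x}}$ is identical to evaluating the same path in $Q_{V_n}^{A(\vec{x})}$ with the action parameters read as the constants $\vec{o}$. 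Thus $\map_D(s,\zeta_{\vec{x}},\zeta_{\vec{y}}) = \map_{Q_{V_n}^{A(\vec{o})}}(s,\zeta_{\vec{y}})$, and splitting the joint maximum gives
\[
\map_D(s) = \max_{\vec{o}}\;\max_{\zeta_{\vec{y}}}\map_{Q_{V_n}^{A(\vec{o})}}(s,\zeta_{\vec{y}}) = \max_{\vec{o}}\,\map_{Q_{V_n}^{A(\vec{o})}}(s).
\]

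Finally I would invoke the previous lemma: for each concrete binding $\vec{o}$, the quantity $\map_{Q_{V_n}^{A(\vec{o})}}(s)$ equals the expected value of executing the concrete action instance $A(\vec{o})$ in $s$ and then receiving the terminal value $V_n$. Maximizing over all $\vec{o}$ therefore yields precisely the maximum over expected values achievable by executing an instance of $A(\vec{x})$ in $s$, which is the assertion, and this value is unchanged by the subsequent reductions.

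I expect the main obstacle to be the middle step: justifying cleanly that substituting a variable under a valuation coincides with instantiating an action parameter as a constant, and that the joint maximum over $\zeta_{\vec{x}},\zeta_{\vec{y}}$ factors into an outer maximum over action instances and an inner maximum that defines $\map_{Q_{V_n}^{A(\vec{o})}}(s)$. This requires the renamed variables to be genuinely fresh, so that no valuation is forced to identify an action parameter with a pre-existing variable (which would artificially restrict the set of reachable instances), and it uses that the value along any path depends on $\vec{x}$ only through the objects they denote, not through their syntactic status as constants versus variables.
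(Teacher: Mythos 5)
Your proposal is correct and takes essentially the same route as the paper, which states this lemma as an immediate consequence of its preceding discussion: renaming the action parameters to fresh variables makes the maximum aggregation semantics itself perform the maximization over action instances, and weak/strong reductions preserve $\map_B(I)$. Your write-up merely makes explicit what the paper leaves informal --- the identification of valuations of the renamed variables with concrete instantiations $A(\vec{o})$, and the factoring of the joint maximum into an outer maximum over instances and the inner maximum defining $\map_{Q_{V_n}^{A(\vec{o})}}(s)$ --- so it is a faithful, slightly more detailed rendering of the paper's own argument.
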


A potential criticism of our object maximization is
that we are essentially adding more variables to the diagram and thus
future evaluation of the diagram in any state becomes more expensive
(since more substitutions need to be considered). However,
this is only true if the diagram remains unchanged after object
maximization. In fact, as illustrated in the example given below,
these variables may be pruned from the diagram in the process of
reduction. Thus as long as the final value function is compact the
evaluation is efficient and there is no such hidden cost.

\subsection{Maximizing Over Actions} 

The maximization 
$V_{n+1} = \max_A Q_{n+1}^A$
in step (3) combines independent functions. Therefore as above
we
must first standardize apart the
different diagrams, then we can follow with 
the propositional Apply procedure
and finally follow with weak and strong reductions.
This clearly maintains correctness for any concrete instantiation of
the state space.

\subsection{Order Over Argument Types}

We can now resume the discussion of
ordering of argument types and extend it to predicate and action parameters.
As above, some structure is suggested by the operations of the
algorithm. 
Section~\ref{Sec:labelorder} already suggested that we order constants
before variables.  

Action parameters are ``special constants'' before object maximization
but they become variables during object maximization. Thus their
position should allow them to behave as variables. We 
should therefore also order constants before action parameters.

Note that predicate parameters only exist inside TVDs, and will be
replaced with domain constants or variables during regression.
Thus we only need to decide on
the relative order between predicate
parameters and action parameters.  
If we put action parameters before predicate parameters and the latter
is replaced with a constant then we get an order violation, so this
order is not useful.
On the other hand, 
if we put predicate parameters before action parameters then both
instantiations of predicate parameters are possible.
Notice that when substituting a predicate parameter 
with a variable, action parameters still need to be larger 
than the variable (as they were in the TVD).
Therefore, we also order action parameters after variables.

To summarize, 
the ordering:
constants $\prec$ variables (predicate
parameters in case of TVDs) $\prec$ action parameters,
is suggested by 
heuristic considerations for orders that maximize the
potential for reductions, and avoid the need for re-sorting diagrams.

Finally, note that if we want to maintain the diagram sorted at all
times, we need to maintain variant versions of each TVD capturing
possible ordering of replacements of predicate parameters.
Consider a TVD in Figure~\ref{Fig:labelOrdering}(a). 
If we rename predicate parameters $X$ and $Y$ to be
$x_2$ and $x_1$ respectively, and if $x_1 \prec x_2$, then the resulting
sub-FODD as shown in Figure~\ref{Fig:labelOrdering}(b)
violates the order. To solve this problem we have to
define another TVD 
corresponding to the case where the substitution of $X$
$\succ$ the substitution of $Y$,
as shown in   Figure~\ref{Fig:labelOrdering}(c).
In the case of replacing $X$ with $x_2$ and $Y$ with
$x_1$, we use the TVD in  Figure~\ref{Fig:labelOrdering}(c)
instead of the one in Figure~\ref{Fig:labelOrdering}(a).

\begin{figure}[tbhp]
\vskip 0.2in
\begin{center}
\centerline{\psfig{figure=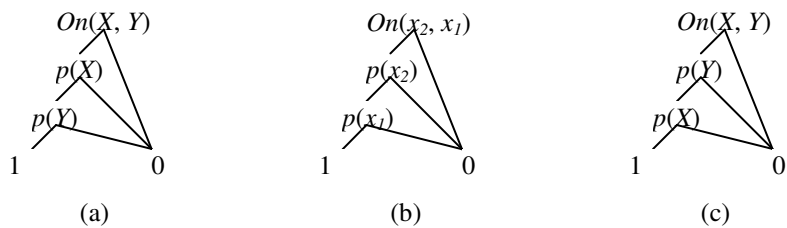}}
\caption{An example illustrating the necessity to maintain multiple TVDs.}
\label{Fig:labelOrdering}
\end{center}
\vskip -0.2in
\end{figure}

\subsection{Convergence and Complexity}

Since each step of Procedure~\ref{Proc:relgreedy}  is correct we have the following theorem:
\begin{theorem}
\label{Thm:VI}
Consider any concrete instantiation of a RMDP. Let
$V_n$ be 
the value function 
for the corresponding MDP 
when there are $n$ steps to go.
Then the value of $V_{n+1}$ calculated by 
Procedure~\ref{Proc:relgreedy}
correctly captures the value function when there 
are $n+1$ steps to go.
That is, for any state $s$, $\map_{V_{n+1}}(s)$ is 
the maximum expected value achievable in $s$ in $n+1$ steps.
\end{theorem}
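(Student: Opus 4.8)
The plan is to prove the theorem by composing the per-step correctness results already established for the three steps of Procedure~\ref{Proc:relgreedy}, together with the standard fact that a single Bellman backup sends the $n$-step-to-go value function to the $(n+1)$-step-to-go value function. The theorem is exactly the one-backup correctness claim that serves as the inductive step of value iteration: we are given that $V_n$ is the value function with $n$ steps to go, and must show that one pass of Rel-greedy produces the value function with $n+1$ steps to go. Because we have fixed a concrete instantiation of the RMDP, the underlying object is an ordinary MDP, so the target is precisely the Bellman update of Equation~\ref{Eq:VI} applied to $V_n$.

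First I would recall from standard MDP theory \cite{Puterman1994} that when $V_n(s)$ is the value with $n$ steps to go, the quantity $\max_{a\in A}[r(s)+\gamma\sum_{s'}Pr(s'\mid s,a)V_n(s')]$ is the value with $n+1$ steps to go. It therefore suffices to show that $\map_{V_{n+1}}(s)$, as computed by Rel-greedy, equals this expression for every state $s$ of the instantiation, where the maximization over the ground action space is decomposed as an outer maximization over action schemas $A(\vec{x})$ and an inner maximization over their ground instances $\vec{x}$.

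Next I would chain the three step-correctness lemmas in order. Step (1) is justified by the lemma stating that $\map_{Q_{V_n}^{A(\vec{x})}}(s)$ equals the expected value of executing the ground action $A(\vec{x})$ in $s$ and then collecting terminal value $V_n$; that lemma in turn rests on Lemma~\ref{Lem:DetRegressA} and Lemma~\ref{Lem:DetRegress} (correctness of block-replacement regression) and on the action-alternative decomposition that licenses replacing the infinite sum of Equation~\ref{Eq:VI} by the finite sum of Equation~\ref{Eq:Qfunction}. Step (2) is justified by the object-maximization lemma, giving $\map_{Q_{V_n}^{A}}(s)=\max_{\vec{x}}\map_{Q_{V_n}^{A(\vec{x})}}(s)$ through the max aggregation semantics of the map. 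Step (3) applies $\max_A$ over schemas by correctness of Apply with the $\max$ operation. Composing the three yields $\map_{V_{n+1}}(s)=\max_A\max_{\vec{x}}[r(s)+\gamma\sum_{s'}Pr(s'\mid s,A(\vec{x}))V_n(s')]$, which equals $\max_{a\in A}[r(s)+\gamma\sum_{s'}Pr(s'\mid s,a)V_n(s')]$ and hence the $(n+1)$-step value function.

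The main obstacle is not any single calculation but the semantic bookkeeping that makes the chaining legitimate. One must confirm that the max aggregation over valuations genuinely realizes object maximization over action instances, so that renaming the fixed action parameters to fresh variables and treating them existentially is exactly the inner $\max_{\vec{x}}$; and that standardizing apart the action alternatives and the reward before applying Apply leaves the per-state map unchanged. Both points are secured by the earlier lemmas and by the discussion of standardizing apart, so the proof reduces to invoking them in sequence. The only care required is to check that each lemma's hypotheses (a fixed concrete instantiation and correctness of Apply for every fixed valuation) hold at the step where it is applied.
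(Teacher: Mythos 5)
Your proposal is correct and follows essentially the same route as the paper: the paper's entire proof is the single remark that ``since each step of Procedure~\ref{Proc:relgreedy} is correct we have the theorem,'' relying on exactly the lemmas you cite (regression correctness via Lemmas~\ref{Lem:DetRegressA} and~\ref{Lem:DetRegress}, the $Q$-function lemma, the object-maximization lemma, and correctness of Apply with standardization apart), combined with the standard Bellman-backup fact for the fixed concrete instantiation. Your write-up simply makes explicit the chaining and bookkeeping that the paper leaves implicit.
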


Note that for RMDPs some problems require an infinite 
number of state partitions. Thus we cannot converge to $V^*$ in a finite
number of steps. However, since our algorithm implements 
VI exactly, standard results about approximating optimal
value functions and policies still hold. In particular the following
standard result~\cite{Puterman1994} holds for our algorithm, and
our stopping criterion guarantees approximating optimal value functions
and policies. 

\begin{theorem}
\label{Thm:VIconvergence}
Let $V^*$ be the optimal value function and let $V_k$ be the 
value function calculated by the relational VI algorithm.\\
(1) If $r(s)\leq M$ for all $s$ then $\|V_n-V^*\|\leq \varepsilon$
for $n\geq \frac{log(\frac{2M}{\varepsilon(1-\gamma)})}{log \frac{1}{\gamma}}$.\\
(2) If $\|V_{n+1}-V_n\|\leq \frac{\varepsilon(1-\gamma)}{2\gamma}$
then $\|V_{n+1}-V^*\|\leq \varepsilon$. 
\end{theorem}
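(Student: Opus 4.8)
The plan is to reduce the statement to the classical convergence analysis of value iteration for a fixed MDP, using the correctness result already established in Theorem~\ref{Thm:VI}. Here $\|\cdot\|$ is the sup-norm over states, and because the bounds I will derive depend only on $\gamma$, $M$, and $\varepsilon$ and not on the particular instantiation, a bound that holds for every concrete instantiation holds uniformly across the whole RMDP family. It therefore suffices to fix an arbitrary concrete instantiation and argue there.

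First I would invoke Theorem~\ref{Thm:VI}: on any fixed instantiation the sequence $V_0, V_1, \ldots$ produced by Procedure~\ref{Proc:relgreedy} coincides exactly with the sequence produced by ordinary tabular value iteration (Equation~\ref{Eq:VI}) on that concrete MDP, since each $V_{n+1}$ captures the true $(n{+}1)$-step value function. Thus $V_{n+1} = T V_n$, where $T$ is the Bellman backup $(TV)(s) = \max_{a}[r(s)+\gamma\sum_{s'} Pr(s'|s,a)V(s')]$. The only fact I need about $T$ is the standard one that it is a $\gamma$-contraction in the sup-norm, $\|TV - TV'\| \le \gamma\|V - V'\|$, with unique fixed point $V^*$; this is classical~\cite{Puterman1994} and holds because both the $\max$ and the averaging by $Pr(\cdot\,|\,s,a)$ are nonexpansive.

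For part (1) I would iterate the contraction from the fixed point: $\|V_n - V^*\| = \|TV_{n-1} - TV^*\| \le \gamma\|V_{n-1}-V^*\| \le \cdots \le \gamma^n\|V_0 - V^*\|$. Since $r(s)\le M$ gives $\|V^*\|\le M/(1-\gamma)$ and $V_0 = R$ gives $\|V_0\|\le M \le M/(1-\gamma)$, we have $\|V_0 - V^*\| \le 2M/(1-\gamma)$, so $\|V_n - V^*\|\le \gamma^n\, 2M/(1-\gamma)$. Requiring the right-hand side to be at most $\varepsilon$ and solving for $n$ (taking logarithms, noting $\log(1/\gamma)>0$) yields exactly the stated threshold $n \ge \log(2M/(\varepsilon(1-\gamma)))/\log(1/\gamma)$.

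For part (2) I would use the telescoping argument. For any $m>n$, $\|V_{n+1}-V_m\| \le \sum_{k=n+1}^{m-1}\|V_{k+1}-V_k\|$, and the contraction gives $\|V_{k+1}-V_k\| \le \gamma^{k-n}\|V_{n+1}-V_n\|$, so the sum is bounded by $\frac{\gamma}{1-\gamma}\|V_{n+1}-V_n\|$. Letting $m\to\infty$ so that $V_m\to V^*$ gives $\|V_{n+1}-V^*\| \le \frac{\gamma}{1-\gamma}\|V_{n+1}-V_n\|$; substituting the hypothesis $\|V_{n+1}-V_n\|\le \varepsilon(1-\gamma)/(2\gamma)$ yields $\|V_{n+1}-V^*\|\le \varepsilon/2 \le \varepsilon$. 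The only genuinely delicate point is the first one, namely ensuring that the abstract FODD computation really is one concrete VI step so that the contraction machinery applies; but this is precisely what Theorem~\ref{Thm:VI} supplies, so the remainder is routine and I expect no serious obstacle beyond matching the stated constants.
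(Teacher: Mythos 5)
Your proposal is correct and follows exactly the paper's route: the paper proves this theorem by observing (via Theorem~\ref{Thm:VI}) that the FODD-based procedure implements value iteration exactly on every concrete instantiation, and then appealing to the standard contraction-based convergence and stopping-criterion results of \citeA{Puterman1994}. The only difference is that you spell out the classical contraction and telescoping arguments that the paper leaves to the citation, which is a harmless (and correct) elaboration.
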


While the algorithm maintains compact diagrams, reduction of diagrams
is not guaranteed for all domains. Therefore we can only
provide trivial upper bounds in terms of worst case time complexity.
Notice first 
that every time we use the Apply procedure the size of the output
diagram may be as large as the product of the size of its inputs.
We must also consider the size of the FODD giving
the regressed value function.
While Block replacement is $O(N)$ where $N$ is the size of 
the current value function, it is not sorted and sorting
may require both exponential time and space in the worst case.
For example, \citeA{Bryant1986} illustrates how ordering
may affect the size of a diagram. For a function of $2n$
 arguments, the function 
$x_1\cdot x_2 + x_3\cdot x_4 + \cdots +x_{2n-1}\cdot x_{2n}$ 
only requires a diagram
of $2n+2$ nodes, while the function 
$x_1\cdot x_{n+1} + x_2\cdot x_{n+2}+ \cdots +x_n\cdot x_{2n}$
requires $2^{n+1}$ nodes.
Notice that these two functions only differ by a permutation
of their arguments.
Now if $x_1\cdot x_2 + x_3\cdot x_4 + \cdots +x_{2n-1}\cdot x_{2n}$
is the result of block replacement then clearly sorting
requires exponential time and space. The same is 
true for our block combination procedure or any other method of
calculating the result, simply because the output
is of exponential size. In such a case heuristics that change variable
ordering, as in propositional ADDs \cite{Bryant1992}, would probably
be very useful.

Assuming TVDs, reward function, and probabilities
all have size $\leq C$,  each action has 
$\leq M$ action alternatives, 
the current value function $V_n$ has $N$ nodes, and
worst case space  expansion for regression and all Apply operations,
the overall size of the result and the time complexity for  
one iteration are  $O(C^{M^2(N+1)})$.
However note that this is the worst case analysis and does not take reductions
into account. 
While our method is not guaranteed to always work efficiently,
the alternative of grounding the MDP will have an unmanageable number
of states to deal with, so despite the high worst case
complexity our method provides a potential improvement.
As the next example illustrates, reductions can 
substantially decrease diagram size and therefore
save considerable time in computation.

\subsection{A Comprehensive Example of Value Iteration }
\label{Sec:VIexample}

Figure~\ref{Fig:regression}
traces steps in the application of value iteration to the
logistics domain.
The TVDs, action choice probabilities, and reward function
for this domain
are given in Figure~\ref{Fig:logisticsFODD}.
To simplify the presentation,
we continue using the predicate ordering 
$Bin \prec $  ``=''  $\prec On \prec Tin \prec rain$
introduced earlier.\footnote{The 
details do not change substantially if we use
the order suggested in Section~\ref{Sec:labelorder} (where equality is first).
}

\begin{figure}[tbhp]
\centering
\centerline{\psfig{figure=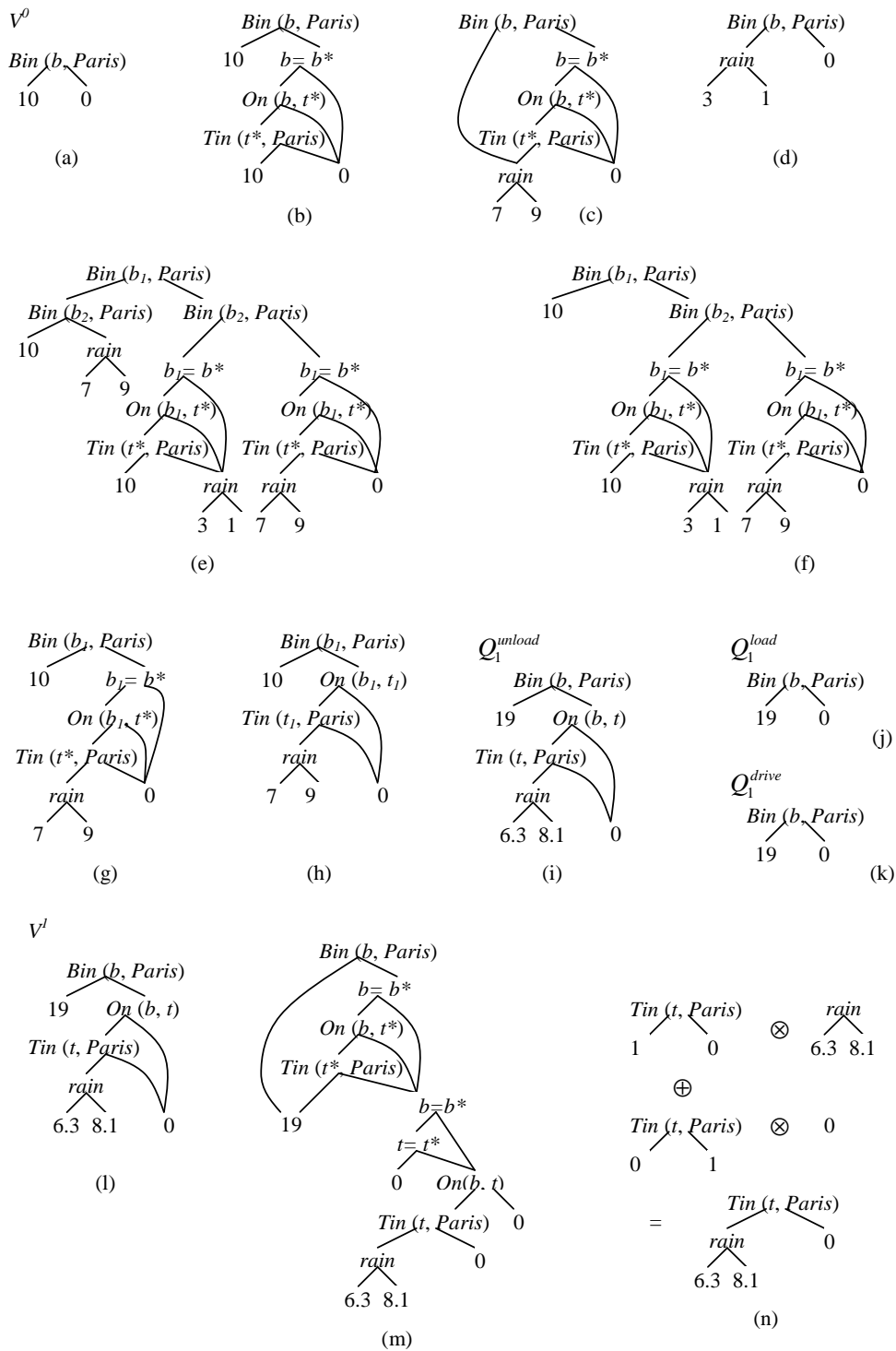,width=5.6in}}
\caption{An example of value iteration in the Logistics Domain.}
\label{Fig:regression}
\end{figure}

Given $V_0=R$ as shown in Figure~\ref{Fig:regression}(a),
{\bf Figure~\ref{Fig:regression}(b)} gives the result of regression 
of $V_0$ through $unloadS(b^*, t^*)$ by block replacement,
denoted as $Regr(V_0, unloadS(b^*,t^*))$.

{\bf Figure~\ref{Fig:regression}(c)} gives the result of multiplying
 $Regr(V_0, unloadS(b^*,t^*))$
with the choice probability of $unloadS$ $Pr(unloadS(b^*,t^*))$.

{\bf Figure~\ref{Fig:regression}(d)} gives the result of
$Pr(unloadF(b^*,t^*))\otimes Regr(V_0, unloadF(b^*,t^*))$.
Notice that this diagram is simpler since $unloadF$ does not
change the state and the TVDs for it are trivial.

{\bf Figure~\ref{Fig:regression}(e)} gives the   
unreduced result of adding two outcomes for $unload(b^*, t^*)$,
i.e., the result of adding 
$[Pr(unloadS(b^*,t^*))\otimes Regr(V_0, unloadS(b^*,t^*))]$ to
$[Pr(unloadF(b^*,t^*))$ $\otimes Regr(V_0, unloadF(b^*,t^*))].$
Note that we first  standardize apart diagrams for $unloadS(b^*,t^*)$
 and $unloadF(b^*,t^*)$ by respectively
renaming $b$ as $b_1$ and $b_2$.
 Action parameters $b^*$ and $t^*$ at this stage are
considered as constants and we do not change them.
Also note that the recursive part of Apply (addition $\oplus$) has performed some reductions, i.e.,
removing the node $rain$ when both of its children lead to value $10$.

In Figure~\ref{Fig:regression}(e),
we can apply R6 to node $Bin(b_2, Paris)$ in the left branch.
The conditions \\
 P7.1: $[\exists b_1,Bin(b_1,Paris)]\rightarrow [\exists b_1,b_2, Bin(b_1,Paris)\wedge Bin(b_2,Paris)]$,\\
 V7.1: $min(Bin(b_2,Paris)\downt)=10 \geq max(Bin(b_2,Paris)\downf)=9$, \\
 V7.2: $Bin(b_2, Paris)\downt$ is a constant \\
hold.
According to Lemma~\ref{Lemma:R7replace1} and Lemma~\ref{Lemma:R7drop1}  we can drop node $Bin(b_2,Paris)$ and connect its parent $Bin(b_1, Paris)$
to its true branch.
{\bf Figure~\ref{Fig:regression}(f)} gives the result after this reduction.

Next, consider the \true child of $Bin(b_2,Paris)$ and
the \true child of the root.
The conditions \\
P7.1: $[\exists b_1,b_2, \neg Bin(b_1, Paris)\wedge Bin(b_2,Paris)]
\rightarrow [\exists b_1, Bin(b_1,Paris)]$,\\
V7.1: $min(Bin(b_1,Paris)\downt)=10 \geq max(Bin(b_2,Paris)\downt)=10$, \\
V7.2: $min(Bin(b_1,Paris)\downt)=10 \geq max(Bin(b_2,Paris)\downf)=9 $ \\
hold.
According to Lemma~\ref{Lemma:R7replace1} and Lemma~\ref{Lemma:R7drop1}, we can drop the node $Bin(b_2, Paris)$ and connect
its parent $Bin(b_1, Paris)$ to  $Bin(b_2, Paris)\downf$.
{\bf Figure~\ref{Fig:regression}(g)}  gives the result after this reduction and now
we get a fully reduced diagram.
This is $T_{V_0}^{unload(b^*,t^*)}$.

In the next step we perform object maximization to maximize over action
parameters $b^*$ and $t^*$ and get the best instance of the action
$unload$. Note that $b^*$ and $t^*$  have now become variables, and we
can perform one more reduction: we can drop the equality on the right branch by 
R9.
{\bf Figure~\ref{Fig:regression}(h)} gives the result after object maximization,
i.e., $\mbox{obj-max} (T_{V_0}^{unload(b^*,t^*)})$.
Note that we have renamed the action parameters to avoid the repetition
between iterations.

{\bf Figure~\ref{Fig:regression}(i)} gives the
reduced result of multiplying 
Figure~\ref{Fig:regression}(h), $\mbox{obj-max} (T_{V_0}^{unload(b^*,t^*)})$, 
by $\gamma=0.9$, and adding  the reward function.
This result is $Q^{unload}_1$.

We can calculate $Q^{load}_1$ and $Q^{drive}_1$
in the same way and results are shown 
in {\bf Figure~\ref{Fig:regression}(j)} and 
{\bf Figure~\ref{Fig:regression}(k)} respectively.
For $drive$ the TVDs are trivial and the calculation is relatively 
simple. For $load$, the potential loading of 
a box already in Paris is dropped from the diagram by the reduction operators
in the process of object maximization.

{\bf Figure~\ref{Fig:regression}(l)} gives $V_1$, the result
after maximizing over $Q^{unload}_1$, $Q^{load}_1$ and $Q^{drive}_1$.
Here again we standardized apart the diagrams,
maximized over them, and then reduced the result.   
In this case the diagram for $unload$ dominates
  the other actions. Therefore $Q^{unload}_1$ becomes $V_1$, the
value function after the first iteration.

Now we can 
start the second iteration, i.e., computing $V_2$ from $V_1$.
{\bf Figure~\ref{Fig:regression}(m)}
gives the result of
block replacement in regression of $V^1$ through action alternative $unloadS(b^*, t^*)$.
Note that we have sorted the TVD for $on(B,T)$ so that it obeys the
ordering we have chosen. 
However, the diagram resulting from block replacement
is not sorted.

To address this we  use the block combination algorithm 
to combine blocks bottom up. 
{\bf Figure~\ref{Fig:regression}(n)} illustrates how we combine
blocks $Tin(t,Paris)$, which is a TVD, and its two children, which have 
been processed and are general FODDs.
After we combine $Tin(t,Paris)$ and its two children, $On(b,t)\downt$ has been processed.
Since  $On(b,t)\downf=0$,
now we can combine
 $On(b,t)$ and its two children in the next step of block combination.
Continuing this process we get a sorted representation of 
$Regr(V_1, unloadS(b^*,t^*))$.

\subsection{Extracting Optimal Policies}
 There is more than one way to represent policies with FODDs. 
Here we simply note that a policy can be represented implicitly by a set of regressed value functions.
After the value iteration terminates, we can 
perform one more iteration and compute the set of $Q$-functions using 
Equation~\ref{Eq:Qfunction}.

Then, given a state $s$, we can compute the maximizing action as follows:
\begin{enumerate}
\item
For each $Q$-function $Q^{A(\vec{x})}$, 
compute $\map_{Q^{A(\vec{x})}}(s)$, where $\vec{x}$ are considered as variables. 
\item
For the maximum map obtained, record the action name and action parameters (from the 
valuation) to obtain the maximizing action.
\end{enumerate}

This clearly implements the policy represented by the value function.
An alternative approach that represents the policy explicitly
was developed in the context of a policy iteration algorithm
\cite{WangKh2007}.

\section{Discussion}

ADDs have been used successfully to solve propositional factored MDPs.
Our work gives one proposal of lifting these ideas to RMDPs. While
the general steps are similar, the technical details are significantly
more involved than the propositional case.  
Our decision diagram representation combines the strong points
of the SDP and ReBel approaches to RMDP. 
On the one hand we get simple regression
algorithms directly manipulating the diagrams. On the other hand we
get object maximization for free as in ReBel. 
We also get space saving
since 
different state partitions can share structure in the diagrams. 
A possible disadvantage compared to ReBel is that the reasoning required 
for reduction operators might be complex.

In terms of expressiveness, our approach can easily capture
probabilistic STRIPS style formulations as in ReBel, 
allowing for more flexibility
since we can use FODDs to capture rewards and transitions.
For example, our representation can capture universal effects of
actions. 
On the other hand, it is more limited than SDP
since we cannot use arbitrary formulas for rewards, transitions, and
probabilistic choice. For example we cannot express universal
quantification using maximum aggregation, so these cannot be used in
reward functions or in action preconditions.
Our approach can also   
capture grid-world RL
domains with state based reward (which are propositional) in factored
form since the reward can be described as a function of location. 

By contrasting the single path semantics with the multiple
path semantics we see an interesting tension between the choice of
representation and task. The multiple path method 
does not directly support state partitions, which 
makes it awkward to specify distributions and policies 
(since values and actions must both be specified at leaves). 
However, this semantics simplifies many steps by 
easily supporting disjunction and maximization over valuations
which are crucial for
for value iteration so it is likely to lead to
significant saving in space and time.

An implementation and empirical evaluation are in progress.
The precise choice of reduction operators and their application will
be crucial to obtain an effective system,
since in general there is a tradeoff between run time needed for reductions
and   the size of resulting FODDs.
We can apply complex reduction operators to get the maximally
reduced FODDs, but it takes longer to perform the reasoning required.
This optimization is still an open issue both theoretically and
empirically. 
Additionally, our implementation can easily incorporate the idea of
approximation by combining leaves with similar values 
to control the size of FODDs
\cite{St-AubinHoBo2000}. This gives a simple way of trading
off efficiency against accuracy of the value functions.

There are many open issues 
concerning the
current representation.
Our results for FODDs give a first step toward a complete
generalization of ADDs. Crucially we 
do not yet have a semantically appropriate normal form that is important
in simplifying reasoning. While one can
define a normal form 
\cite<cf.,>[for a treatment of conjunctions]{GarrigaKhRa2007}
it is not clear if this can be calculated incrementally 
using local operations as in ADDs.
It would be interesting to investigate conditions that guarantee a
normal form for a useful set of reduction operators for FODDs.

Another possible improvement is that the representation can be modified to allow further
compression. 
For example 
we can  allow edges to
rename variables when they are traversed so as to compress
isomorphic sub-FODDs as illustrated above in Figure~\ref{Fig:union}(c). 
Another interesting possibility is a copy
operator that evaluates several copies of a predicate
(with different variables) in the same node as illustrated in  Figure~\ref{Fig:copyop}.
For such constructs to be usable one must modify the FODD and MDP algorithmic
steps to handle diagrams with the new syntactic notation.

\begin{figure}[tbhp]
\vskip 0.2in
\begin{center}
\centerline{\psfig{figure=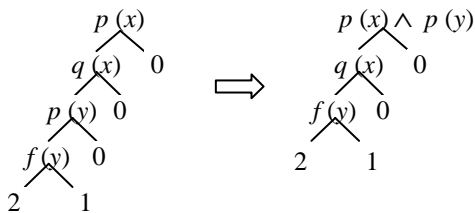}}
\caption{Example illustrating the copy operator.}
\label{Fig:copyop}
\end{center}
\vskip -0.2in
\end{figure} 

\section{Conclusion}
The paper makes two main contributions. 
First, we introduce FODDs, 
a generalization of ADDs, for relational domains that may be
useful in various applications.
We have developed calculus of FODDs and reduction operators 
to minimize their size but there are many open issues regarding 
the best choice of operators and reductions.
The second contribution is in developing 
a FODD-based value iteration algorithm for RMDPs
that has the potential for significant improvement
over previous approaches.
The algorithm performs general 
relational probabilistic reasoning without ever grounding the 
domains and it 
is proved to converge
to the abstract optimal value function
when such a solution exists.

\bibliography{FOMDP}
\bibliographystyle{theapa}

\end{document}